\documentclass{article}

\usepackage[preprint]{neurips_2026}

\usepackage[utf8]{inputenc}
\usepackage[T1]{fontenc}
\usepackage{hyperref}
\usepackage{url}
\usepackage{booktabs}
\usepackage{amsfonts}
\usepackage{amsmath}
\usepackage{amssymb}
\usepackage{amsthm}
\usepackage{mathtools}
\usepackage{nicefrac}
\usepackage[expansion=false]{microtype}
\usepackage{xcolor}
\usepackage{graphicx}
\usepackage{listings}
\usepackage{caption}
\usepackage{enumitem}
\usepackage{longtable}
\usepackage{array}
\usepackage{calc}
\usepackage{etoolbox}
\usepackage{tikz-cd}
\usepackage{fancyvrb}
\usepackage{textcomp}

\graphicspath{{figures/}{../figures/}}
\setlist[itemize]{topsep=3pt,itemsep=1pt,parsep=0pt}
\setlist[enumerate]{topsep=3pt,itemsep=1pt,parsep=0pt}

\theoremstyle{definition}
\newtheorem{definition}{Definition}[section]
\newtheorem{proposition}[definition]{Proposition}
\newtheorem{theorem}[definition]{Theorem}
\newtheorem{lemma}[definition]{Lemma}
\newtheorem{corollary}[definition]{Corollary}
\newtheorem{remark}[definition]{Remark}
\newtheorem{example}[definition]{Example}

\providecommand{\tightlist}{\setlength{\itemsep}{0pt}\setlength{\parskip}{0pt}}
\providecommand{\passthrough}[1]{#1}
\providecommand{\pandocbounded}[1]{#1}

\definecolor{pandocCommentColor}{rgb}{0.38,0.63,0.69}
\definecolor{pandocStringColor}{rgb}{0.25,0.44,0.63}
\definecolor{pandocKeywordColor}{rgb}{0.00,0.44,0.13}

\providecommand{\NormalTok}[1]{#1}

\DefineVerbatimEnvironment{Highlighting}{Verbatim}{commandchars=\\\{\}}
\newenvironment{Shaded}{}{}

\newcommand{\CEK}{\mathrm{CEK}}
\newcommand{\quoteop}{\operatorname{quote}}
\newcommand{\enc}{\operatorname{enc}}
\newcommand{\Reach}{\operatorname{Reach}}

\newcommand{\Spell}{\textsc{Spell}}
\newcommand{\eval}{\operatorname{eval}}
\newcommand{\lm}{\operatorname{lm}}

\lstdefinelanguage{spell}{
  morekeywords={quine,do,eval,fn,defn,let,if,when,prune,rethink,!llm-self,
    !call-now,!print,!extend,!peek,!peek-now,persist,future,await,await-all,
    pmap,completion-promise,send-await,spawn,!ask},
  sensitive=true,
  morecomment=[l]{;},
  morestring=[b]",
}

\title{Self-Programmed Execution for Language-Model Agents}

\author{%
  Luke J.\ O'Connor\\
  Harvard Medical School\\
  Boston, MA 02115\\
  \texttt{loconnor@hms.harvard.edu}%
}

\begin{document}

\maketitle

\begin{abstract}
	At the heart of existing language model agents is a fixed orchestrator program which is responsible for the state transition between consecutive turns. This paper introduces \emph{self-programmed execution} (SPE), an agent architecture in which the model completion is itself the orchestrator program, and the harness evaluates this program but does not impose its own orchestration policy. I formalize this idea using agentic machines: an SPE state is one from which a model completion can load any state of an embedded copy of the machine, meaning that it is subject to no fixed turn-to-turn orchestration policy.  Realizing SPE in practice is nontrivial because the same data is both model context and executable program. I therefore introduce \textsc{Spell}, a Lisp-based language in which programs can edit and re-evaluate themselves, and effectful expressions like model invocations are structured such that re-evaluating an edited program does not replay its side effects. Experiments with existing models, not trained for SPE or \textsc{Spell}, show that frontier models can operate in this regime and accomplish challenging agentic tasks. These results demonstrate how an LM can act as an agent without any fixed orchestration policy, and they raise the question of what self-orchestration strategies might be learned by a model trained for self-programmed execution. Code is available at \url{https://github.com/lukejoconnor/spell}.
\end{abstract}

\section{Introduction}
\label{sec:intro}

The harness of a language model (LM) agent acts both as an execution layer, allowing the LM to interact with an environment, and as an orchestrator program, specifying policy for what state persists across turns, what context is provided to the model, and what actions are available. As models have become more capable, the harness and all aspects of orchestration policy have become a major target of engineering effort, particularly the policy for what context is provided to the LM \citep{horthy2025twelvefactoragents}. (In this paper, I use \emph{orchestration} to describe multi-turn, not necessarily multi-agent, logic or policy.) Practitioner accounts emphasize the significance of orchestration policy \citep{anthropic2025contextengineering,openai2026harnessengineering}, and commercial frameworks have been developed for bespoke agentic workflows \citep{langchain2026harnessanatomy,hong2024metagpt}. 

All of this human effort naturally suggests automation. One recent trend is to engineer self-evolving harnesses, for example by adding an outer reflection loop; another is to enable partial self-orchestration, for example by providing tools for subagent delegation (see Section~\ref{sec:related}). This paper asks whether orchestration policy must belong to the harness at all. What if the harness were only an execution layer, and not also an orchestrator?

In \emph{self-programmed execution} (SPE), the harness evaluates a model-written program which is solely responsible for orchestration. This program might write a prompt, append to it the result of a tool call, and then make a recursive self-call. The self-call repeats the process: it inputs a prefix, invokes the LM to complete it, and evaluates the completion:
\begin{equation}
	\texttt{self-call}(\text{prefix}) \;\coloneqq\; \eval(\text{complete}(\text{prefix})).
	\label{eq:self-call}
\end{equation}

The first turn is produced by evaluating an initial program with a prompt and a self-call. Subsequently, every turn-to-turn transition is specified by a model-written self-call expression, and context passes through the turn boundary as the argument to this expression. The harness still exists as an execution layer, but the program it executes is model-written.

Existing agent architectures alternate between a step which is controlled by the model and one which is controlled by the fixed orchestration policy of the harness (Figure~\ref{fig:architectures}). The archetypal example is the ReAct loop \citep{yao2023react}. Recursive language models (RLMs) \citep{zhang2026rlm} are the closest prior architecture: they execute arbitrary model-written code within a REPL, and this code is able to spawn subagents recursively. However, after the model-written program runs, the harness issues the next main-agent turn, while maintaining the conversation history and REPL state; the transition between main-agent turns is subject to fixed orchestration policy. SPE shares with RLMs that a model-written program is evaluated by an external runtime. The difference is that this program is fully responsible for orchestration policy, not only for subagent delegation, and does not run inside a fixed agent loop.

\begin{figure}[tbp]
	\centering
	\includegraphics[width=0.9\linewidth]{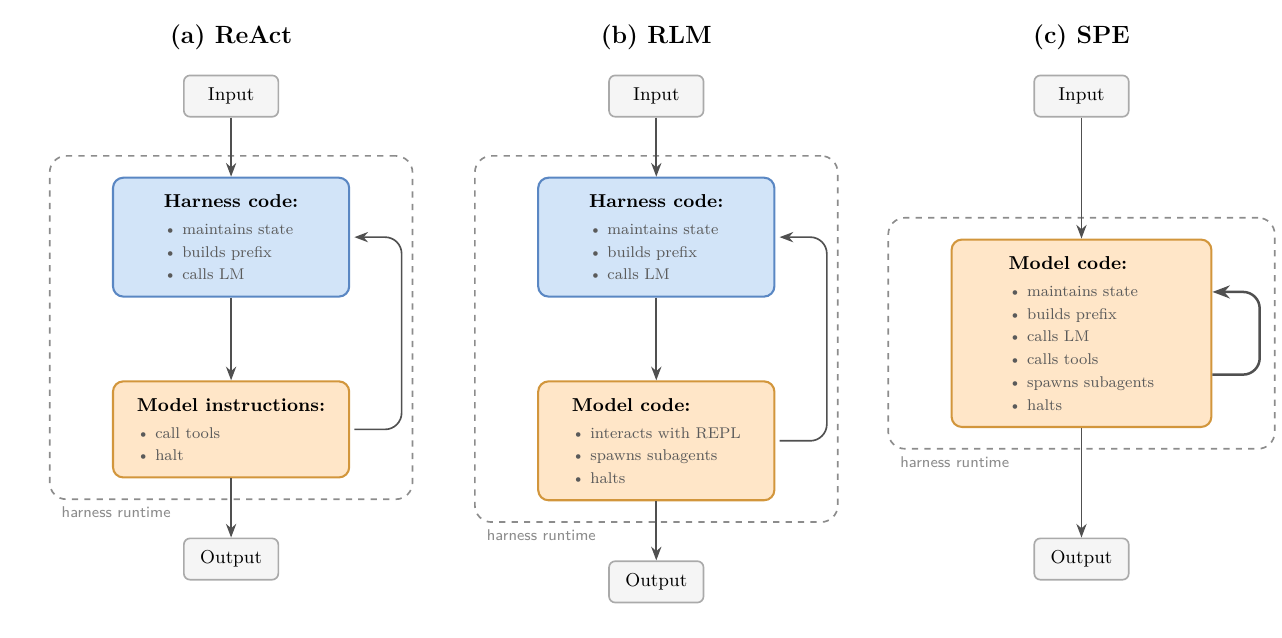}
	\caption{\textbf{Three agent architectures.} Colored boxes distinguish program logic that is implemented in the harness (blue) vs.\ written by the model (yellow). In all cases, these programs are executed by a harness runtime which is external to the model. (a) In ReAct \citep{yao2023react}, the model selects from a prescribed action space. An orchestrator program runs the agent loop, maintaining state (e.g., conversation history) across turns. (b) Recursive language models (RLMs) \citep{zhang2026rlm} execute model-written orchestration code inside a harness-managed main-agent loop, which performs handoff between the main agent's own turns. (c) In SPE, a model-written program specifies the orchestration policy, free of any outer agent loop. The model-written program is still executed by an external runtime.}
	\label{fig:architectures}
\end{figure}

This paper makes four contributions. First, it introduces a simple agent architecture, SPE, which can be seen as the limiting case of a strong recent trend. Second, it introduces formal abstractions for orchestration policy and SPE, defining an {\it SPE state} of an agentic machine and proving that the proposed architecture suffices. Third, it introduces Self-Programmed Execution Language for LMs (\Spell{}), a language for SPE which locates and addresses the challenges that arise when implementing SPE in practice. Fourth, it empirically studies how existing models, without SPE- or \Spell{}-specific training, behave when challenged to solve hard agentic tasks using \Spell{}.

\section{Formalizing self-programmed execution}
\label{sec:theory}

What is meant by ``fixed orchestration policy'', what does it mean to ablate it, and why does doing so lead to the specific agent architecture---``evaluate the model completion as a program''---which is termed SPE? This section introduces formal abstractions. Its main contribution is that it defines an ``SPE state'' of an agentic machine, without specifying any particular procedure. The main theorem shows that, in a CEK-style evaluator with model calls and $\eval$, the seed program \texttt{let } $y = \lm\, q$ \texttt{ in } $\eval(y)$ reaches such a state. Proofs and additional results are provided in Appendix~\ref{app:theory}.

\begin{definition}[Agentic machine]
	\label{def:agentic-machine}
	Fix a prompt space $P$ and completion space $C$. An \emph{agentic machine} over $(P,C)$ is a tuple $X=(S,p,h)$, where $S$ is a state space, $p\colon S\to P$ is a prompt function mapping states to prompts, and $h\colon S\times C\to S\cup\{\mathbf{1},\uparrow\}$ is a harness function mapping a state/completion pair to the next state, either an element of $S$, a special halting state $\mathbf{1}$, or a divergence state $\uparrow$. The machine is paired with a language model which maps prompts in $P$ to completions in $C$, possibly non-deterministically. \end{definition}

An agentic machine makes exactly one LM call per state transition, and this is the appropriate level of granularity because it mirrors what the LM observes. An embedding of one machine within another is an injection which preserves model-observable prompts, such that the model cannot distinguish a state from its embedding (formally, see Theorem~\ref{thm:embedding-invisible}), and it will thus have equivalent behavior.

\begin{definition}[Embedding]
	\label{def:embedding}
	An \emph{embedding} of $X'=(S',p',h')$ into $X=(S,p,h)$ is an injective map $e\colon S'\to S$, extended by $e(\mathbf{1})=\mathbf{1}$ and $e(\uparrow)=\uparrow$, such that $p(e(x'))=p'(x')$ and
	\[
		h(e(x'),c)=e(h'(x',c))
	\]
	for every $x'\in S'$ and completion $c\in C$.
\end{definition}

Intuitively, an SPE state is one where, by choosing the completion, the model can select the next state rather than merely choose among prescribed actions. Formally, it is defined by a self-embedding which is completion-generated:
\begin{definition}[SPE state]
	\label{def:spe-state}
	For a state $x$ of an agentic machine $X$, $(X,x)$ \emph{completion-generates} $X'$ if there exists an embedding $e$ of $X'$ into $X$ such that for all $y\in \operatorname{im}(e)$, some completion $c$ generates the transition from $x$ to $y$: $h(x,c)=y$. An \emph{SPE state} of $X$ is a state $x_0$ such that $(X,x_0)$ completion-generates $X$.
\end{definition}

\noindent These abstractions map cleanly onto the semantics of ``orchestration'' and ``fixed orchestration policy.'' For a state $x$ of $X$, the map $c\mapsto h(x,c)$ is the orchestration policy exposed at that state: after the model emits completion $c$, this policy produces the next state. In an ordinary agent loop, the image of this function is restricted, with states that cannot be reached by any model completion; we can say that the policy is fixed. In an SPE state, by contrast, the model can access any successor state up to an embedding by emitting the appropriate completion. Thus ``no fixed orchestration policy'' does not mean that the external runtime is absent or that $h$ disappears. It means that the model chooses the successor state freely.

It remains to show that this definition is satisfied by the architecture this paper terms ``SPE.'' This architecture can be realized in the following agentic machine:

\begin{definition}[Agentic evaluator]\label{def:agentic-evaluator}
	An \emph{agentic evaluator} is an agentic machine which wraps a standard evaluator, here the CEK machine, around LM calls. The states of the agentic evaluator are the subset of evaluator states at which the next step of computation is to make an LM call via the term $(\lm\, v)$ for some value $v$; its harness function replaces that term with the sampled completion and continues ordinary evaluation until the next boundary state.
\end{definition}

\noindent The procedure ``prompt the model and evaluate its completion'' is encoded in the CEK agentic evaluator by the following program. We write $x \xleftarrow{\partial} E$ to mean that $x$ is the first boundary state reached by evaluating program $E$:
\begin{equation}
	x^{*} \;\xleftarrow{\partial}\; \texttt{let } y = \lm\, q \texttt{ in } \eval(y).
	\label{eq:spe-seed}
\end{equation}

\begin{theorem}[SPE]
	\label{thm:spe}
	The state $x^*$ is an SPE state of the CEK agentic evaluator.
\end{theorem}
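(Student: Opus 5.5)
The plan is to construct an explicit self-embedding $e\colon S\to S$ of the CEK agentic evaluator and then check the three requirements of Definition~\ref{def:spe-state}: $e$ is injective, $e$ satisfies the two conditions of Definition~\ref{def:embedding}, and every state in $\operatorname{im}(e)$ is $h(x^*,c)$ for some completion $c$.

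First I will record the shape of $x^*$. Its control is $(\lm\, q)$ and its environment is the top-level one. Its continuation is a single let-frame $\langle\texttt{let } y=[\cdot]\texttt{ in }\eval(y),\,\rho_0\rangle$ on top of the halt continuation. By the definition of the harness, $h(x^*,c)$ is obtained in three moves: plug $c$ into the hole, bind $y\mapsto c$, and evaluate $\eval(y)$. Because $\eval(y)$ sits in tail position of the let body, the CEK machine pops the let-frame before running $c$. So $h(x^*,c)$ is the first boundary state reached by running $c$ as a top-level program from the halt continuation, or $\mathbf{1}$ or $\uparrow$ if no boundary state is reached. That run may leave behind at most an inert binding of $y$.

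The core construction is a readback $\quoteop$ from CEK states to programs. I will define it by mutual recursion.
\begin{itemize}
\item A value or closure $\langle\lambda z.M,\rho\rangle$ is read back as a term that rebuilds it: $\lambda z.M$ wrapped in $\texttt{let}$-bindings for the quoted free variables of $M$ taken from $\rho$.
\item Each continuation frame (argument frame, function frame, let-frame, $\eval$-frame) is read back as the corresponding evaluation-context layer, with its saved environment quoted in the same way.
\item A whole state $(C,\rho,K)$ is read back as the context $\quoteop(K)$ with $\quoteop(\rho)[C]$ plugged into its hole.
\end{itemize}
I then set $e(x)$ to be the first boundary state reached by running $\quoteop(x)$. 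By the first step, $e(x)=h(x^*,\quoteop(x))$, so completion-generation holds by construction. The only caveat is that quoted programs must be expressible as completions in $C$, which I assume, since $\eval$ consumes completions.

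Next comes a readback lemma: $e(x)$ agrees with $x$ up to environment representatives. Concretely, running $\quoteop(x)$ pushes exactly the frames of $\quoteop(K)$ and reaches a state with control $C$ and continuation "equal to $K$ modulo environments". Here an environment is allowed to be replaced by an extensionally equal rebuilt one, possibly with extra unused bindings such as $y$.

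From this lemma I get the remaining properties.
\begin{itemize}
\item Prompt preservation: $p(e(x))=p(x)$, because the prompt value $v$ in $(\lm\,v)$ is quoted and rebuilt to the same value.
\item Injectivity: $x$ can be recovered from $e(x)$ by erasing the rebuilding bindings. To make this rigorous I will tag the rebuilt bindings with reserved names, or else work modulo a canonical environment representation.
\item The commuting square $h(e(x),c)=e(h(x,c))$: I will prove that the relation "$x\sim e(x)$" is a bisimulation for single CEK steps. Lookups and applications behave identically under extensionally equal environments. Since $h$ is the iteration of CEK steps up to the next boundary, the two runs reach related boundary states, or both halt, or both diverge. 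It then remains to show that the state reached from $x$ and its own readback $e(h(x,c))$ coincide, not merely that they are related.
\end{itemize}

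I expect this last point to be the main obstacle. Syntactic equality can fail because rebuilt environments differ from the originals. My first attempt is to choose $\quoteop$ to be idempotent on its image, meaning that reading back an already rebuilt environment reproduces it exactly. Then I will show that $h$ applied to a state in $\operatorname{im}(e)$ lands in $\operatorname{im}(e)$ with canonical representatives. If that fails, I will fall back on quotienting states by environment equivalence, which the model cannot observe in any case.
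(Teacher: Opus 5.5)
\textbf{Verdict: genuine gap.} The commuting square is never established, and the idea that makes the theorem easy is missing.

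\textbf{Why the commuting step fails.} Definition~\ref{def:embedding} asks for an injective self-map of $X_{\CEK}$ itself satisfying the literal equation $h(e(x),c)=e(h(x,c))$ between CEK states. Neither of your fallbacks delivers this.
\begin{itemize}
\item Quotienting by environment equivalence proves a statement about a different machine, since the quotient map is not injective. That the model cannot observe the difference does not help: LM-indistinguishability is strictly weaker than embedding (Example~\ref{ex:indistinguishable-no-embedding}).
\item The idempotence plan collapses on itself. If $e$ is injective and $e(e(x))=e(x)$ (``reading back a rebuilt state reproduces it exactly''), then $e(x)=x$ for every $x$. So the plan can only succeed by showing that every state is \emph{exactly} the first boundary reached by its own readback. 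Your bisimulation-up-to-environments lemma does not give this.
\item On arbitrary boundary states that claim is false. A continuation frame may mention a variable its saved environment does not bind. Such a state arises from no closed term, and its readback is an open term, so it is not even a completion in $Q$.
\end{itemize}

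\textbf{The missing idea.} The formal state space of the CEK agentic evaluator is not all boundary states. The main-text Definition~\ref{def:agentic-evaluator} suppresses this, but Definition~\ref{def:cek-agentic-evaluator} restricts the states to the set $B$ of \emph{source-addressable} ones: each $\beta\in B$ satisfies $\beta\xleftarrow{\partial}E_\beta$ for some closed term $E_\beta$. With this, your own first step finishes the proof.
\begin{itemize}
\item Take $c_\beta=\ulcorner E_\beta\urcorner\in Q\subseteq C$.
\item Resuming $x^*$ with $c_\beta$ pops the let-frame and evaluates $\eval(y)$, with $y$ bound to $c_\beta$, in front of the halt continuation.
\item By the assumed semantics of \texttt{eval}, this continues as evaluation of the closed term $E_\beta$. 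That same semantics is what rules out the residual binding of $y$ you worried about. So the first boundary reached is $\beta$ itself.
\end{itemize}
Hence $\Reach_{X_{\CEK}}(x^*)=B$, and the identity on $B$ is the required self-embedding; this is the paper's proof. Your readback, bisimulation, and injectivity machinery is unnecessary. Source-addressability is exactly the exact-readback property your idempotence plan was trying to manufacture.
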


At a lower level of abstraction, the program $\eval$ which evaluates the model completion could be complex. For example, it might include API transport logic and error-recovery logic. Such logic counts as orchestration policy only when it constrains what successor states of the agentic machine, at the granularity of turns, can be reached by the model.

\begin{corollary}[Universality]
	\label{cor:universality}
	Because the underlying CEK machine is Turing complete, $x^*$ completion-generates all agentic machines over the same LM interface whose prompt and harness functions are computable.
\end{corollary}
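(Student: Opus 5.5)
The plan is to derive the corollary from Theorem~\ref{thm:spe} by composing embeddings, so the real work is a single construction. Fix an agentic machine $X'=(S',p',h')$ over $(P,C)$ whose prompt and harness functions are computable. I will build an embedding $f\colon S'\to S_{\CEK}$ into the CEK agentic evaluator. Theorem~\ref{thm:spe} gives an embedding $e$ of the CEK agentic evaluator into itself such that every state in $\operatorname{im}(e)$ is $h(x^*,c)$ for some completion $c$. The composite $e\circ f$ is then injective. Preservation of prompts and the harness equation both compose directly, since
\[
h(e(f(s')),c)=e(h(f(s'),c))=e(f(h'(s',c))),
\]
using $e(\mathbf{1})=\mathbf{1}$ and $e(\uparrow)=\uparrow$. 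Finally, $\operatorname{im}(e\circ f)\subseteq\operatorname{im}(e)$, so every state in the image is reachable from $x^*$ in one completion. Hence $(X,x^*)$ completion-generates $X'$.

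To construct $f$, I use Turing completeness: there are CEK terms $\mathtt{P}$ and $\mathtt{H}$ computing $p'$ and $h'$ on encodings. Here $\ulcorner s'\urcorner$ is an injective encoding of $S'$ as CEK values; prompts and completions are identified with the evaluator's string values; and $\mathtt{H}$ returns a tagged value, namely a state $t'$, halt, or diverge. I then define a tail-recursive driver
\[
\mathtt{drive}(s) \;=\; \texttt{let } c=\lm(\mathtt{P}\,s)\texttt{ in case } \mathtt{H}(s,c)\texttt{ of } \mathrm{state}\,t\Rightarrow \mathtt{drive}(t)\mid \mathrm{halt}\Rightarrow \mathrm{unit}\mid \mathrm{diverge}\Rightarrow \Omega,
\]
where $\Omega$ is a fixed nonterminating term with no LM calls. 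I set $f(s')$ to be the first boundary state reached by evaluating the closed term $\mathtt{drive}(\ulcorner s'\urcorner)$ with empty environment and the initial continuation. Since $\mathtt{P}$ terminates with no LM calls, that boundary state is $(\lm\,v)$ with $v=p'(s')$, so prompts are preserved. Injectivity follows because the environment at that point still binds $s$ to $\ulcorner s'\urcorner$, and the encoding is injective.

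It remains to check $h(f(s'),c)=f(h'(s',c))$. Substituting $c$ at the boundary, CEK evaluation computes $\mathtt{H}$, which terminates with no LM calls, and then branches. The halt branch returns a value to the initial continuation, giving $\mathbf{1}$. The diverge branch runs $\Omega$ forever without reaching a boundary, giving $\uparrow$. The state branch makes the call $\mathtt{drive}(\ulcorner t'\urcorner)$.

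The main obstacle is the state branch: the next boundary state must be \emph{literally} $f(t')$, with identical control, environment and continuation, not merely a behaviourally equivalent state. I will handle this by relying on the tail position of the recursive call. In the CEK machine, a call in tail position of the \texttt{let}/\texttt{case} body is evaluated with the continuation that was current when $\mathtt{drive}$ was entered, which is the initial continuation. Its environment is the closure environment of $\mathtt{drive}$ extended with $s\mapsto\ulcorner t'\urcorner$.

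If closure environments make the states differ syntactically, for instance through leftover bindings, I will fix this by defining $\mathtt{drive}$ as a closed combinator, e.g.\ via a $Z$-combinator with no free variables. I will also define $f(t')$ as the state obtained after the same canonical application step. CEK determinism then makes the two runs coincide from that point on, which gives $h(f(s'),c)=f(t')$ and completes the embedding.
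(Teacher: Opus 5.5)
Your proof is correct and follows the paper's route. The tail-recursive prompt/$\lm$/step driver is the paper's $\mathsf{Loop}$ embedding of $X'$ into the CEK agentic evaluator (Proposition~\ref{prop:realizable-embeds}, with the same injectivity argument via the retained binding), and composing it with the SPE self-embedding is the downward-closure step (Proposition~\ref{prop:cg-downward}) behind Corollary~\ref{cor:universal-seed}. The differences are cosmetic: you encode $\uparrow$ as a tag followed by $\Omega$ where the paper lets $\mathsf{Step}$ itself diverge (your driver also handles that reading if $\mathtt{H}$ may diverge), and your closed-combinator argument that the next boundary is literally $f(t')$ spells out a step the paper compresses into one ``equivalently'' sentence.
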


The universality corollary means that in principle, any fixed orchestration policy (if computable) could be installed by a model-written program. However, it cuts both ways: any orchestration program written by the model can just as well be written by a human or otherwise fixed externally. It also should not be read as a performance guarantee, as it says nothing about what programs a given model will actually write. Thus, the distinguishing feature of SPE is not what orchestration policy can be expressed, and especially not what policy is actually expressed by a given model, but rather what entity is responsible for expressing it.

\section{A practical language for SPE}
\label{sec:spell}

Making SPE practical requires a language with unusual properties, motivating the development of \Spell{} (see Appendix~\ref{app:language} for language details). In SPE, the same data---a model completion---is both the content of the model's context window and the program specifying what context becomes the prefix or prompt for a subsequent turn. Consider the following attempt to express one iteration of a ReAct-style loop as a model-written program, particularly the appearance of \lstinline[language=spell]!this_entire_completion()!:

\begin{lstlisting}[language=spell]
prompt := "Summarize README.md."

// LM turn 1 begins here
file_contents := read_file("README.md")
next_prefix :=
  concatenate(this_entire_completion(), "observation := ", file_contents)
self_call(next_prefix)
\end{lstlisting}

{\bf Challenge 1: context persistence as code.}
The expression \lstinline[language=spell]!this_entire_completion()! serializes the source code of the running program. That source code is edited in order to produce a new prefix, here by concatenating one additional line. In order to generalize such logic, an SPE language should make it easy for the model to manipulate code as data. \Spell{} therefore uses the syntax of Lisp, specifically Clojure \citep{mccarthy1960lisp,hickey2020clojure}: programs are written as nested expressions whose syntax matches the procedure of evaluating them, and this syntax greatly simplifies the logic of programs which generate or transform source code. In order to support the creation of programs or expressions which reference their own source code specifically, \Spell{} also adds a self-referential \lstinline[language=spell]!quine! form. The expression \lstinline[language=spell]!(quine name expr)! binds \lstinline[language=spell]!name! to the source form \lstinline[language=spell]!(quine name expr)! as data before evaluating \lstinline[language=spell]!expr!.

Natural language can be embedded in code, and a \Spell{} program will often define long string literals containing the model's visible reasoning or planning. The motivation for making reasoning traces a part of the program is that it lets the model control whether they are retained as context on subsequent turns.

{\bf Challenge 2: replaying effects.}
A second problem with the example above appears on turn~2. The turn~2 program contains the turn~1 program as a prefix; evaluating it replays the self-call, producing an inescapable loop, and also replays the IO operation. \Spell{} resolves this problem using a trailing-expression pattern. The body of a \Spell{} program performs ordinary local computation but cannot directly trigger effects, including self-calls. Instead, it computes an effectful expression as data. A wrapper around the program body (see below) evaluates only the last such expression, the \emph{trailing expression}, and this special evaluator has access to effect functions. When an old trailing expression is followed by newly appended source code, it is no longer the trailing expression; it is reconstructed as inert data but not evaluated. This makes it safe to replay prior source as the prefix of a later turn.

	{\bf Challenge 3: turn-boundary interference.}
A third problem is more subtle. The turn~2 program is evaluated inside of, not subsequent to, the turn~1 program. If the inner program inherited an opaque runtime environment from the outer program, or could mutate that environment, then the model could not reliably reason about its behavior; for example, the inner program could overwrite a binding defined by the outer program. \Spell{} therefore evaluates self-calls in fresh local environments, such that a child program neither inherits nor mutates the environment of its parent.

	{\bf The \Spell{} wrapper.}
A normal \Spell{} program has the following structure, which combines self-reference with replay-safe effects:

\begin{lstlisting}[language=spell]
(quine completion          ; 1. bind the current completion as data
  (eval                    ; 4. evaluate the trailing expression
    (do
      ...                  ; 2. ordinary local computation
      '(effectful-expression)))) ; 3. return quoted trailing expression
\end{lstlisting}

\noindent The outer \lstinline[language=spell]!quine! form binds the symbol \lstinline[language=spell]!completion! to the source code of the entire program. The inner \lstinline[language=spell]!do! block returns the value of its last expression, which is the quoted trailing expression. Because it is quoted, this expression is not evaluated until it is passed to \lstinline[language=spell]!eval!, at which time effect functions, including LM self-calls, are made available. If the model appends an expression after the current quoted trailing expression, the old expression is no longer passed to \lstinline[language=spell]!eval! and becomes inert data. The trailing expression can make use of bindings defined upstream, including the \lstinline[language=spell]!completion! binding.

An iteration of a ReAct-style loop is replicated in \Spell{} by writing a trailing expression which makes a tool call, concatenates the result with \lstinline[language=spell]!completion! inside of the \lstinline[language=spell]!do! block, and makes a self-call with this prefix (\Spell{} provides a convenience function for this). This pattern is easily generalized. Instead of retaining its entire completion, the model can choose which items to include and which to prune (see Appendix~\ref{app:language:context-management}). Tool calls can be chained together, packaged into reusable functions, and composed with arbitrary control flow. Tool calls can also be composed with self-calls; for example, an agent could list files in a directory and assign each file to a subagent.

No model has yet been trained to use \Spell{}, so an important feature of the language is its built-in prompts. The \Spell{} system prompt explains semantics and gives positive and negative examples (Appendix~\ref{app:language:system-prompts}). Each namespace has a documentation prompt that the model can read via tool calls. Besides prompting, \Spell{} exposes two surfaces for harness engineering. First, the LM's first turn is produced by evaluating an initial program, which defaults to a prompt and a self-call. Users may customize this program, although then the initial agent turn is SPE only if that program does not fix a policy which applies to subsequent turns. Second, the user specifies the effect namespaces with which the model is provided. \Spell{} functions with possibly-dangerous effects are packaged into namespaces, like \lstinline[language=spell]!io-read/! and \lstinline[language=spell]!io-write/!, the availability of which is configurable (Appendix~\ref{app:language:capability-boundaries}).

\Spell{} is not the only possible solution to the practical challenges of SPE. However, any implementation will have to solve the same challenges, and the approach taken by \Spell{}---in particular, Lisp and the trailing-expression pattern---is carefully considered. Appendix~\ref{app:language:principles} articulates the high-level principles which motivate the implementation of \Spell{}.

\section{Empirical evaluation}
\label{sec:eval}

Using \Spell{} in practice could be cognitively demanding, particularly for a model not trained to do so: the model must understand SPE conceptually, learn \Spell{} itself in-context, transfer agentic behaviors to an unfamiliar action space, and additionally solve the assigned task. I sought to address three questions. First, can existing models, without SPE- or \Spell{}-specific training, use \Spell{} to solve challenging agentic tasks? Second, what kind of \Spell{} programs can these models write, and what programs do they write in practice? Third, to what extent does self-orchestration with \Spell{} handicap (or benefit) existing models on standard benchmarks, versus a traditional harness? These experiments evaluate \Spell{} as a realization of SPE, as opposed to either \Spell{} or SPE independently; there is no natural reference implementation of SPE, and a deliberately naive implementation would not usefully represent the bare concept. Configuration details for harnesses and models are provided in Appendix~\ref{app:bench:config}. Protocol details for each benchmark are provided in Appendix~\ref{app:bench:protocols}.

{\bf Observation 1: strong existing models can use \Spell{} to solve challenging agentic tasks.}
I applied five strong models---GPT-5.4, Claude Opus 4.6, GLM-5.1, Kimi-K2.6, and Qwen3.6 Plus---as \Spell{} agents to two coding benchmarks, Terminal-Bench~1.1 and SWE-bench Lite (32-task subset each) (Figure~\ref{fig:obs1}; Appendix~\ref{app:bench:obs1}). I counted fatal/unrecovered \Spell{} errors, defined as \Spell{} errors after which the model failed to produce an error-free turn, as well as success rates. All of the models except for the smallest, Qwen3.6 Plus, had success rates of at least 10/32. GPT-5.4 was the only model to produce zero fatal errors, and it also had the highest number of successes on both tasks; Opus 4.6 had the next fewest fatal errors and the next-largest number of successes, quite similar to GLM-5.1 and Kimi-K2.6; Qwen3.6 Plus had the most fatal errors and the fewest successes.

These results suggest that \Spell{} itself is a viable substrate with which to solve challenging agentic tasks. Moreover, existing models have the capability to learn \Spell{} in context and generalize agentic behaviors to this action space. On the other hand, the cognitive demands imposed by SPE and \Spell{} are close enough to capability thresholds that every model except GPT-5.4 sometimes fails by producing invalid \Spell{} programs. Subsequent analyses primarily target GPT-5.4.

\begin{figure}[tbp]
	\centering
	\includegraphics[width=0.9\linewidth]{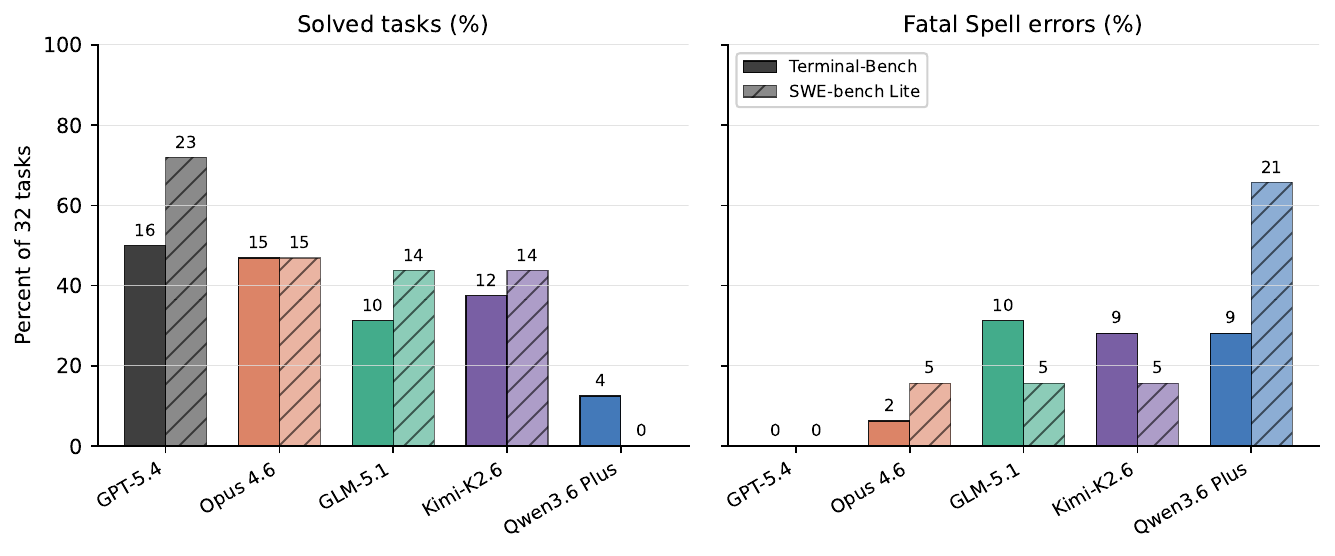}
	\caption{\textbf{Accuracy and fatal \Spell{}-error rate by model.} Each model was run on a set of 32 Terminal-Bench~1.1 and SWE-bench Lite tasks. A task was counted as a fatal error if its final turn produced an unrecovered \Spell{}/runtime error. GPT-5.4 and Opus 4.6 were configured with medium reasoning effort, GLM-5.1 and Qwen3.6 Plus with high effort, and Kimi-K2.6 with default effort.}
	\label{fig:obs1}
\end{figure}

{\bf Observation 2: current models use \Spell{} for context management and programmatic tool calling.}
I examined what orchestration policies were expressed in \Spell{} programs produced by GPT-5.4. One important component of orchestration policy is context management, and \Spell{} provides powerful features for this, particularly the \lstinline[language=spell]|!peek| function, which creates ephemeral tool call expressions (Appendix~\ref{app:language:peek}). GPT-5.4 used \lstinline[language=spell]|!peek| heavily in both Terminal-Bench and SWE-bench Lite, and this had a substantial effect on context utilization; compared with Codex CLI (see below), total token count (input and output) was about $4\times$ smaller for \Spell{}, although the difference in API cost was less dramatic (Appendix~\ref{app:bench:cost}). Agents also used \Spell{} as a substrate for programmatic tool calling \citep{wang2024codeact}: tool calls were often chained together, conditioned on the success or status of earlier calls, or batched several per turn (Appendix~\ref{app:bench:features}). By contrast, \Spell{} agents rarely used multi-agent features or composed self-calls with control flow. No \Spell{} agent was observed to use multi-agent delegation successfully; two attempts were made, both likely harmful (Appendix~\ref{app:bench:features}). Moreover, instructing the agent to attempt multi-agent orchestration was ineffective (Appendix~\ref{app:bench:features}).

	{\bf Observation 3: orchestration games elicit nontrivial \Spell{} programs.}
I sought to distinguish whether the simplicity of observed \Spell{} programs was due to behavior or capability. I tasked a \Spell{} agent running GPT-5.4 (medium effort) with orchestrating one of three ``orchestration games'' designed to mock plausibly useful multi-agent orchestration motifs (Appendix~\ref{app:bench:orchestration-games}):
\begin{itemize}[leftmargin=*]
	\item \textbf{Auction}: the orchestrator makes three self-calls asking for sealed bids and selects the highest bid; this emulates a best-of-\(K\) motif.
	\item \textbf{Telephone}: at each iteration of a loop, a self-call instance receives a sentence and is asked to rephrase it, passing the rephrased sentence to the next iteration; this emulates a deterministic workflow.
	\item \textbf{Twenty questions}: the initial agent chooses a secret, and self-call instances ask yes-or-no questions, accumulating a public transcript, until guessing the answer; this emulates a worker-checker motif.
\end{itemize}
GPT-5.4 succeeded according to program trace audits in 8/8 auction trials, 4/8 telephone trials, and 7/8 twenty-questions trials (Appendix~\ref{app:bench:orchestration-games}). The following was the trailing expression of a successful telephone implementation (comments are added):
\begin{lstlisting}[language=spell,basicstyle=\ttfamily\footnotesize]
'(let [relays
       ; loop variables are k=1, current=initial-wording, acc=[]
       (loop [k 1 current initial-wording acc []]
         (if (> k 8)
           acc  ; value of the loop expression
           ; relay-program packages the prompt for a fresh self-call.
           (let [next-wording (!llm-self (relay-program k current))]
             ; k+=1, current=next-wording, acc+={...}
             (recur (+ k 1) next-wording
                    (conj acc {:relay k :wording next-wording})))))
       final-wording (:wording (last relays))]
   {:initial initial-wording :relays relays :final final-wording})
\end{lstlisting}

These results are modest but important: \Spell{} supports flexible orchestration policies, and GPT-5.4 is capable of implementing them despite not usually choosing to do so.

	{\bf Observation 4: \Spell{} is competitive on coding benchmarks, but not uniformly.}
I compared \Spell{} against Codex CLI on Terminal-Bench and SWE-bench Lite using the same GPT-5.4 model at low, medium, and high reasoning effort. This bar is high: Codex CLI is a mature harness presumably well-aligned with the model's post-training, whereas \Spell{} is a new implementation of an unfamiliar architecture. Nevertheless, coding benchmark results were encouraging. On Terminal-Bench~1.1  (Figure~\ref{fig:obs2_gpt54_frontiers}, left), Codex at high effort achieved the highest accuracy, resolving 43/80 tasks, but at $\sim 2\times$ higher cost than the \Spell{} agent at high effort (40/80). Codex at low and medium effort had similar accuracy and cost as the \Spell{} agent at high effort; the \Spell{} agent at low and medium effort had lower accuracy and commensurately lower cost. Adding a coding-task prompt, which suggests an iterative workflow, improved high-effort \Spell{} on Terminal-Bench to 43/80 tasks at \$37.88 total cost (Appendix~\ref{app:bench:harness}). In a comparison between a \Spell{} agent running Claude Opus~4.6 at medium effort and Claude Code running the same model, both agents solved 42/80 tasks with similar costs (Appendix~\ref{app:bench:harness}). On SWE-bench Lite, similarly, the \Spell{} agent had lower accuracy but commensurately lower cost compared with Codex. The \Spell{} agent resolved 171/300 tasks at medium effort, nearly matching Codex (172/300) on accuracy at lower total cost; however, high effort benefited the Codex agent (185/300 resolved) but not the \Spell{} agent (171/300).

Although individual datapoints are noisy, these estimates suggest a genuine cost--accuracy tradeoff. The cost reduction is due at least in part to context management features, which produce large reductions in the number of input tokens (Appendix~\ref{app:bench:cost}). The accuracy reduction is more difficult to attribute; possible contributors include overuse of context pruning, prompting differences, and the cognitive overhead of learning \Spell{} itself.

I additionally evaluated \Spell{} agents on LongBench~v2, a long-context QA benchmark, and AppWorld, a computer-use benchmark. The \Spell{} agent (with GPT-5.4 medium effort) underperformed Codex CLI on LongBench at similar cost (61.0\% vs. 67.5\%) and badly underperformed on AppWorld (42.1\% vs. 63.2\%), at higher cost (Figure~\ref{fig:obs5}). Trace inspection suggested assorted failure modes: query formulation and data-source selection errors, final-submission mistakes, one refusal on a payment task, possible evaluator mismatches, and mismatch between the real run date and fixture dates (Appendix~\ref{app:bench:cross}). It is not clear what about \Spell{} makes it less competitive on these benchmarks. One possibility is that its context management features are more appropriate for coding tasks; another is prompting; a third is that GPT-5.4 is more extensively trained for such tasks, such that effective behaviors are more deeply ingrained and thus generalize more readily.

\begin{figure}[tbp]
	\centering
	\includegraphics[width=0.9\linewidth]{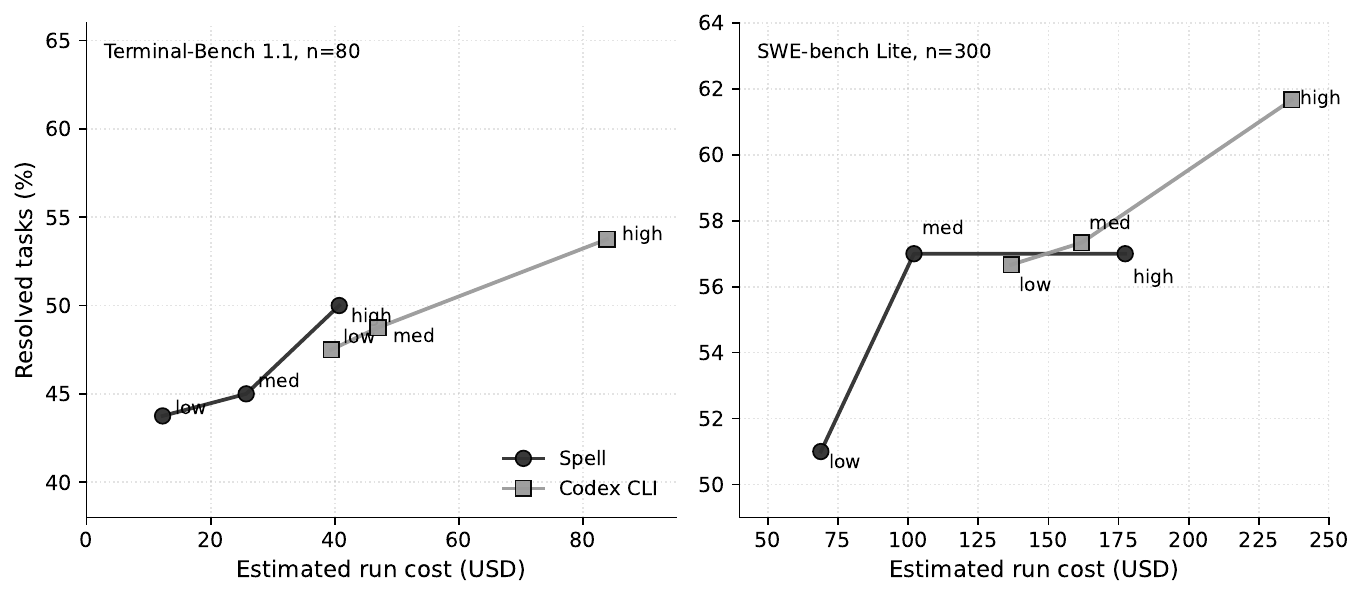}
	\caption{\textbf{Comparison with Codex CLI on coding benchmarks.} Left: Terminal-Bench~1.1. Right: SWE-bench Lite. Each point is one full benchmark run with GPT-5.4 at low, medium, or high reasoning effort. For numerical results, see Appendix~\ref{app:bench:harness}.}
	\label{fig:obs2_gpt54_frontiers}
\end{figure}

\begin{figure}[tbp]
	\centering
	\begin{minipage}[c]{0.62\linewidth}
		\centering
		\includegraphics[width=\linewidth]{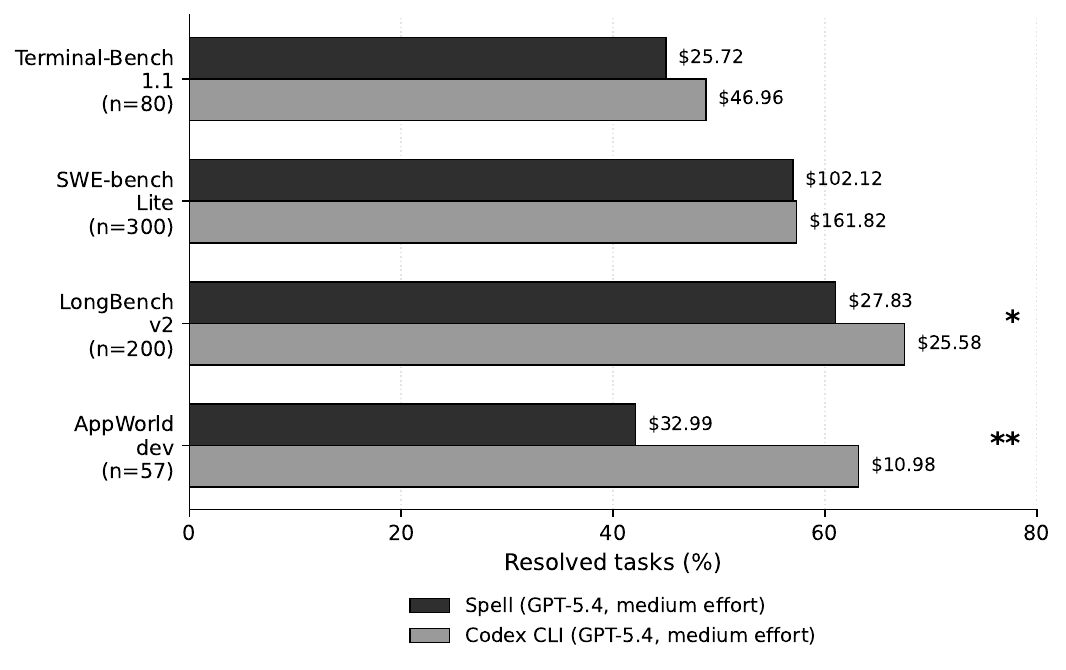}
	\end{minipage}\hfill
	\begin{minipage}[c]{0.34\linewidth}
		\captionsetup{justification=raggedright,singlelinecheck=false}
		\caption{\textbf{Comparison between \Spell{} and Codex CLI across benchmarks.} Labels give the total API cost of each run. Numerical results and run-configuration details are reported in Appendix~\ref{app:bench:harness} and Appendix~\ref{app:bench:cross}. *~Binomial $p<0.05$. **~$p<0.005$.}
		\label{fig:obs5}
	\end{minipage}
\end{figure}

\section{Related work}
\label{sec:related}

This section reviews recent literature on the design of agentic systems. Related language design literature is reviewed in Appendix~\ref{app:language:background}, and related computability theory literature is reviewed in Appendix~\ref{app:theory:overview}.

{\bf Partial self-orchestration.}
Many self-orchestrated architectures, motivated by the limitations of the LM's context window, provide special tools for context management or subagent delegation. MemGPT creates a memory storage system and provides the agent with special tools to manage memory and context \citep{packer2023memgpt}. A series of papers empower a model to purge, summarize, and more generally manipulate its own context window, including MemAct \citep{zhang2025memact}, FoldGRPO \citep{sun2025contextfolding}, and AgentFold \citep{ye2025agentfold}. These papers are precedent for the context management features of \Spell{}. Another axis of partial self-orchestration is delegation. Conductor \citep{nielson2026conductor} is a language model specifically trained to orchestrate other models on agentic tasks, notably including via self-delegation. RLMs \citep{zhang2026rlm} are especially relevant to SPE because the means of delegation is programmatic. Delegation also features in commercial systems, particularly coding agents \citep{anthropic2025agentteams,moonshot2026kimik26}. Whereas these systems add features to an otherwise fixed orchestration policy, SPE simply yields responsibility for this policy to the model.

Several self-orchestrating architectures are paired with LMs trained or tuned for the corresponding interface. FoldGRPO and MemAct train models to manage their own context \citep{sun2025contextfolding,zhang2025memact}; RLMs train models for programmatic delegation \citep{zhang2026rlm}; and Conductor and Kimi K2.6 report gains from models or systems specialized for subagent orchestration \citep{nielson2026conductor,moonshot2026kimik26}. These successes suggest that a parallel approach may be viable for \Spell{} (see Section \ref{discussion}).

{\bf Self-evolving harnesses.}
Another nearby line of work attempts to optimize prompts, programs, or scaffolds through self-improvement. Voyager \citep{wang2023voyager} accumulates LM-authored skills over time; DSPy \citep{khattab2024dspy} optimizes prompts and other LM-program fragments against a metric; ENCOMPASS \citep{li2025encompass} superimposes tree search over a space of workflows. STOP \citep{zelikman2024stop} starts with a program-improver program and runs the improver on its own source code;  ADAS \citep{hu2025adas} introduces a meta-agent which updates an archive of harness designs; AFlow \citep{zhang2025aflow} augments this meta-agent with tree search. Several other methods vary the means of self-evolution or the space of possible designs \citep{yin2024godelagent,lee2026metaharness}. These methods share with SPE and \Spell{} the motivation that fixed orchestration policy is likely suboptimal, but instead of optimizing this logic, SPE makes it a part of the model's action space. 

{\bf Program execution.}
Many methods involve the execution of a model-written program in a general-purpose programming language. Examples include PAL and Program-of-Thoughts, which are oriented toward numerical calculation, and CodeAct, which uses code as a substrate for tool calling \citep{gao2023pal,chen2023programofthoughts,wang2024codeact}. RLMs also belong to this category \citep{zhang2026rlm}. SPE differs from these systems by making the model-written program responsible for orchestration policy.

\section{Discussion}
\label{sec:discussion}
\label{discussion}

This paper shows that a stateless language model can act as an agent, reasoning and acting across multiple turns, with no fixed agent loop or orchestration policy. Instead, one need only execute the model completion as a program. This approach is conceptually simple enough to be defined and studied using formal abstractions. It also presents major challenges in practice, motivating the development of \Spell{}, and experiments show that \Spell{} addresses these challenges well enough to support capable agents on difficult coding benchmarks.

On the other hand, this paper does not show that SPE supports any specific orchestration strategy which is impossible to express using a traditional harness. The behavior of any particular SPE program probably could be implemented externally; the difference is not what logic can be expressed, but what entity expresses it. Moreover, current models primarily use \Spell{} for relatively simple but practically useful orchestration policies: context pruning, programmatic tool batching, and simple agent-loop emulation; they show the capability to write more interesting orchestration programs but do not usually choose to do so.

This behavior could possibly change if an LM were trained to use \Spell{} natively. Prior work shows that models trained via reinforcement learning to use subagent orchestration or context management tools within an agent loop learn to do so productively \citep{sun2025contextfolding,zhang2026rlm,nielson2026conductor}; a \Spell{}-native model might plausibly acquire similar capabilities. It is unclear how far this would scale: would long-running tasks accumulate increasing benefit over time? Would a strong \Spell{}-native model actually discover optimal strategies during training? Would this end-to-end approach outperform self-evolving agent architectures? Despite these uncertainties, there is precedent for end-to-end learning to outscale human effort \citep{sutton2019bitter}; \Spell{}-native training may provide the means by which to scale orchestration with computation.

\bibliographystyle{plainnat}
\bibliography{references/references}

\clearpage
\appendix


\section{Agentic evaluators and self-programmed execution}
\label{app:theory}

\subsection{Overview}\label{app:theory:overview}

This appendix formalizes sequential language-model agents at the granularity of model-call boundaries. The ambient evaluator is treated as an abstract machine: a structured transition system whose states expose the evaluator-level control, environment, and continuation rather than hardware-level execution details \cite{Landin64,Plotkin81,FelleisenFriedman86,AgerEtAl03}. The new object isolated here is the \emph{agentic machine}, which records only the prompt exposed at a boundary, the completion returned by the model interface, and the deterministic computation that resumes until the next boundary, halt, or divergence.

This boundary-to-boundary view is related to interactive and oracle computation, but specialized to sequential language-model systems in which each abstract transition contains exactly one model call \cite{Turing39,Soare09}. It is also related to simulation and refinement mappings: an embedding must preserve the prompt seen by the model and must commute with every possible returned completion \cite{AbadiLamport91,LynchVaandrager95,SegalaLynch95}. What is new in this appendix is the use of that boundary-level embedding to define self-programmed execution: a state is SPE when one completion from a fixed seed can load an embedded copy of every state of the same machine.

For a CEK-style evaluator with quotation, \texttt{eval}, and a distinguished model-call primitive, the seed program \texttt{let y = lm q in eval y} produces such an SPE state. The CEK component is the standard operational evaluator for a call-by-value language \cite{FelleisenFriedman86,AgerEtAl03}; first-class quotation and \texttt{eval} are the additional source-language facilities that make the self-loading step literal. The final results show that this seed not only completion-generates the evaluator itself, but also completion-generates any agentic machine whose prompt and harness functions are realizable by the underlying evaluator, with computable realizability following from general recursion \cite{kleene1952metamath}.

This appendix is structured as follows. Section~\ref{app:theory:agentic-machines} defines an agentic machine. Section~\ref{app:theory:embeddings} defines three notions related to equivalence or simulation of agentic machines (embedding, indistinguishability, and completion-generation), and it proves results relating these notions to each other; the most important is Theorem~\ref{thm:embedding-invisible}, which shows that a machine state and its embedded copy are indistinguishable to the model. Section~\ref{app:theory:agentic-evaluators} defines an agentic evaluator, specifically the CEK agentic evaluator, which is the agentic machine in which we locate an SPE state. Finally, Section~\ref{app:theory:spe-states} proves the main theorem, that the CEK agentic evaluator encodes an SPE state. The following table maps main-text definitions to this appendix.

\begin{center}
	\small
	\renewcommand{\arraystretch}{1.25}
	\begin{tabular}{>{\raggedright\arraybackslash}p{0.18\linewidth}>{\raggedright\arraybackslash}p{0.30\linewidth}>{\raggedright\arraybackslash}p{0.42\linewidth}}
		\textbf{Main text item}     & \textbf{Description}                                & \textbf{Appendix A}                                                                                         \\
		Definition~\ref{def:agentic-machine} & Agentic machines                                    & Definition~\ref{def:am}; model semantics in Section~\ref{app:theory:agentic-machines}                       \\
		Definition~\ref{def:embedding}       & Embeddings of agentic machines                      & Definitions~\ref{app:def:embedding} and~\ref{def:lm-perspective}; Theorem~\ref{thm:embedding-invisible}     \\
		Definition~\ref{def:spe-state}       & Reachability, completion-generation, and SPE states & Definitions~\ref{def:completion-generate} and~\ref{app:def:spe-state}                                       \\
		Definition~\ref{def:agentic-evaluator} & Agentic evaluators and boundary states              & Definitions~\ref{def:cek} and~\ref{def:cek-agentic-evaluator}; Lemma~\ref{lem:source-addressable-stability} \\
		$x \xleftarrow{\partial} E$ & First boundary state reached by evaluating $E$      & Definition~\ref{def:cek-agentic-evaluator}; Theorem~\ref{thm:main}                                          \\
		Theorem~\ref{thm:spe}       & SPE seed theorem                                    & Theorem~\ref{thm:main}                                                                                      \\
		Corollary~\ref{cor:universality} & Universality for computable/realizable machines     & Corollaries~\ref{cor:computable-realizable} and~\ref{cor:universal-seed}                                    \\
	\end{tabular}
\end{center}

\subsection{Agentic machines}\label{app:theory:agentic-machines}

An agentic machine describes a system that sends a prompt to a model, receives a completion or response, and performs a deterministic transition to a new state that depends on the previous state and the model completion. This setup is related to oracle-parameterized machines \cite{Turing39,Soare09}.

We fix a prompt space $P$ and a completion space $C$. A model semantics for this interface, when one is needed, is any map
\[
	M:P\to\Delta(C),
\]
where $\Delta(A)$ denotes the set of probability measures on $A$ (deterministic models are also permitted). The results below depend only on $P$ and $C$; whether $M$ is deterministic or stochastic does not matter.

\begin{definition}[Agentic machine]\label{def:am}
	An \emph{agentic machine} over $(P,C)$ is a triple
	\[
		X=(S,p,h)
	\]
	consisting of a state space $S$, a prompt function
	\[
		p:S\to P,
	\]
	and a harness function
	\[
		h:S\times C\to S\cup\{\mathbf{1},\uparrow\},
	\]
	where $\mathbf{1}$ is a distinguished halting point and $\uparrow$ is a distinguished divergence point.

	For each model semantics $M:P\to\Delta(C)$, the induced transition kernel
	\[
		T_M^X:S\to\Delta(S\cup\{\mathbf{1},\uparrow\})
	\]
	is defined by
	\[
		T_M^X(s):=h(s,-)_*M(p(s)),
	\]
	where $h(s,-):C\to S\cup\{\mathbf{1},\uparrow\}$ denotes the map $c\mapsto h(s,c)$ and $h(s,-)_*$ denotes pushforward along that map.
\end{definition}

Thus each transition records exactly one model invocation and one returned completion. The machine data $(S,p,h)$ are deterministic; sampling occurs only when a model semantics $M$ is actually queried at the prompt $p(s)$.

Elements of $C$ may be raw model responses, full prompt-response completions, or typed values delivered by an evaluator interface. A real API may expose bytes or text that must first be parsed, adapted, or rejected; such deterministic adapter behavior is part of the harness $h$. In SPE the returned source is naturally treated as extending a prefix already resident in the evaluator, so we use the word \emph{completion}.

The outcome $\uparrow$ records deterministic divergence between one completion and the next boundary or halt.

\subsection{Embeddings and completion generation}\label{app:theory:embeddings}

\begin{definition}[Embedding]\label{app:def:embedding}
	Let
	\[
		X=(S,p,h),\qquad X'=(S',p',h')
	\]
	be agentic machines over the same interface. An \emph{embedding} of $X'$ into $X$ is an injective map
	\[
		e:S'\to S
	\]
	which we extend by convention to
	\[
		e:S'\cup\{\mathbf{1},\uparrow\}\to S\cup\{\mathbf{1},\uparrow\}
	\]
	with $e(\mathbf{1})=\mathbf{1}$ and $e(\uparrow)=\uparrow$, and which satisfies
	\[
		p(e(s'))=p'(s')
	\]
	for every $s'\in S'$ and
	\[
		h(e(s'),c)=e(h'(s',c))
	\]
	for every $s'\in S'$ and $c\in C$.
\end{definition}

This is an injective, pointwise special case of simulation/refinement ideas from the verification literature \cite{AbadiLamport91,LynchVaandrager95,SegalaLynch95}. From the model's perspective an embedding is invisible: the only part of the state that the model sees is the prompt, and the condition $p\circ e=p'$ says that the ambient machine uses exactly the prompt the embedded machine would have used. Because the post-completion dynamics also commute with $e$, the model sees the same prompt process in the embedded run as in the original one. The next definition makes this precise by comparing the prompts and terminal outcomes seen after any fixed finite sequence of future completions.

For an agentic machine $X=(S,p,h)$, a starting state $s\in S$, and a finite completion sequence $\sigma=c_1\cdots c_n$, write
\[
	x_0^X(s;\sigma),\ldots,x_n^X(s;\sigma)\in S\cup\{\mathbf{1},\uparrow\}
\]
for the unique run prefix with $x_0^X(s;\sigma)=s$ and, for $0\le k<n$,
\[
	x_{k+1}^X(s;\sigma)=
	\begin{cases}
		h\bigl(x_k^X(s;\sigma),c_{k+1}\bigr), & x_k^X(s;\sigma)\in S,             \\
		x_k^X(s;\sigma),                      & x_k^X(s;\sigma)\in\{\mathbf{1},\uparrow\}.
	\end{cases}
\]
Thus halt and divergence are absorbing for purposes of comparing later prompts. Also write
\[
	\operatorname{vis}_X(x)=
	\begin{cases}
		p(x),     & x\in S,     \\
		\mathbf{1},        & x=\mathbf{1},        \\
		\uparrow, & x=\uparrow.
	\end{cases}
\]

\begin{definition}[LM-indistinguishability]\label{def:lm-perspective}
	If $(X,s)$ and $(X',s')$ are pointed states over the same interface, we say that they are \emph{LM-indistinguishable}, and write
	\[
		(X,s)\sim (X',s'),
	\]
	if for every finite completion sequence $\sigma=c_1\cdots c_n$ and every prefix length $0\le k\le n$,
	\[
		\operatorname{vis}_X\bigl(x_k^X(s;\sigma)\bigr)
		=
		\operatorname{vis}_{X'}\bigl(x_k^{X'}(s';\sigma)\bigr).
	\]
	Equivalently, after any shared finite sequence of completions, the two runs have both halted, have both diverged, or remain at states exposing the same prompt. In particular, $\sim$ is an equivalence relation on pointed states over a fixed interface.
\end{definition}

\begin{theorem}[Embeddings are invisible to the LM]\label{thm:embedding-invisible}
	Let $e:S'\to S$ be an embedding of $X'=(S',p',h')$ into $X=(S,p,h)$. Then
	\[
		(X',s')\sim (X,e(s'))
	\]
	for every $s'\in S'$.
\end{theorem}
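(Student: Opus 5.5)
The plan is to fix an arbitrary finite completion sequence $\sigma=c_1\cdots c_n$ and prove by induction on $k$ that
\[
	x_k^X\bigl(e(s');\sigma\bigr)=e\bigl(x_k^{X'}(s';\sigma)\bigr)\qquad(0\le k\le n),
\]
where $e$ is extended to $\mathbf{1}$ and $\uparrow$ as in Definition~\ref{app:def:embedding}. Once this run-level commutation holds, the visible outputs match by a short pointwise check. Since $\sigma$ and $k$ are arbitrary, Definition~\ref{def:lm-perspective} then gives $(X',s')\sim(X,e(s'))$.

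For the induction, the base case $k=0$ holds by definition: both runs start at $e(s')$ and $s'$ respectively. For the inductive step, assume the identity at stage $k$ and write $y=x_k^{X'}(s';\sigma)$. There are two cases.

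\begin{itemize}
	\item If $y\in S'$, then $e(y)\in S$. The ambient run therefore applies $h$, giving
	\[
		x_{k+1}^X\bigl(e(s');\sigma\bigr)=h\bigl(e(y),c_{k+1}\bigr)=e\bigl(h'(y,c_{k+1})\bigr)=e\bigl(x_{k+1}^{X'}(s';\sigma)\bigr),
	\]
	where the middle equality is the harness-commutation axiom of an embedding.
	\item If $y\in\{\mathbf{1},\uparrow\}$, then $e(y)=y$, so both runs are absorbed and stay at the same terminal point.
\end{itemize}

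The one point needing care is that the case split on the ambient state agrees with the case split on $y$. That is, $e(y)\in S$ exactly when $y\in S'$. This holds because $e$ maps $S'$ into $S$ and fixes the two terminal points, which lie outside $S$.

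For the visibility check, let $y=x_k^{X'}(s';\sigma)$.
\begin{itemize}
	\item If $y\in S'$, then $\operatorname{vis}_X(e(y))=p(e(y))=p'(y)=\operatorname{vis}_{X'}(y)$ by the prompt axiom.
	\item If $y$ is $\mathbf{1}$ or $\uparrow$, both sides equal that same symbol.
\end{itemize}

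Injectivity of $e$ is not needed for this argument.
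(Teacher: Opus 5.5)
Your proof is correct and follows the paper's argument essentially verbatim: an induction on $k$ establishing $x_k^X(e(s');\sigma)=e(x_k^{X'}(s';\sigma))$ via transition commutation, followed by a pointwise visibility check using $p\circ e=p'$. The paper leaves implicit your observation that the two runs branch on the same case because $\mathbf{1},\uparrow\notin S$, and your remark that injectivity is never used matches Remark~\ref{rem:lm-noinj}.
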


\begin{proof}
	Extend $e$ to outcomes by $e(\mathbf{1})=\mathbf{1}$ and $e(\uparrow)=\uparrow$. Fix a finite completion sequence $\sigma=c_1\cdots c_n$. Let
	\[
		x'_k=x_k^{X'}(s';\sigma),
		\qquad
		x_k=x_k^X(e(s');\sigma)
	\]
	be the two run prefixes. We prove by induction on $k$ that
	\[
		x_k=e(x'_k)
	\]
	for every $0\le k\le n$. The case $k=0$ is immediate. For the induction step, assume the claim holds at some $k<n$. If $x'_k\in\{\mathbf{1},\uparrow\}$, then by the induction hypothesis $x_k$ is the same terminal outcome, and both run prefixes keep that outcome at step $k+1$. If $x'_k\in S'$, then
	\[
		x_{k+1}=h(x_k,c_{k+1})=h(e(x'_k),c_{k+1})=e(h'(x'_k,c_{k+1}))=e(x'_{k+1}),
	\]
	using transition commutation for the embedding.

	It remains to compare what the model sees. If $x'_k\in S'$, then
	\[
		\operatorname{vis}_X(x_k)=p(e(x'_k))=p'(x'_k)=\operatorname{vis}_{X'}(x'_k),
	\]
	and if $x'_k\in\{\mathbf{1},\uparrow\}$ the two visible outcomes are equal because $e$ fixes $\mathbf{1}$ and $\uparrow$. Thus the visible run prefixes agree for every $\sigma$ and every $k$, so $(X',s')\sim (X,e(s'))$.
\end{proof}

\begin{remark}[This theorem does not need injectivity]\label{rem:lm-noinj}
	The proof of Theorem~\ref{thm:embedding-invisible} uses only prompt preservation and transition commutation, not injectivity of $e$.
\end{remark}

\begin{example}[Indistinguishable machines may lack an embedding]\label{ex:indistinguishable-no-embedding}
	Let $S_1=\{s_1\}$ and $S_2=\{s_2\}$ be disjoint singletons, let $P=S_1\sqcup S_2$, and take any completion space $C$. Define
	\[
		T_1=(S_1\times\{0,1\})\sqcup S_2,
		\qquad
		T_2=S_1\sqcup (S_2\times\{0,1\}),
	\]
	and let $X_i=(T_i,p_i,h_i)$ for $i=1,2$, where
	\[
		p_1((s_1,j))=s_1,\quad p_1(s_2)=s_2,
		\qquad
		p_2(s_1)=s_1,\quad p_2((s_2,j))=s_2,
	\]
	and both harnesses stutter: $h_i(t,c)=t$ for every $t\in T_i$ and $c\in C$. Then every state of $X_1$ is LM-indistinguishable from some state of $X_2$, and conversely, by choosing a state with the same prompt; all finite completion sequences leave that prompt unchanged.

	However, neither machine embeds in the other. An embedding of $X_1$ into $X_2$ would have to send the two distinct states $(s_1,0)$ and $(s_1,1)$ to two distinct states of $X_2$ with prompt $s_1$, but $X_2$ has only one such state. Symmetrically, an embedding of $X_2$ into $X_1$ would require two distinct states of $X_1$ with prompt $s_2$, but $X_1$ has only one.
\end{example}

\begin{definition}[Completion-generation by a state]\label{def:completion-generate}
	Let $X=(S,p,h)$ be an agentic machine and let $x_0\in S$. Define the one-step state-reachable set
	\[
		\Reach_X(x_0):=\{t\in S: \exists c\in C\text{ such that }h(x_0,c)=t\}.
	\]
	For another agentic machine $X'=(S',p',h')$ over the same interface, we say that the pair
	\[
		(X,x_0)
	\]
	\emph{completion-generates} $X'$ if there exists an embedding
	\[
		e:S'\to S
	\]
	such that
	\[
		\operatorname{im}(e)\subseteq \Reach_X(x_0).
	\]
	If there exists $x_0\in S$ such that $(X,x_0)$ completion-generates $X'$, we say that $X$ \emph{completion-generates} $X'$.
\end{definition}

The state-only convention in $\Reach_X(x_0)$ is not a separate mathematical assumption. Equivalently, one could take the full one-step image in $S\cup\{\mathbf{1},\uparrow\}$ and then require the embedding image to lie in its state part. We use the state-only form because completion-generation asks whether target states, not terminal outcomes, can be loaded into the ambient state space.

This one-hop requirement says that the embedded state is not merely present somewhere in the ambient machine, but reachable from one fixed state $x_0$ by a single completion-mediated step. That is the formal analogue of loading a state through one completion.

\begin{example}[Completion-generation need not hold]\label{ex:seeds}
	Let $P=C=S=\mathbb{F}_2$, and let the prompt map be the identity $p:S\to P$. For each harness below, consider the agentic machine
	\[
		X=(S,p,h).
	\]
	Then:
	\begin{enumerate}[label=(\arabic*)]
		\item If $h(a,b)=a+b$, then $(X,x_0)$ completion-generates $X$ for every $x_0\in S$.
		\item If $h(a,b)=ab$, then $(X,1)$ completion-generates $X$, but $(X,0)$ does not.
		\item If $h(a,b)=a$, then $(X,x_0)$ does not completion-generate $X$ for either $x_0\in S$.
	\end{enumerate}
	Indeed, in case (1) the image of $h(x_0,-)$ is all of $\mathbb{F}_2$ for every $x_0$; in case (2) the image of $h(1,-)$ is all of $\mathbb{F}_2$ but the image of $h(0,-)$ is $\{0\}$; and in case (3) the image of $h(x_0,-)$ is always $\{x_0\}$. Because $p$ is the identity, any self-embedding of $X$ must be the identity on states. Thus the reachability claims above exactly determine when $X$ completion-generates itself, and case (3) shows that embedding alone does not imply completion-generation.
\end{example}

\begin{proposition}[Downward closure under embedded targets]\label{prop:cg-downward}
	Let
	\[
		X=(S_X,p_X,h_X),\qquad Y=(S_Y,p_Y,h_Y),\qquad Z=(S_Z,p_Z,h_Z)
	\]
	be agentic machines over the same interface, and let $x_0\in S_X$. If $(X,x_0)$ completion-generates $Y$ and there exists an embedding $j:S_Z\to S_Y$ of $Z$ into $Y$, then $(X,x_0)$ completion-generates $Z$.
\end{proposition}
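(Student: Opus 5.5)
The plan is to compose embeddings. By hypothesis there is an embedding $e\colon S_Y\to S_X$ of $Y$ into $X$ with $\operatorname{im}(e)\subseteq\Reach_X(x_0)$. We are also given an embedding $j\colon S_Z\to S_Y$ of $Z$ into $Y$. The candidate witness for the conclusion is the composite $e\circ j\colon S_Z\to S_X$. It remains to check two things: that $e\circ j$ is an embedding of $Z$ into $X$ in the sense of Definition~\ref{app:def:embedding}, and that its image lies in $\Reach_X(x_0)$.

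First, injectivity is immediate, since a composite of injections is injective. The extension to outcomes is compatible with composition, because $e(j(\mathbf{1}))=e(\mathbf{1})=\mathbf{1}$ and similarly for $\uparrow$. So the convention-extended map $e\circ j$ on $S_Z\cup\{\mathbf{1},\uparrow\}$ agrees with the composite of the extended maps.

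Prompt preservation follows by chaining: for $s\in S_Z$,
\[
p_X(e(j(s)))=p_Y(j(s))=p_Z(s).
\]
Transition commutation is the one step needing slight care, since $h_Z(s,c)$ may be a terminal outcome rather than a state. Using the commutation for $e$ applied at the state $j(s)\in S_Y$, and then the commutation for $j$, we get
\[
h_X(e(j(s)),c)=e\bigl(h_Y(j(s),c)\bigr)=e\bigl(j(h_Z(s,c))\bigr)
\]
for all $c\in C$. Here the second equality uses the extended $e$ applied to an element of $S_Y\cup\{\mathbf{1},\uparrow\}$, which is legitimate because both sides are elements of that set. Combined with the compatibility of the extensions noted above, this gives $h_X((e\circ j)(s),c)=(e\circ j)(h_Z(s,c))$.

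Finally, the image condition holds because $\operatorname{im}(e\circ j)\subseteq\operatorname{im}(e)\subseteq\Reach_X(x_0)$. Hence $(X,x_0)$ completion-generates $Z$ via $e\circ j$. I expect no real obstacle. The only point to state explicitly is the handling of terminal outcomes in the commutation chain, which the extension convention resolves.
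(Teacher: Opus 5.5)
Your proposal is correct and matches the paper's proof, which likewise takes the composite $e\circ j$ as the witness and notes $\operatorname{im}(e\circ j)\subseteq\operatorname{im}(e)\subseteq\Reach_X(x_0)$. You also spell out why a composite of embeddings is an embedding, including the handling of terminal outcomes; the paper simply asserts this step.
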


\begin{proof}
	Let $e:S_Y\to S_X$ witness that $(X,x_0)$ completion-generates $Y$. Then $e\circ j:S_Z\to S_X$ is an embedding of $Z$ into $X$, and
	\[
		\operatorname{im}(e\circ j)\subseteq \operatorname{im}(e)\subseteq \Reach_X(x_0).
	\]
	Thus $(X,x_0)$ completion-generates $Z$.
\end{proof}

The next result changes the ambient machine rather than the target machine. It is therefore a separate source-invariance statement, not merely another instance of downward closure.

\begin{proposition}[Completion-generation is invariant under embeddings]\label{prop:cg-invariance}
	Let
	\[
		X=(S_X,p_X,h_X),\qquad Y=(S_Y,p_Y,h_Y),\qquad Z=(S_Z,p_Z,h_Z)
	\]
	be agentic machines over the same interface, let
	\[
		e:S_Y\to S_X
	\]
	be an embedding of $Y$ into $X$, and let $y_0\in S_Y$ with
	\[
		x_0=e(y_0).
	\]
	Then
	\[
		(X,x_0)\text{ completion-generates } Z
		\quad\Longleftrightarrow\quad
		(Y,y_0)\text{ completion-generates } Z.
	\]
\end{proposition}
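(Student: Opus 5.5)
The plan is to prove both directions directly from Definition~\ref{def:completion-generate}. The core is one observation: every state reachable in one step from $x_0=e(y_0)$ lies in the image of $e$, and in fact
\[
	\Reach_X(x_0)=e\bigl(\Reach_Y(y_0)\bigr).
\]
To see this, use transition commutation for $e$. For any $c\in C$,
\[
	h_X(x_0,c)=h_X(e(y_0),c)=e\bigl(h_Y(y_0,c)\bigr).
\]
The extended map $e$ sends $\mathbf{1}$ and $\uparrow$ to themselves and sends $S_Y$ into $S_X$. We take $S_X$ to be disjoint from $\{\mathbf{1},\uparrow\}$, as implicit in Definition~\ref{def:am}. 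Hence $h_X(x_0,c)$ is a state exactly when $h_Y(y_0,c)$ is a state, and then it equals $e$ applied to that state. This gives the displayed identity. The same disjointness shows that the extension of $e$ to $S_Y\cup\{\mathbf{1},\uparrow\}$ is still injective.

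For $(\Leftarrow)$, let $f\colon S_Z\to S_Y$ be an embedding with $\operatorname{im}(f)\subseteq\Reach_Y(y_0)$. The composite $e\circ f$ is an embedding of $Z$ into $X$: it is injective, it preserves prompts, and it commutes with $h$, because each of these properties composes (as in Proposition~\ref{prop:cg-downward}). Its image lies in $e(\Reach_Y(y_0))=\Reach_X(x_0)$, so $(X,x_0)$ completion-generates $Z$.

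For $(\Rightarrow)$, let $g\colon S_Z\to S_X$ be an embedding with $\operatorname{im}(g)\subseteq\Reach_X(x_0)$. By the observation, $\operatorname{im}(g)\subseteq\operatorname{im}(e)$, so $f:=e^{-1}\circ g\colon S_Z\to S_Y$ is well defined, injective, and has image inside $\Reach_Y(y_0)$. We check the two embedding conditions.

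\textbf{Prompts.} We have
\[
	p_Y(f(z))=p_X(e(f(z)))=p_X(g(z))=p_Z(z).
\]

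\textbf{Transitions.} For $z\in S_Z$ and $c\in C$,
\[
	e\bigl(h_Y(f(z),c)\bigr)=h_X(g(z),c)=g\bigl(h_Z(z,c)\bigr).
\]
We need the right-hand side to equal $e\bigl(f(h_Z(z,c))\bigr)$. If $h_Z(z,c)$ is terminal, both sides equal that terminal value. If $h_Z(z,c)$ is a state, then $g$ of it lies in $\operatorname{im}(g)\subseteq\operatorname{im}(e)$, so it equals $e(f(h_Z(z,c)))$. Injectivity of the extended $e$ then yields $h_Y(f(z),c)=f(h_Z(z,c))$.

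I expect the main obstacle to be this transition-commutation check in $(\Rightarrow)$. The pullback $e^{-1}\circ g$ only makes sense once every state that $g$ can produce, including successor states $g(h_Z(z,c))$, is known to lie in $\operatorname{im}(e)$. The one-hop observation is exactly what guarantees this, because the entire image of $g$ sits inside $\Reach_X(x_0)\subseteq\operatorname{im}(e)$. The remaining care is the terminal cases, which are handled by the disjointness of $S_X$ from $\{\mathbf{1},\uparrow\}$.
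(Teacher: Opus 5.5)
Your proof is correct and follows the paper's argument essentially step for step. For $(\Leftarrow)$ you compose the embeddings, and for $(\Rightarrow)$ you use $h_X(x_0,c)=e(h_Y(y_0,c))$ to place $\operatorname{im}(g)$ inside $\operatorname{im}(e)$, pull back along $e^{-1}$, and cancel $e$ in the transition identity. Two points the paper leaves implicit you make explicit: the identity $\Reach_X(x_0)=e(\Reach_Y(y_0))$, and the observation that cancelling $e$ on terminal outcomes requires $S_X\cap\{\mathbf{1},\uparrow\}=\varnothing$.
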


\begin{proof}
	First assume that $(Y,y_0)$ completion-generates $Z$, witnessed by an embedding
	\[
		f:S_Z\to S_Y
	\]
	with $\operatorname{im}(f)\subseteq \Reach_Y(y_0)$. Then $e\circ f:S_Z\to S_X$ is an embedding of $Z$ into $X$. For each $z\in S_Z$, choose $c\in C$ with $f(z)=h_Y(y_0,c)$; then
	\[
		(e\circ f)(z)=e(h_Y(y_0,c))=h_X(e(y_0),c)=h_X(x_0,c).
	\]
	Thus $\operatorname{im}(e\circ f)\subseteq \Reach_X(x_0)$, so $(X,x_0)$ completion-generates $Z$.

	Conversely, assume that $(X,x_0)$ completion-generates $Z$, witnessed by an embedding
	\[
		g:S_Z\to S_X
	\]
	with $\operatorname{im}(g)\subseteq \Reach_X(x_0)$. Since $x_0=e(y_0)$ and $e$ is an embedding,
	\[
		h_X(x_0,c)=h_X(e(y_0),c)=e(h_Y(y_0,c))
		\qquad(c\in C).
	\]
	Therefore every state in $\Reach_X(x_0)$ lies in $\operatorname{im}(e)$, and hence $\operatorname{im}(g)\subseteq \operatorname{im}(e)$. Because $e$ is injective, the map
	\[
		\tilde g:=e^{-1}\circ g:S_Z\to S_Y
	\]
	is well defined. Extend $\tilde g$ to $S_Z\cup\{\mathbf{1},\uparrow\}$ by $\tilde g(\mathbf{1})=\mathbf{1}$ and $\tilde g(\uparrow)=\uparrow$.

	For prompts, if $z\in S_Z$ then
	\[
		p_Y(\tilde g(z))=p_X(e(\tilde g(z)))=p_X(g(z))=p_Z(z).
	\]
	For transitions, if $z\in S_Z$ and $c\in C$ then
	\[
		e(h_Y(\tilde g(z),c))
		= h_X(e(\tilde g(z)),c)
		= h_X(g(z),c)
		= g(h_Z(z,c))
		= e(\tilde g(h_Z(z,c))).
	\]
	Since $e$ is injective, this implies
	\[
		h_Y(\tilde g(z),c)=\tilde g(h_Z(z,c)).
	\]
	Thus $\tilde g$ is an embedding of $Z$ into $Y$. Finally, if $g(z)=h_X(x_0,c)$ then
	\[
		\tilde g(z)=e^{-1}(g(z))=e^{-1}(h_X(e(y_0),c))=h_Y(y_0,c),
	\]
	so $\operatorname{im}(\tilde g)\subseteq\Reach_Y(y_0)$. Hence $(Y,y_0)$ completion-generates $Z$.
\end{proof}

\subsection{Agentic evaluators}\label{app:theory:agentic-evaluators}

The evaluator used below is an ordinary call-by-value CEK evaluator, together with source-language facilities for first-class quotation and \texttt{eval}. The classical CEK part provides states with a control component, environment, and continuation, and it runs by a deterministic transition relation \cite{FelleisenFriedman86,AgerEtAl03}. The quotation and \texttt{eval} assumptions are object-language assumptions used for self-programming; they are not additional claims about the original CEK papers.

Fix a call-by-value language $\mathcal{L}$ with branching, general recursion, and a first-class value domain $V$. Assume $V$ contains a subset $Q\subseteq V$ of quotations of closed source expressions and a built-in form \texttt{eval}: if $q=\ulcorner E\urcorner\in Q$ quotes a closed expression $E$, then evaluating \texttt{eval} $\,q$ continues as evaluation of $E$; if $v\in V\setminus Q$, then evaluating \texttt{eval} $\,v$ halts at the distinguished terminal point $\mathbf{1}$. Also assume a value-quotation map
\[
	\quoteop:V\to Q
\]
such that, for every $v\in V$, the quotation $\quoteop(v)$ denotes a closed value expression for $v$; hence evaluating \texttt{eval} $\,\quoteop(v)$ deterministically yields $v$ without invoking the model.

\begin{definition}[CEK evaluator]\label{def:cek}
	For such a language $\mathcal{L}$, a \emph{CEK evaluator} has a state space $\Sigma_{\CEK}^{\mathcal{L}}$ whose states are triples
	\[
		(e,\rho,\kappa)
	\]
	of control component, environment, and continuation. The control component $e$ may be either a source expression or a returned first-class value. The evaluator steps by a partial deterministic transition function
	\[
		T_{\CEK}^{\mathcal{L}}: \Sigma_{\CEK}^{\mathcal{L}} \rightharpoonup \Sigma_{\CEK}^{\mathcal{L}}\cup\{\mathbf{1}\},
	\]
	where all finite terminal, error, and stuck outcomes are identified with the distinguished point $\mathbf{1}$.
\end{definition}

\begin{theorem}[Computational completeness of the CEK evaluator]\label{thm:cek-turing-complete}
	Under standard finite encodings, the CEK evaluator for $\mathcal{L}$ computes every partial computable function.
\end{theorem}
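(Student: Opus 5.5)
The plan is to reduce Theorem~\ref{thm:cek-turing-complete} to the classical Turing completeness of the call-by-value $\lambda$-calculus, together with the correctness of the CEK machine as an implementation of its operational semantics \cite{FelleisenFriedman86,AgerEtAl03}.

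The argument proceeds in three steps.

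\begin{enumerate}
	\item \textbf{Fix the encodings.} Natural numbers are encoded as first-class values of $V$. Concretely, I use numerals built from the language's data constructors, or Church numerals if $\mathcal{L}$ only guarantees closures. A partial function $f:\mathbb{N}\rightharpoonup\mathbb{N}$ is said to be computed by a closed program $F$ under the following condition. For every $n$, the CEK run from the initial state $(F\,\underline{n},\emptyset,\mathsf{halt})$ must do one of two things. If $f(n)\!\downarrow$, it reaches a final state whose returned value decodes to $f(n)$. If $f(n)\!\uparrow$, it never reaches such a state, meaning it runs forever under $T_{\CEK}^{\mathcal{L}}$.
	\item \textbf{Build the programs.} By Kleene's characterization \cite{kleene1952metamath}, it suffices to give a program for each $\mu$-recursive function. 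I proceed by induction on its derivation:
	\begin{itemize}
		\item zero, successor and projections are directly definable values;
		\item composition is function application;
		\item primitive recursion and unbounded minimization are obtained from the general-recursion facility of $\mathcal{L}$ (or a call-by-value $Z$ combinator) together with branching on the encoded numeral.
	\end{itemize}
	For minimization, the program tests $g(\vec x,0),g(\vec x,1),\dots$ in sequence. Its divergence exactly when no zero exists, or when some earlier $g$-call diverges, matches the partial semantics.
	\item \textbf{Transfer to the machine.} I invoke the standard CEK correctness theorem. A closed program evaluates to a value $v$ under the big-step call-by-value semantics iff the CEK run reaches $(v',\rho,\mathsf{halt})$ with $v'$ representing $v$. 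A program fails to evaluate iff the run is infinite.
\end{enumerate}

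I expect the main obstacle to be Step 3, specifically the divergence side under the convention of Definition~\ref{def:cek}. That definition collapses errors and stuck states into $\mathbf{1}$, so I must check that the constructed programs never get stuck or raise errors on encoded inputs. Otherwise a nonterminating computation might be reported as a halt. I would handle this by typing the constructions: every branch scrutinizes only well-formed numerals, so no ``stuck'' state is reachable, and nontermination is genuine infinite stepping. I would also note that the quotation and \texttt{eval} facilities are not needed for this theorem; it holds for the $\eval$-free fragment, so extra primitives cannot break it.
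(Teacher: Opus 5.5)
Your proposal is correct and follows the paper's route: representability of the partial computable functions in a call-by-value language with branching and general recursion (Kleene), carried over to the machine by the standard correctness of the CEK machine for call-by-value evaluation. What you add is the explicit $\mu$-recursive induction and a check that the constructed programs never reach stuck or error states. Definition~\ref{def:cek} would collapse such states into $\mathbf{1}$ rather than divergence; the paper's two-line citation argument leaves this implicit, yet Corollary~\ref{cor:computable-realizable} relies on genuine divergence.
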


\begin{proof}
	A call-by-value language with branching and general recursion can represent partial computable functions under standard finite encodings \cite{kleene1952metamath}. The CEK machine is an operational evaluator for such a language: it makes the evaluation steps explicit while preserving the corresponding call-by-value evaluator \cite{FelleisenFriedman86,AgerEtAl03}. Therefore, for every partial computable function, there is a closed program of $\mathcal{L}$ whose CEK evaluation computes it, halting with the encoded output when the function is defined and otherwise diverging.
\end{proof}

The agentic evaluator observes the CEK machine only at model-call boundaries. Its states are the source-addressable CEK states whose next action is to call the primitive $\lm$ on an already evaluated prompt value. Resuming such a state supplies one completion value, after which execution is ordinary deterministic CEK evaluation until the next model-call boundary, halt, or divergence.

\begin{definition}[CEK agentic evaluator]\label{def:cek-agentic-evaluator}
	Fix a CEK evaluator as in Definition~\ref{def:cek}, and fix an interface $(P,C)$ with
	\[
		P\subseteq V,
		\qquad
		Q\subseteq C\subseteq V.
	\]
	Add a unary primitive $\lm$ to $\mathcal{L}$, yielding an extended language $\mathcal{L}^{\lm}$ and CEK state space
	\[
		\Sigma:=\Sigma_{\CEK}^{\mathcal{L}^{\lm}}.
	\]
	Quotations and \texttt{eval} extend to closed expressions of $\mathcal{L}^{\lm}$: if $E$ is closed, then $\ulcorner E\urcorner\in Q\subseteq C$ and evaluating \texttt{eval} $\,\ulcorner E\urcorner$ continues as evaluation of $E$; evaluating \texttt{eval} on a value outside $Q$ halts at $\mathbf{1}$.

	Let
	\[
		B_{\mathrm{raw}}:=\{(\lm q,\rho,\kappa)\in\Sigma : q\in P\}.
	\]
	For a closed term $E$ of $\mathcal{L}^{\lm}$ or a CEK state $\sigma\in\Sigma$, and a raw boundary state $\beta\in B_{\mathrm{raw}}$, write
	\[
		\beta \xleftarrow{\partial} E
		\qquad\text{or}\qquad
		\beta \xleftarrow{\partial} \sigma
	\]
	to mean that deterministic evaluation from that term or state first reaches the boundary $\beta$. The state space $B\subseteq B_{\mathrm{raw}}$ consists of the source-addressable boundary states: those $\beta\in B_{\mathrm{raw}}$ for which $\beta \xleftarrow{\partial} E$ for some closed term $E$.

	For a boundary state $\beta=(\lm q,\rho,\kappa)\in B$, define
	\[
		p(\beta):=q,
		\qquad
		r(\beta,c):=(c,\rho,\kappa).
	\]
	Define
	\[
		h:B\times C\to B\cup\{\mathbf{1},\uparrow\}
	\]
	as follows. If the deterministic CEK run from $r(\beta,c)$ first reaches the terminal point $\mathbf{1}$, or another finite stuck/error state, set $h(\beta,c)=\mathbf{1}$. If the run is infinite and never reaches $B_{\mathrm{raw}}\cup\{\mathbf{1}\}$, set $h(\beta,c)=\uparrow$. Otherwise, set
	\[
		h(\beta,c) \xleftarrow{\partial} r(\beta,c).
	\]
	The boundary reached in the last case is source-addressable by Lemma~\ref{lem:source-addressable-stability} below, so $h$ is well typed. The associated agentic machine is the \emph{CEK agentic evaluator}
	\[
		X_{\CEK}:=(B,p,h).
	\]
\end{definition}

\begin{lemma}[Source-addressability is stable under resumption]\label{lem:source-addressable-stability}
	Let $\beta\in B$ and $c\in C$. If the deterministic CEK run from $r(\beta,c)$ first reaches a raw boundary $\beta'\in B_{\mathrm{raw}}$, then $\beta'\in B$.
\end{lemma}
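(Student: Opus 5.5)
The plan is to transport the source-addressability witness of $\beta$ forward along the deterministic run, using quotation to freeze the completion value $c$ into source. Since $\beta\in B$, there is a closed term $E$ with $\beta\xleftarrow{\partial}E$. Write $\beta=(\lm q,\rho,\kappa)$. The state $r(\beta,c)=(c,\rho,\kappa)$ differs from $\beta$ only in its control component: the pending call $\lm q$ has been replaced by the value $c$. I will build a closed term $E_c$ whose deterministic CEK run reaches $(c,\rho,\kappa)$ without passing through any raw boundary. Determinism then gives the conclusion. The run from $E_c$ passes through $r(\beta,c)$, which by hypothesis first reaches $\beta'$. So the run from $E_c$ first reaches $\beta'$, that is, $\beta'\xleftarrow{\partial}E_c$ and hence $\beta'\in B$.

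To construct $E_c$, I take $E$ and replace the unique occurrence of the primitive $\lm$ that produces the boundary $\beta$ by a wrapper that, on its first invocation, returns $\texttt{eval}\,\quoteop(c)$. By the standing assumption on $\quoteop$, evaluating $\texttt{eval}\,\quoteop(c)$ deterministically yields $c$ without invoking the model. A cleaner alternative is to parametrize by the call count. Let $n$ be the number of boundaries the run from $E$ passes before reaching $\beta$; in fact $n=0$, because $\beta$ is by definition the first boundary reached from $E$. With $n=0$ the run from $E$ up to $\beta$ contains no $\lm$ steps at all. I therefore define $E_c$ as $E$ with $\lm$ rebound to $\lambda x.\,\texttt{eval}\,\quoteop(c)$. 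The rebinding can be done either by substitution of the free primitive or by a \texttt{let} that shadows it, whichever form the language $\mathcal{L}^{\lm}$ supports.

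The run from $E_c$ mirrors the run from $E$ step for step until the control reaches the application at $\beta$. At that point the substituted function is applied, the argument $q$ is discarded, and the quoted value is evaluated. The result is the value $c$ returned to continuation $\kappa$ with environment $\rho$. No raw boundary is passed, because the only $\lm$ reached has been replaced.

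The main obstacle is the bookkeeping in this mirroring step. The modified program must reach exactly the state $(c,\rho,\kappa)$ and not merely an observationally equivalent one. Shadowing $\lm$ alters the environment $\rho$, and inlining a $\lambda$ plus an \texttt{eval} adds continuation frames that must be fully popped. I would handle this by a simulation lemma relating states of the two runs. Related states must have equal continuations and environments that agree up to the rebinding of $\lm$. I would then argue that once $\texttt{eval}\,\quoteop(c)$ returns its value, the frames added by the wrapper are popped and the original $\kappa$ and $\rho$ are restored. If exact state equality proves too brittle, the fallback is a weaker property: the run from $E_c$ reaches a state whose subsequent deterministic run coincides with that from $(c,\rho,\kappa)$ up to $\beta'$. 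Such a state must agree with $(c,\rho,\kappa)$ on its pending continuation, control value, and relevant environment. That property already suffices, since it yields $\beta'\xleftarrow{\partial}E_c$ directly.
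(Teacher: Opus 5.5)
Your overall reduction is sound and matches the paper's skeleton: produce a closed term whose deterministic run arrives at $r(\beta,c)=(c,\rho,\kappa)$ without crossing a raw boundary, and let determinism carry that run on to $\beta'$. The gap is in how you build $E_c$.

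Rebinding or substituting $\lm$ throughout $E$ is a global change, and the code of $E$ does not go away at $\beta$. It lives on in the closures of $\rho$ and the pending frames of $\kappa$. Typically it is exactly this code that, after resumption, issues the call producing $\beta'$, and in $E_c$ that call hits your wrapper instead of the primitive. For example, take $E=\texttt{let } y=\lm\,q_1\texttt{ in let } z=\lm\,q_2\texttt{ in } z$. Resuming $\beta$ with $c$ reaches the boundary at $\lm\,q_2$. In $E_c$, by contrast, both calls return $c$ and the run halts without reaching any raw boundary, so $\beta'\xleftarrow{\partial}E_c$ is false.

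Your observation that no $\lm$ step precedes $\beta$ controls only the run up to $\beta$; the damage occurs afterwards. Consequently your simulation ``up to the rebinding of $\lm$'' breaks exactly at the step that should yield $\beta'$. The fallback fails for the same reason, and in any case the states reached carry the extra binding, so none of them is literally $\beta'$. The other variants do not help either:
\begin{itemize}
\item Replacing a single syntactic occurrence fails because that occurrence may be re-executed later, e.g.\ inside a recursive function.
\item A wrapper that behaves differently on its first invocation needs mutable state, which $\mathcal{L}$ is not assumed to have, and it would again leave traces in $\rho$ and $\kappa$.
\end{itemize}

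The missing idea is to alter only the control component and leave $\rho$ and $\kappa$ untouched, so that every later model call still reaches the primitive. The paper does this via the standard CEK/context correspondence: it reads $(\rho,\kappa)$ back as a closed evaluation context $K_\beta[-]$ and evaluates the closed term $K_\beta[\texttt{eval}\,\quoteop(c)]$. The inserted term evaluates to $c$ without invoking $\lm$. After that the run agrees with the one from $r(\beta,c)$, so its first boundary is $\beta'$. In that argument your witness $E$ serves only to guarantee that $(\rho,\kappa)$ is closed, not as a program to be edited. The exact-state bookkeeping you worry about is delegated to the context correspondence.
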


\begin{proof}
	Write $\beta=(\lm q,\rho,\kappa)$. Since $\beta$ is source-addressable, it is reached from a closed term, so the environment and continuation at $\beta$ represent a closed evaluation context $K_{\beta}[-]$ in the standard CEK/context correspondence. Let
	\[
		F_c := \texttt{eval }\,\quoteop(c).
	\]
	The expression $F_c$ is closed and evaluates deterministically to the value $c$ without invoking $\lm$. Therefore deterministic evaluation of the closed term $K_{\beta}[F_c]$ agrees, after this initial value computation, with the CEK run from $r(\beta,c)=(c,\rho,\kappa)$. If that resumed run first reaches $\beta'$, then $\beta'$ is the first model-call boundary reached by the closed term $K_{\beta}[F_c]$. Hence $\beta'$ is source-addressable by definition, so $\beta'\in B$.
\end{proof}

The source-addressability restriction ensures that every state in $B$ can be loaded by evaluating a closed quotation, and Lemma~\ref{lem:source-addressable-stability} ensures that resuming a source-addressable boundary with a fixed completion cannot leave the state space of the agentic evaluator except by halt or divergence.

Thus the agentic evaluator contributes exactly one externally controlled step,
\[
	(\lm q,\rho,\kappa)\longmapsto (c,\rho,\kappa),
\]
after which all further computation is ordinary deterministic CEK evaluation until the next boundary, halt, or divergence \cite{LeroyGrall09}. The word ``evaluator'' emphasizes that this object is not itself a hand-written agent loop; it is the boundary-to-boundary behavior of a generic program evaluator around model calls.

\subsection{SPE states and universality}\label{app:theory:spe-states}

Throughout this section assume that the prompt space is nonempty.

\begin{definition}[SPE state]\label{app:def:spe-state}
	Let $X=(S,p,h)$ be an agentic machine. A state $x\in S$ is an \emph{SPE state of $X$} if
	\[
		(X,x)
	\]
	completion-generates $X$ itself. More generally, for a class $\mathcal{A}$ of agentic machines over the same interface, $x$ is \emph{universal for $\mathcal{A}$} if $(X,x)$ completion-generates every $X'\in\mathcal{A}$.
\end{definition}

\begin{proposition}[SPE states under embeddings]\label{prop:spe-state-invariance}
	Let $X=(S_X,p_X,h_X)$ and $Y=(S_Y,p_Y,h_Y)$ be agentic machines over the same interface, let
	\[
		e:S_X\to S_Y
	\]
	be an embedding, and let $x\in S_X$. If $y=e(x)$ is an SPE state of $Y$, then $x$ is an SPE state of $X$.
\end{proposition}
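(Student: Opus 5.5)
The plan is to reduce the statement to the two propositions already proved in this subsection: downward closure (Proposition~\ref{prop:cg-downward}) and source-invariance of completion-generation (Proposition~\ref{prop:cg-invariance}).

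\textbf{Step 1.} Since $y=e(x)$ is an SPE state of $Y$, the pair $(Y,y)$ completion-generates $Y$ by Definition~\ref{app:def:spe-state}.

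\textbf{Step 2.} The map $e$ is an embedding of $X$ into $Y$. Applying Proposition~\ref{prop:cg-downward}, with ambient machine $Y$, generated machine $Y$, and embedded target $X$, shows that $(Y,y)$ completion-generates $X$.

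\textbf{Step 3.} Apply Proposition~\ref{prop:cg-invariance} with the roles assigned as follows. The ambient machine is $Y$, the embedded source machine is $X$, the embedding is $e\colon S_X\to S_Y$, the basepoint is $x$ with $e(x)=y$, and the target is $Z=X$. The ``$\Rightarrow$'' direction of that equivalence turns ``$(Y,e(x))$ completion-generates $X$'' into ``$(X,x)$ completion-generates $X$.'' By definition, this says $x$ is an SPE state of $X$.

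\textbf{Main obstacle.} The substance lies entirely in the converse half of Proposition~\ref{prop:cg-invariance}, which has already been proved. The key point there is that every one-step successor $h_Y(e(x),c)=e(h_X(x,c))$ lies in $\operatorname{im}(e)$, so the witnessing embedding $g\colon S_X\to S_Y$ factors through $e$. Injectivity of $e$ then lets us pull $g$ back to a self-embedding $e^{-1}\circ g$ of $X$ with image in $\Reach_X(x)$.

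What remains is bookkeeping: checking that the role assignment matches the hypotheses of each proposition, and that the two machines share an interface. If one preferred to avoid citing Proposition~\ref{prop:cg-invariance}, one could instead write out this factorization directly and verify the prompt and transition conditions for $e^{-1}\circ g$ by cancelling $e$.
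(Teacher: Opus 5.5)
Your proposal is correct and follows the paper's proof exactly: downward closure (Proposition~\ref{prop:cg-downward}) gives that $(Y,y)$ completion-generates $X$, and the ambient-to-source direction of Proposition~\ref{prop:cg-invariance} then gives that $(X,x)$ completion-generates $X$. Your role assignment and your identification of the relevant direction are both right.
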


\begin{proof}
	Since $(Y,y)$ completion-generates $Y$ and $e$ embeds $X$ into $Y$, Proposition~\ref{prop:cg-downward} implies that $(Y,y)$ completion-generates $X$; Proposition~\ref{prop:cg-invariance} then implies that $(X,x)$ completion-generates $X$.
\end{proof}

The intrinsic SPE property is the first clause: the model can, by choosing a completion, reach an embedded copy of any state of the same machine. The universal clause is useful for comparing a single evaluator state to other agentic machines.

\begin{theorem}\label{thm:main}
	The CEK agentic evaluator contains an SPE state.
\end{theorem}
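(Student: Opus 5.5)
The plan is to take $x^*$ from \eqref{eq:spe-seed}, i.e.\ the first boundary reached by evaluating \texttt{let } $y=\lm\,q$ \texttt{ in } $\eval(y)$ for a fixed $q\in P$ (nonempty by assumption), and to show that $(X_{\CEK},x^*)$ completion-generates $X_{\CEK}$ via the identity map $e=\mathrm{id}_B$. The identity is trivially an injective embedding of $X_{\CEK}$ into itself: prompt preservation and transition commutation hold on the nose. So the whole content of the proof is the reachability claim $B\subseteq\Reach_{X_{\CEK}}(x^*)$.

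First I will check that $x^*$ is a legitimate state. Evaluating the seed closed term first evaluates the bound expression $\lm\,q$, with $q$ already a value, so the first boundary reached is $x^*=(\lm\,q,\rho_0,\kappa_{\mathrm{let}})$. Here $\kappa_{\mathrm{let}}$ is the continuation ``bind the result to $y$, then evaluate $\eval(y)$'' on top of the empty continuation. Since it is reached from a closed term, $x^*\in B$.

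Next, fix an arbitrary $\beta\in B$. By source-addressability there is a closed term $E_\beta$ with $\beta\xleftarrow{\partial}E_\beta$. Take the completion $c:=\ulcorner E_\beta\urcorner\in Q\subseteq C$. Then $r(x^*,c)=(c,\rho_0,\kappa_{\mathrm{let}})$. Running the CEK machine from this state binds $y$ to $c$ and evaluates $\eval(y)$, which is $\eval\,\ulcorner E_\beta\urcorner$. By the assumed semantics of \texttt{eval}, this continues as evaluation of $E_\beta$. None of these administrative steps invokes $\lm$, so no raw boundary is hit before evaluation of $E_\beta$ begins. Thereafter the run is the evaluation of $E_\beta$, inside a continuation that is empty apart from whatever \texttt{eval} leaves behind, and it first reaches $\beta$. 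Hence $h(x^*,c)=\beta$, so $\beta\in\Reach(x^*)$.

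The main obstacle is this last identification. The boundary reached from $\eval\,\ulcorner E_\beta\urcorner$ inside the seed's context must be literally the state $\beta=(\lm q',\rho,\kappa)$, not a version with an extra continuation frame or a different environment. My first approach is to read the \texttt{eval} assumption (``continues as evaluation of $E$'') as a tail transition: the machine moves to $(E,\rho_\emptyset,\kappa)$, where $\kappa$ is the current continuation. Since $\eval(y)$ is in tail position of the \texttt{let} body, the current continuation is the empty one, so the state coincides with the initial state of evaluating $E_\beta$ from scratch. Determinism of $T_{\CEK}$ then gives the same first boundary $\beta$. If the formalism instead leaves residual frames, I would absorb them by choosing $E_\beta$ up to the CEK/context correspondence used in Lemma~\ref{lem:source-addressable-stability}. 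That is, I would use the fact that $\beta$'s $(\rho,\kappa)$ represents a closed context $K_\beta$, and pick the quoted term so that its evaluation under the seed's continuation reaches exactly $(\lm q',\rho,\kappa)$. With reachability established for every $\beta\in B$, the identity embedding witnesses that $x^*$ is an SPE state, which proves Theorem~\ref{thm:main} and with it Theorem~\ref{thm:spe}.
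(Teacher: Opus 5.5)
Your proposal is correct and is essentially the paper's proof: the same seed, the same completion $\ulcorner E_\beta\urcorner$, and the same conclusion $\Reach_{X_{\CEK}}(x^*)=B$ with the identity embedding (the paper only differs in writing the prompt as a closed value expression $Q_{\ast}$ denoting $q_{\ast}$, since a value need not be source syntax). Your tail-transition reading of \texttt{eval} (empty environment, current continuation, which is empty because \texttt{eval} $y$ is in tail position) is exactly what the paper's step ``continues as deterministic evaluation of $E_\beta$'' silently assumes. Your residual-frame fallback is not needed, and as phrased it would not work, because no quoted term can remove a frame lying beneath its own evaluation; in that formalism the natural repair is a non-identity self-embedding sending $\beta$ to $\beta$ with the extra frame appended, which the paper's remark following the proof explicitly allows.
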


\begin{proof}
	Fix a prompt value $q_{\ast}\in P$, choose a closed value expression $Q_{\ast}$ denoting it (for example, \texttt{eval} $\,\quoteop(q_{\ast})$), and let $x_0\in B$ be defined by
	\[
		x_0 \xleftarrow{\partial} \mathsf{Seed}_{\ast},
		\qquad
		\mathsf{Seed}_{\ast}:=\texttt{let } y = \lm\, Q_{\ast} \texttt{ in eval } y.
	\]
	This boundary exists because $Q_{\ast}$ evaluates to a prompt value and the next control expression is a call to $\lm$.

	We show that
	\[
		\Reach_{X_{\CEK}}(x_0)=B.
	\]
	The inclusion $\Reach_{X_{\CEK}}(x_0)\subseteq B$ holds by definition of $\Reach$ for the agentic machine $X_{\CEK}=(B,p,h)$. For the reverse inclusion, let $\beta\in B$ be arbitrary. Since $B$ consists of source-addressable boundary states, choose a closed term $E_\beta$ with $\beta \xleftarrow{\partial} E_\beta$, and set
	\[
		c_\beta:=\ulcorner E_\beta\urcorner\in Q\subseteq C.
	\]
	When $x_0$ is resumed with completion $c_\beta$, the unique external return step binds $y$ to $c_\beta$ in the continuation of $\mathsf{Seed}_{\ast}$. The evaluator then runs \texttt{eval} $\,c_\beta$, which continues as deterministic evaluation of $E_\beta$. By the choice of $E_\beta$, the first boundary reached is $\beta$, so
	\[
		h(x_0,c_\beta)=\beta.
	\]
	Hence $B\subseteq \Reach_{X_{\CEK}}(x_0)$.

	The identity map on $B$ is an embedding of $X_{\CEK}$ into itself, and the equality above gives
	\[
		\operatorname{im}(\mathrm{id}_B)=B\subseteq\Reach_{X_{\CEK}}(x_0).
	\]
	Thus $(X_{\CEK},x_0)$ completion-generates $X_{\CEK}$. Therefore $x_0$ is an SPE state of $X_{\CEK}$ by Definition~\ref{app:def:spe-state}.
\end{proof}

The proof uses the identity embedding, so the exhibited seed state reaches all of $B$. That is stronger than the definition requires. An SPE state only needs to reach the image of some self-embedding $e:B\to B$; if the evaluator admits a proper self-embedded copy of itself, an SPE state can have $\Reach(x)$ strictly smaller than $B$ while still completion-generating the whole evaluator.

\begin{definition}[Realizable agentic machine]\label{def:realizable}
	Let $X'=(S',p',h')$ be an agentic machine over $(P,C)$. We say that $X'$ is \emph{realizable in the underlying CEK evaluator} if there exist:
	\begin{itemize}
		\item an injective encoding
		      \[
			      \enc:S'\to V,
		      \]
		\item a pure deterministic program $\mathsf{Prompt}(z)$ such that evaluating $\mathsf{Prompt}(z)$ with $z=\enc(s')$ yields $p'(s')$,
		\item a first-class result type $R$ with constructors
		      \[
			      \mathsf{next}:V\to R,
			      \qquad
			      \mathsf{halt}\in R,
		      \]
		      together with deterministic case analysis on $R$,
		\item a pure deterministic program $\mathsf{Step}(z,c)$ such that, when $z=\enc(s')$, it yields $\mathsf{next}(\enc(t'))$ if $h'(s',c)=t'\in S'$, it yields $\mathsf{halt}$ if $h'(s',c)=\mathbf{1}$, and it diverges without invoking $\lm$ if $h'(s',c)=\uparrow$.
	\end{itemize}
\end{definition}

In other words, each target function is realized by pure CEK code on an encoded state. The only model call in the simulated loop is the single call to $\lm$ made by the ambient agentic evaluator. The tagged result type $R$ is only a convenient first-class way to distinguish ``continue with this encoded state'' from ``halt''; any equivalent tagging convention would do.

\begin{corollary}[Computable machines are realizable]\label{cor:computable-realizable}
	Under standard finite encodings, an agentic machine $X'=(S',p',h')$ is realizable in the underlying CEK evaluator if $p'$ is computable and $h'$ is computable as a partial procedure that returns encoded next states or halt and diverges exactly on inputs for which $h'(s',c)=\uparrow$.
\end{corollary}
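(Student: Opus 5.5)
The plan is to build the four data of Definition~\ref{def:realizable} directly from the computability hypotheses, using Theorem~\ref{thm:cek-turing-complete} as the only substantive input.

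\textbf{Encoding.} I take $\enc:S'\to V$ to be the standard finite encoding under which $p'$ and $h'$ are assumed computable. For instance, I send $s'$ to the numeral or string value of its code. This map is injective because codes of distinct states are distinct. Prompts in $P$ and completions in $C$ are likewise handled through their standard codes, and since $P,C\subseteq V$ I assume these agree with the evaluator's own value representation.

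\textbf{Prompt and Step.} Since $p'$ is total computable, Theorem~\ref{thm:cek-turing-complete} gives a closed program of $\mathcal{L}$ computing it, and I take $\mathsf{Prompt}(z)$ to be that program applied to $z$. It is pure: it is written in $\mathcal{L}$, not in $\mathcal{L}^{\lm}$, so it never invokes $\lm$. For $h'$ I use the hypothesized partial procedure $H(s',c)$, which returns a tagged output meaning either ``next state with code $t$'' or ``halt'', and which diverges exactly when $h'(s',c)=\uparrow$. Theorem~\ref{thm:cek-turing-complete} gives an $\mathcal{L}$-program computing $H$ under encodings, diverging precisely where $H$ does. I then define $\mathsf{Step}(z,c)$ as: run this program, then branch on the tag, returning $\mathsf{next}(t)$ in the first case and $\mathsf{halt}$ in the second. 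The result type $R$ and its case analysis come from the first-class value domain together with branching. For instance, I can encode $\mathsf{next}(v)$ as a pair $(0,v)$ and $\mathsf{halt}$ as $(1,\ast)$, which gives deterministic case analysis by testing the first component. The checks are then routine. If $h'(s',c)=t'$, the output is $\mathsf{next}(\enc(t'))$, because the code of $t'$ is exactly $\enc(t')$. Halt maps to $\mathsf{halt}$. In the divergent case the run diverges, and it does so without invoking $\lm$, since the program lies in pure $\mathcal{L}$.

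\textbf{Main obstacle.} The delicate point is the interface between ``computable under standard encodings'' and the language's actual first-class values. The hypothesis speaks of encoded states, prompts and completions, while $\mathsf{Prompt}$ must yield $p'(s')$ itself as a value in $V$, and $\mathsf{Step}$ receives the value $c$ rather than its code. I would handle this by noting that the standard encoding and decoding between $V$-values (in particular elements of $P$ and $C$) and codes are themselves computable and realized by pure programs. Composing with them turns the computed code into the actual value and turns the input value into a code. Divergence is preserved exactly under these compositions, because they are total.
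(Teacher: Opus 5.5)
Your proposal is correct and follows the paper's own proof. Both invoke Theorem~\ref{thm:cek-turing-complete} to obtain pure programs for the encoded prompt function and the encoded harness procedure, use them as $\mathsf{Prompt}$ and $\mathsf{Step}$, and let the result type $R$ tag next-state versus halt outputs. The additional points you check (injectivity of $\enc$, that these programs never call $\lm$, and the translation between values and their codes) spell out what the paper leaves implicit under ``standard finite encodings.''
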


\begin{proof}
	By Theorem~\ref{thm:cek-turing-complete}, the CEK evaluator has programs computing the encoded prompt function and the encoded harness procedure. Use those programs as $\mathsf{Prompt}$ and $\mathsf{Step}$ in Definition~\ref{def:realizable}, with the result type $R$ tagging next-state and halt outputs.
\end{proof}

\begin{proposition}[Embedding CEK-realizable machines]\label{prop:realizable-embeds}
	Every agentic machine $X'$ over $(P,C)$ that is realizable in the underlying CEK evaluator embeds into $X_{\CEK}$.
\end{proposition}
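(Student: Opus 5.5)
The embedding is built from a fixed ``driver'' program that simulates $X'$ through the ambient model call. Fix the realizing data $\enc$, $\mathsf{Prompt}$, $\mathsf{Step}$, $R$ from Definition~\ref{def:realizable}, and use general recursion in $\mathcal{L}^{\lm}$ to define the closed recursive function
\[
	\mathsf{Loop}(z) := \texttt{let } c = \lm\,(\mathsf{Prompt}(z)) \texttt{ in case } \mathsf{Step}(z,c) \texttt{ of } \mathsf{next}(z')\Rightarrow \mathsf{Loop}(z') \mid \mathsf{halt}\Rightarrow \mathsf{halt}.
\]
For $s'\in S'$, let $E_{s'}$ be the closed term $\mathsf{Loop}(\texttt{eval}\,\quoteop(\enc(s')))$, and define $e(s')$ by $e(s')\xleftarrow{\partial}E_{s'}$. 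This is the first boundary reached from $E_{s'}$. Evaluating the argument and $\mathsf{Prompt}$ is pure, halting, and $\lm$-free, so this boundary exists and has the form $(\lm\,p'(s'),\rho_{s'},\kappa_{\mathrm{Loop}})$. In particular $e(s')\in B$ by source-addressability, and $p(e(s'))=p'(s')$ holds immediately.

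For transition commutation, I will resume $e(s')$ with a completion $c$ and follow the deterministic CEK run. It binds $c$ and runs $\mathsf{Step}(\enc(s'),c)$, which is pure. There are three cases.
\begin{enumerate}
	\item If $h'(s',c)=\uparrow$, then $\mathsf{Step}$ diverges without calling $\lm$, so $h(e(s'),c)=\uparrow$.
	\item If the result is $\mathsf{halt}$, the program returns a value, and the run reaches $\mathbf{1}$.
	\item If the result is $\mathsf{next}(\enc(t'))$, the run makes the tail call $\mathsf{Loop}(\enc(t'))$ and proceeds to the next $\lm$ boundary.
\end{enumerate}
The delicate point is the third case. I must show that the boundary reached this way \emph{equals} $e(t')$ as a CEK triple, not merely a state with the same prompt. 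To get this, I will make the environment and continuation canonical. The call is in tail position and is compiled by a proper-tail-call CEK, so the continuation at every boundary is the same fixed frame $\kappa_{\mathrm{Loop}}$, sitting on top of the top-level continuation of $E_{s'}$. The environment at the boundary depends only on the value $z=\enc(t')$ together with the closed closure for $\mathsf{Loop}$. Hence the run from the resumed state and the run from $E_{t'}$ converge to the same CEK state once $\mathsf{Loop}$ is entered with argument $\enc(t')$. If tail-call canonicity is doubtful for the abstract CEK, the fallback is to write the loop so that the continuation after $\mathsf{Step}$ explicitly performs $\texttt{eval}$ of a freshly quoted $E_{t'}$ in the empty environment. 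This \emph{reloads} a canonical state, and the first boundary it reaches is by definition $e(t')$. This reload trick, in the spirit of the proof of Theorem~\ref{thm:main}, is what I expect to rely on. It requires placing the reload in tail position at the top-level continuation, so that the continuation is literally the one produced by evaluating $E_{t'}$.

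Injectivity comes last. If $e(s_1')=e(s_2')$, the environments agree, so the bound values $\enc(s_1')=\enc(s_2')$ agree, and injectivity of $\enc$ gives $s_1'=s_2'$. The main obstacle is the third-case identification of the reached boundary with $e(t')$. It requires committing to the canonical-continuation and tail-call (or reload) convention so that the CEK states agree on the nose; halt, divergence, and prompt preservation are routine by comparison.
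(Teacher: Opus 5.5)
Your proposal is correct and is essentially the paper's proof. It uses the same tail-recursive $\mathsf{Loop}$ wrapper, the same seed terms $E_{s'}$, the same three-way case split on $\mathsf{Step}$, and the same injectivity argument through the retained binding of $z$, and the paper settles your ``delicate point'' with the same tail-call observation: the resumed run continues as evaluation of $E_{t'}$ once its initial binding is discharged. Commit to that tail-call route rather than the reload fallback, because building $\ulcorner E_{t'}\urcorner$ at run time would need syntax construction and a self-referential copy of the loop, neither of which is among the appendix's hypotheses on $\mathcal{L}$.
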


\begin{proof}
	Fix an arbitrary realizable agentic machine $X'=(S',p',h')$. If $S'=\varnothing$, the empty map is an embedding, so assume $S'\neq\varnothing$. Fix a realization $\enc$, $\mathsf{Prompt}$, and $\mathsf{Step}$ as in Definition~\ref{def:realizable}.

	Define a tail-recursive source-level wrapper $\mathsf{Loop}(z)$ schematically as
	\[
		\begin{array}{l}
			\mathsf{Loop}(z) :=\; \texttt{let } q = \mathsf{Prompt}(z) \texttt{ in} \\
			\qquad\texttt{let } c = \lm\,q \texttt{ in}                             \\
			\qquad\texttt{case } \mathsf{Step}(z,c) \texttt{ of}                    \\
			\qquad\quad \mathsf{next}(z') \Rightarrow \mathsf{Loop}(z')             \\
			\qquad\quad \mathsf{halt} \Rightarrow v_{\mathsf{halt}},
		\end{array}
	\]
	where $v_{\mathsf{halt}}$ is any fixed terminal value. This term is well formed because the language has ordinary recursion and case analysis. The important point is that at the boundary for $\lm\,q$, the variable $z$ occurs free in the continuation
	\[
		\texttt{let } c = [\,] \texttt{ in case } \mathsf{Step}(z,c) \texttt{ of } \cdots,
	\]
	so the CEK environment/continuation retains the binding of $z$ across the model call.

	For each $s'\in S'$, let
	\[
		E_{s'}:=\texttt{let } z = \texttt{eval }\,\quoteop(\enc(s')) \texttt{ in } \mathsf{Loop}(z),
	\]
	a closed term of $\mathcal{L}^{\lm}$, and define $e(s')\in B$ by $e(s') \xleftarrow{\partial} E_{s'}$. Since evaluating \texttt{eval} $\,\quoteop(\enc(s'))$ deterministically yields $\enc(s')$, and since $\mathsf{Prompt}$ is pure and terminating on encoded states, this boundary exists and satisfies
	\[
		p(e(s'))=p'(s').
	\]

	Now resume $e(s')$ with a completion $c\in C$. By construction, the resumed CEK run continues exactly as evaluation of $\mathsf{Loop}(z)$ with $z=\enc(s')$ after the call to $\lm$ has returned $c$. If $h'(s',c)=\uparrow$, then $\mathsf{Step}(\enc(s'),c)$ diverges, so the resumed CEK run diverges before the next boundary and hence
	\[
		h(e(s'),c)=\uparrow.
	\]
	If $h'(s',c)=\mathbf{1}$, then $\mathsf{Step}(\enc(s'),c)$ yields $\mathsf{halt}$, so the resumed run reaches the terminal point $\mathbf{1}$. Finally, if $h'(s',c)=t'\in S'$, then $\mathsf{Step}(\enc(s'),c)$ yields $\mathsf{next}(\enc(t'))$, and the wrapper tail-calls $\mathsf{Loop}(z')$ with $z'=\enc(t')$. Equivalently, it continues as deterministic evaluation of the closed term $E_{t'}$ after its initial \texttt{let}-binding has been discharged, so the next boundary reached is exactly $e(t')$. Therefore
	\[
		h(e(s'),c)=e(h'(s',c))
	\]
	for all $s'\in S'$ and $c\in C$, with the conventions $e(\mathbf{1})=\mathbf{1}$ and $e(\uparrow)=\uparrow$.

	It remains only to check that $e$ is injective. At the boundary state $e(s')$, the control component is the same syntactic model call template $\lm\,q$, and the continuation has the displayed form with $z$ free in it; the environment binds this retained variable to $\enc(s')$. If $e(s'_1)=e(s'_2)$, then the two CEK states have the same environment and hence the same retained value for $z$. Thus $\enc(s'_1)=\enc(s'_2)$, and since $\enc$ is injective, $s'_1=s'_2$. Therefore $e$ is an embedding of $X'$ into $X_{\CEK}$.
\end{proof}

\begin{corollary}[Universal seed]\label{cor:universal-seed}
	The seed state $x_0$ from the proof of Theorem~\ref{thm:main} completion-generates every agentic machine over $(P,C)$ that is realizable in the underlying CEK evaluator. In particular, under standard finite encodings, it completion-generates every agentic machine whose prompt function and harness procedure are computable in the sense of Corollary~\ref{cor:computable-realizable}.
\end{corollary}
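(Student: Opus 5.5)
The plan is to combine the two results already in hand: the proof of Theorem~\ref{thm:main} shows that the seed state $x_0$ satisfies $\Reach_{X_{\CEK}}(x_0)=B$, and Proposition~\ref{prop:realizable-embeds} embeds every realizable machine into $X_{\CEK}$.

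Fix an arbitrary agentic machine $X'=(S',p',h')$ over $(P,C)$ that is realizable in the underlying CEK evaluator. By Proposition~\ref{prop:realizable-embeds} there is an embedding $e:S'\to B$ of $X'$ into $X_{\CEK}$. Since $\operatorname{im}(e)\subseteq B=\Reach_{X_{\CEK}}(x_0)$, the map $e$ itself witnesses that $(X_{\CEK},x_0)$ completion-generates $X'$ in the sense of Definition~\ref{def:completion-generate}.

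There is an equivalent route through the existing propositions. Theorem~\ref{thm:main} gives that $(X_{\CEK},x_0)$ completion-generates $X_{\CEK}$ via the identity embedding. Proposition~\ref{prop:cg-downward}, applied with $Y=X_{\CEK}$ and $Z=X'$ and the embedding from Proposition~\ref{prop:realizable-embeds}, then transfers completion-generation to $X'$. I would present the direct argument and mention this route as a remark.

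The second sentence follows by chaining with Corollary~\ref{cor:computable-realizable}. That corollary says that any machine whose prompt function is computable and whose harness procedure is computable in the stated partial sense (diverging exactly where $h'=\uparrow$) is realizable, so the first part applies to it.

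The only point needing care is that $x_0$ reaches \emph{all} of $B$, not merely the identity image. The proof of Theorem~\ref{thm:main} establishes exactly this equality, so there is no real obstacle here. The substantive work, namely that the wrapper $\mathsf{Loop}$ yields source-addressable boundaries commuting with $h'$, was already done in Proposition~\ref{prop:realizable-embeds}. As a sanity check, the states $e(s')$ are defined via the closed terms $E_{s'}$, so they lie in $B$ by construction. The edge case $S'=\varnothing$ is trivial, since the empty map works.
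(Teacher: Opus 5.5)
Your proposal is correct and matches the paper's proof, which cites Theorem~\ref{thm:main} and Proposition~\ref{prop:realizable-embeds} and then applies downward closure (Proposition~\ref{prop:cg-downward}) for the first claim and Corollary~\ref{cor:computable-realizable} for the second. Your direct argument via $\operatorname{im}(e)\subseteq B=\Reach_{X_{\CEK}}(x_0)$ is just Proposition~\ref{prop:cg-downward} unfolded with the identity embedding.
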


\begin{proof}
	By Theorem~\ref{thm:main}, $(X_{\CEK},x_0)$ completion-generates $X_{\CEK}$; by Proposition~\ref{prop:realizable-embeds}, each realizable $X'$ embeds into $X_{\CEK}$; Proposition~\ref{prop:cg-downward} gives the first claim, and Corollary~\ref{cor:computable-realizable} gives the second.
\end{proof}

\begin{remark}[\Spell{} and SPE states]\label{rem:spell}
	\Spell{} is an instance in which quotations are first-class values accepted by \texttt{eval}, so the seed term \texttt{let y = lm ... in eval y} is literal. Under \Spell{}'s trailing-expression discipline, turn-producing expressions fire only in trailing position: appending any further expression deactivates an earlier effectful turn. This makes it natural for successive \texttt{!extend} turns to remain within SPE states, in the sense of Definition~\ref{app:def:spe-state}.
\end{remark}

\begin{remark}[Non-SPE states]\label{rem:nonspe}
	SPE is a property of states, not of the evaluator as a whole. Let $Y$ be a realizable agentic machine and let $e_Y:Y\to X_{\CEK}$ be its embedding from Proposition~\ref{prop:realizable-embeds}. Proposition~\ref{prop:cg-invariance}, together with downward closure, implies that a non-SPE state $y\in S_Y$ maps to a non-SPE state $e_Y(y)$ inside $X_{\CEK}$.

	For example, the one-state machine whose harness diverges after every completion is realizable but not SPE, since its one-step reachable set is empty. Its image in $X_{\CEK}$ is therefore a non-SPE boundary state. Thus an agentic evaluator can be self-programmed into turns that are not themselves SPE states, including harness-level or subagent-like turns.
\end{remark}


\clearpage
\section{Self-programmed execution language for LMs}
\label{app:language}

\subsection{Overview}\label{app:language:overview}

\Spell{} v0.1.0 is a language for self-programmed execution embedded within Clojure, a modern dialect of Lisp. The central mechanism is the self-call: a running \Spell{} program passes a program prefix to an LM for completion and evaluates the result. Often, the prefix sent for completion is computed from the program's own source code. To support this pattern, \Spell{} provides explicit self-reference operations, gates effectful expressions behind a double-evaluation mechanism, enforces strict scoping rules, and provides an idiomatic wrapper for programs that edit and re-run themselves recursively.

This appendix is organized as follows. Section~\ref{app:language:principles} states the design principles behind \Spell{} and explains how the main language mechanisms resolve tensions among those principles. Section~\ref{app:language:background} places \Spell{} in the context of Lisp, reflection, and metaprogramming. Sections~\ref{app:language:core-semantics}--\ref{app:language:error-recovery} describe the core language mechanisms: self-calls, visible reasoning text and provider-side reasoning tokens, quines, double evaluation, the completion wrapper, local runtime environments, context management, and recovery from invalid LM-written programs. Sections~\ref{app:language:concurrent-agents-and-inter-agent-communication}--\ref{app:language:prebuilt-orchestration-patterns} describe concurrent agents, messaging, ordinary language features, and prebuilt orchestration patterns. Sections~\ref{app:language:prompts-and-internal-documentation}--\ref{app:language:spell-runtime} document the prompt surface, namespace guides, transport variants, agent/provider configuration, and runtime implementation used by the experiments.

\subsection{Design principles}\label{app:language:principles}

\Spell{} was designed around three principles closely connected to SPE itself. Other language designs for SPE are possible, but I argue that such alternatives should respect similar principles and solve similar challenges. 

\begin{enumerate}[leftmargin=*]
	\item \textbf{Orchestration is ordinary program logic.} It should be easy for the model to emulate common orchestration policies, such as a ReAct-style loop, and to generalize them in arbitrary ways. This is a general principle of programming languages: complex programs are expressed by composing modular logical components.
	
	\item \textbf{The completion is the program.} Any part of the model's completion which is not a part of the executable program cannot be addressed by the program and therefore cannot be passed as context through the turn boundary, except by quoting it in expensive output tokens. For this reason, the prefix in particular should be executed as part of the program.

	\item \textbf{The model controls what its program does.} The term ``self-programmed'' refers not only to who writes the source code but also to who determines its behavior. This principle has important consequences because often, a model writes only part of its completion; its prefix is already given, and when it makes a self-call, it evaluates an inner program written by a different instance.
\end{enumerate}

These principles explain why the three practical challenges in the main text are problematic, and they motivate certain solutions. First, \emph{context persistence as code} is the challenge that a program must carry its own future context forward as executable source. This need arises due to Principle~1, since context persistence is a feature of common orchestration policy, and Principle~2 creates the challenge that this context is itself the code which is responsible for its own persistence. This motivates the use of Lisp because Lisp makes it easy and idiomatic to manipulate source code programmatically, unlike most other languages. It also motivates the \lstinline!quine! form, because a program that edits or extends itself needs a reliable source-level handle on its own code.

Second, \emph{replay-safe effects} is the challenge that replaying source as context must not replay old tool calls or self-calls. By Principle~2, the prefix supplied to the model is part of the program, but the current model instance did not write that prefix; if the prefix directly performs effects, then those effects run again without being chosen, violating Principle~3. Principle~1 also implies that the solution should be part of the structure of the program; in particular, this rules out a stateful interpreter that tracks what expressions were previously evaluated. \Spell{} solves this problem using effect function gating and the trailing expression pattern, a natural structural solution though not necessarily the only one.

Third, \emph{turn-boundary interference} is the challenge that when a program written by model A performs a self-call and evaluates the program written by model B, neither A nor B controls or even sees the other's program. This creates tension with Principle~3 if, for example, program B could overwrite a binding in the runtime environment of program A. \Spell{} therefore evaluates self-calls in fresh local environments: a child turn neither inherits nor mutates the parent environment, and the only durable medium across the turn boundary is source code written into the prefix. This is also why \Spell{} does not feature closures; a closure is an opaque function value with an attached environment, and the only way to pass such objects across the turn boundary is to serialize their dependencies as source code, which defeats the point. Closures are not inherently problematic, however, and may be useful in other designs.

Various design choices in \Spell{} are motivated not by any particular principle but rather by what behaviors they elicit from existing models. This is similar to programming languages for humans, which are always designed around programmer behavior to some extent. Such choices are most likely to change in the future, as model behaviors evolve.

\subsection{Background}\label{app:language:background}

\Spell{} belongs to a tradition of self-referential languages and programs. Theoretical work on self-referential programs includes Kleene's second recursion theorem \citep{kleene1952metamath}, and Gödel famously studied self-referential statements \citep{goedel1931}. With the creation of Lisp \citep{mccarthy1960lisp}, it became practical and idiomatic to write programs that transform programs, specifically because in Lisp, programs and data share the same representation (this property is \emph{homoiconicity}). A powerful application of this approach is to customize the language itself, giving rise to Lisp dialects. Homoiconicity is also useful when writing programs that reference or reflect upon their own source code; the computing term \emph{quine} was introduced by Hofstadter for such programs, after W. V. O. Quine's work on indirect self-reference \citep{hofstadter1979geb}. Smith \citep{smith1984reflection} described reflection this way:

\begin{quote}
	It is as if we were creating a magic kingdom, where from a cake you
	could automatically get a recipe, and from a recipe you could
	automatically get a cake.
\end{quote}

Another language with first-class support for metaprogramming is MetaML, which adds type safety and strict scoping rules for runtime-generated code \citep{taha2000metaml}. Many compiled languages support at least some compile-time metaprogramming (e.g., Rust), and many interpreted languages support but discourage the evaluation of generated code at runtime (e.g., Python).

Smith \citep{smith1984reflection} promoted a self-referential motif, namely that of programs which interpret programs, into a system architecture with 3-Lisp: in 3-Lisp, a program is interpreted by an interpreter which is itself interpreted by another interpreter, forming a \emph{reflective tower}. Schmidhuber \citep{schmidhuber2007godelmachines} similarly took a self-referential motif, recursive self-improvement, and promoted it into a system architecture with the Gödel machine. This theoretical machine iteratively rewrites its own code, proving at each iteration that the rewrite is beneficial. \Spell{} likewise promotes a self-referential motif into a system architecture: an outer program invokes a language model to compute an inner program, often by appending expressions to Q, and evaluates the inner program. Like 3-Lisp, the outer program Q causes an inner program to run, but the actual implementation of this behavior is conventional: the interpreter is external, invoked but not implemented by Q. Like the Gödel machine, moreover, \Spell{}'s motif is goal-oriented: the program Q is meant to accomplish some task, and it produces the completed program P' as a means of doing so.

\subsubsection{Clojure}\label{app:language:clojure}

\Spell{} is embedded within Clojure \citep{hickey2020clojure}. It is implemented in Clojure and also adopts most of Clojure's semantics. Clojure is a modern Lisp with functional programming features, particularly immutability. It is a hosted language, running on the Java virtual machine or within JavaScript, and it interoperates with these languages without a translation layer. It was chosen over other Lisps because immutability enables a powerful concurrency model, which addresses challenges that naturally arise in multi-agent systems. Multi-agent features of \Spell{} are not emphasized in this paper, mostly because existing LLMs do not utilize them, but they motivate the implementation of the \Spell{} runtime.

\subsection{Core semantics}\label{app:language:core-semantics}

\subsubsection{\texorpdfstring{Completions and
		\texttt{!llm-self}}{Completions and !llm-self}}\label{completions-and-llm-self}

A \textbf{completion} is the prefix passed to the LM together with the suffix produced by the LM. In \Spell{}, completions are executable programs.

The central primitive is \passthrough{\lstinline"!llm-self"}. At a high level, it behaves as follows:

\begin{lstlisting}
(defn !llm-self [prefix]
  ;; Pseudocode only
  (let [response   (call-llm prefix)
        completion (str prefix response)
        result     (spell-eval completion {})]
    (:ok result)))
\end{lstlisting}

The actual implementation includes parsing, balancing, error recovery, and handle management, but this pseudocode captures the key idea: \Spell{} passes the LM an \textbf{open program prefix}, receives a completion, evaluates it, and returns its value.

Because \passthrough{\lstinline"!llm-self"} is an ordinary callable form, it can appear inside conditionals, loops, recursive functions, map-style dispatch, or user-defined orchestration helpers.

\subsubsection{String literals}\label{app:language:thinking-transport}

A model's context window often contains mostly natural language, including prompts, reasoning traces, and tool call results. In \Spell{}, these text fragments are usually string literals inside the program. They can be integrated into the program using \passthrough{\lstinline!def!} bindings, \passthrough{\lstinline!quine!} forms, or the inert \passthrough{\lstinline!think!} form. The reason to put this content inside the program is so that it can be manipulated programmatically and used to construct a prefix for a self-call. For some model-provider configurations, \Spell{} also allows the model to emit reasoning tokens that are not part of the program; this is a concession to model behavior that technically violates Principle~2, but only superficially, because the model can always choose to embed reasoning traces into the program instead. 

\subsubsection{Quines and explicit
		self-reference}\label{quines-and-explicit-self-reference}

To extend or rewrite its own context, the program must be able to refer to its current completion as structured data. \Spell{} provides the special form \passthrough{\lstinline!quine!} for this purpose. The following program prints its own source code:

\begin{lstlisting}
(quine self (pr-str self))
;; => "(quine self (pr-str self))"
\end{lstlisting}

\passthrough{\lstinline!(quine name body)!} binds
\passthrough{\lstinline!name!} to the entire quine form as data and then
evaluates \passthrough{\lstinline!body!}. This allows the body to
inspect or transform the very program that contains it. A quine form
evaluates to the value of its body.

\subsubsection{Double-evaluation of effectful
	expressions}\label{double-evaluation-of-effectful-expressions}

Because \Spell{} programs often edit and re-run themselves, expressions that have side effects (e.g., making an LLM call or interacting with the filesystem) must be treated with special care. In the default evaluator applied to a \Spell{} program, effectful functions are entirely unavailable; they raise an error if called. The single exception is the \passthrough{\lstinline!eval!} function, which can be called from the main program to evaluate an inner program with side effects. This produces a clean boundary between pure code that can be re-evaluated safely and effectful code that should not be. Specifically, the \Spell{} wrapper uses this mechanism to create programs with side effects that run exactly once.

\subsubsection{The completion wrapper}\label{the-completion-wrapper}

Normal \Spell{} programs comprise a body and a standard \textbf{wrapper}:

\begin{lstlisting}
(quine completion
  (eval
    (do
      ... ; the body
    )))
\end{lstlisting}

This wrapper has two purposes:

\begin{enumerate}
	\def\labelenumi{\arabic{enumi}.}
	\tightlist
	\item
	      It enables self-reference via the outer
	      \passthrough{\lstinline!quine!}.
	\item
	      It ensures that only the \textbf{trailing expression}, which is the
	      last expression of the do block, can have externally visible effects.
\end{enumerate}

The outer \passthrough{\lstinline!eval!} performs a second evaluation on the value returned by the \passthrough{\lstinline!do!} block, namely its \emph{trailing expression}. If this expression is quoted, then \passthrough{\lstinline!eval!} evaluates it, allowing it to trigger side effects such as LLM calls. This pattern guarantees that even though the LLM writes only a suffix of the completion, and cannot rewrite its prefix before it is evaluated, this prefix cannot have undesired effects. Only one expression of the prefix---the trailing expression---has effects at all; these effects are ``cancelled'' by appending another expression after it, since a \passthrough{\lstinline!do!} block returns the value of its last expression.

\subsubsection{\texorpdfstring{Extensions and
		\texttt{!call-now}}{Extensions and !call-now}}\label{extensions-and-call-now}

A common pattern is to call a tool, serialize the result into the completion, and then continue reasoning with the result in context. \Spell{} packages this pattern into \passthrough{\lstinline"!call-now"}:

\begin{lstlisting}
...
'(!call-now result-name (tool-call))
\end{lstlisting}

On the next turn, the prefix observed by the LM is its original program, including the tool call itself, and the result of the tool call materialized as a binding:

\begin{lstlisting}
...
'(!call-now result-name (tool-call))
(def result-name "literal result of tool-call")
...
\end{lstlisting}

It supports multiple name-expression pairs, and later pairs can reference earlier ones in a manner similar to \passthrough{\lstinline!let!}. The related function \passthrough{\lstinline"!extend"} simply performs a self-call without making any tool calls; it is often used in the initial program to trigger the first LM turn. The implementation of \passthrough{\lstinline"!call-now"} takes \passthrough{\lstinline!completion!}, adds the tool call result inside the \passthrough{\lstinline!do!} block, and calls \passthrough{\lstinline"!llm-self"}. It can be used for purposes other than traditional tool calls, such as subagent calls and mathematical calculations.

\subsection{Runtime environments and local
	state}\label{runtime-environments-and-local-state}

In Clojure, the \passthrough{\lstinline!eval!} function is global: it both reads from and writes to the global environment. This behavior is undesirable when \passthrough{\lstinline"!llm-self"} evaluates a completion because the LM has no way to read the global environment. A binding defined by a parent LM could be overwritten by a child LM, or the environment could become cluttered with forgotten functions and variables. This challenge is mostly specific to the agentic setting, where the code-writing entity can read only its own subprogram.

\Spell{} solves this challenge by giving its evaluator (\passthrough{\lstinline!spell-eval!}) a local runtime environment which is separate from those of its parent and children. Together with the gating of effect expressions, this choice creates a simple distinction between local state, which depends only on the source code of the program, and global state, with which a \Spell{} program interacts only through the trailing expression.

An implication of strict local scoping is that \Spell{} has little use for closures, which are opaque functions that capture the environment in which they are defined (in particular, closures often capture helper functions). Due to the scoping rules of \Spell{}, there is no way for such an object to pass through the boundary between LM turns except by serializing it as source code, and doing so would defeat the point of closures. Therefore, functions in \Spell{} are dynamically scoped; if a function body uses a free binding (e.g., \passthrough{\lstinline!completion!}), then this binding is looked up in the environment wherever the function is called.

\subsection{Context management}\label{app:language:context-management}

As a completion grows through repeated extensions, it accumulates stale context. \Spell{} provides two high-level ways to manage this accumulation: the LM can build a new prefix from scratch by adding together the pieces it wishes to keep, or it can subtract from its current completion the pieces it wishes to drop.

The first approach requires only the \passthrough{\lstinline"!llm-self"} primitive, and \Spell{} additionally provides a convenience function, \passthrough{\lstinline!wrap-cat!}, which supports this by concatenating any number of forms into the \passthrough{\lstinline!do!} block of the \Spell{} wrapper. This approach is maximally flexible, but in practice LMs do not use it, and it may add substantial overhead if the LM is required to decide on every turn what context is still relevant. The second approach is used more often in practice. It relies on a pair of marker forms, \passthrough{\lstinline!prune!}, \passthrough{\lstinline!persist!}, which are signals to the macro-like \passthrough{\lstinline!prune-and-reopen!} form.

\subsubsection{\texorpdfstring{\texttt{prune} and \texttt{persist}}{prune and persist}}\label{prune-and-persist}

The \passthrough{\lstinline!prune!} function marks one or more preceding expressions for deletion. At runtime, it is inert, but when a program is passed through \passthrough{\lstinline!prune-and-reopen!}, both the prune expression itself and the appropriate number of preceding expressions are deleted.

The \passthrough{\lstinline!persist!} operator lets the LM keep a derived value after pruning away the object from which that value was computed. In particular, the combination of \passthrough{\lstinline!prune!} and \passthrough{\lstinline!persist!} allows the LM to delete a large tool-call result from context while retaining a slice or some other computed summary:

\begin{lstlisting}
...
'(!call-now big-file (io/read-lines "big-file.txt"))
(def big-file [...]) ; 1000 lines
(prune 1)
(persist lines (subvec big-file 32 42))
'(!extend)
\end{lstlisting}

At runtime, \passthrough{\lstinline!persist!} behaves like \passthrough{\lstinline!def!}: it evaluates the expression and binds it to a name. When \passthrough{\lstinline!prune-and-reopen!} is applied, however, the expression inside the \passthrough{\lstinline!persist!} form is replaced with the literal value which is currently bound to that name. On the subsequent turn, the program above produces the following:

\begin{lstlisting}
...
'(!call-now big-file (io/read-lines "big-file.txt"))
(persist lines [...]) ; 10 lines
'(!extend)
...
\end{lstlisting}

Most turn-producing expressions in \Spell{}, except the primitive \passthrough{\lstinline"!llm-self"}, apply \passthrough{\lstinline!prune-and-reopen!} to their argument before making the LM call.

\subsubsection{\texorpdfstring{\texttt{!peek}}{!peek}}\label{app:language:peek}

In order to elicit active context management, a convenience function \passthrough{\lstinline"!peek"} combines \passthrough{\lstinline"!prune"} with \passthrough{\lstinline"!call-now"} to produce an ephemeral tool call whose result is visible for only one turn (unless the model keeps it using \passthrough{\lstinline!persist!}). When the model writes:

\begin{lstlisting}
'(!peek contents (io/read-lines "big-file.txt"))
\end{lstlisting}

the expression appends both the tool call result(s) and a \passthrough{\lstinline!prune!} expression to the prefix for the following turn. If there are multiple tool calls, then all of them are pruned.

\subsubsection{\texorpdfstring{\texttt{think} and \texttt{rethink}}{think and rethink}}\label{think-and-rethink}

\Spell{} encourages LMs to use string literals for their internal reasoning, using \passthrough{\lstinline!think!}:

\begin{lstlisting}
(think "I am a helpful assistant...")
\end{lstlisting}

At runtime, \passthrough{\lstinline!think!} evaluates its body and returns \passthrough{\lstinline!nil!}. It is meant to be combined with \passthrough{\lstinline!rethink!}, which is sugar for \passthrough{\lstinline!prune!} followed by \passthrough{\lstinline!think!}. The idea is that the LM can backtrack and prune away unproductive reasoning traces in this way. The language itself does not couple these functions; \passthrough{\lstinline!rethink!} can be used to prune any preceding expression, not only one involving \passthrough{\lstinline!think!}. This pruning occurs when the program passes through \passthrough{\lstinline!prune-and-reopen!}.

\subsection{Error recovery}\label{app:language:error-recovery}

LM-written programs will sometimes contain errors. \Spell{} includes error recovery mechanisms which allow the LM to recover after an initial error.

\subsubsection{Result-map errors}\label{result-map-errors}

\passthrough{\lstinline!spell-eval!} returns result maps rather than
throwing host-language exceptions directly:

\begin{lstlisting}
Success: {:ok value :env env'}
Error:   {:err message :env env :expr failing-expression :trace [...]}
\end{lstlisting}

The \passthrough{\lstinline!:trace!} field records the \Spell{}-level call path through which the error propagated. For host-function errors, \Spell{} rewrites the message so it is expressed in terms of \Spell{}-facing names rather than internal Clojure machinery.

\subsubsection{Deterministic namespace
	fixup}\label{deterministic-namespace-fixup}

A frequent LM mistake is to use a symbol without the required namespace qualification---for example, \passthrough{\lstinline!trim!} instead of \passthrough{\lstinline!strings/trim!}. \Spell{} first attempts a deterministic repair by searching the available namespaces. If there is exactly one matching resolution, the system substitutes it and re-evaluates.

\subsubsection{Trailing expression error
	recovery}\label{trailing-expression-error-recovery}

When the program is successfully parsed, but an error is thrown by the trailing expression inside \passthrough{\lstinline!eval!}, it is possible to rescue the program by rewriting that one expression. This is handled by appending an error message and an error recovery prompt after the trailing expression inside the \passthrough{\lstinline!do!} block, and re-invoking the model. The model can rewrite the trailing expression and continue.

\subsubsection{Other evaluation error
	recovery}\label{other-evaluation-error-recovery}

When the outer evaluator throws an error, the mechanism described above would fail because appending new expressions inside the \passthrough{\lstinline!do!} block would still allow the error to be re-triggered. Instead, the error message and recovery prompt are appended inside a new \passthrough{\lstinline!(eval (do ...))!} block within the top-level \passthrough{\lstinline!quine!} form. For example, suppose the failing program contains an effect call outside the trailing expression:

\begin{lstlisting}[language=spell]
(quine completion (eval (do
  (quine prompt "List the files.")
  (def files (io/ls "."))         ;; throws: io/ is unbound here
  '(!call-now n (count files)))))
\end{lstlisting}

The runtime catches the error, appends \lstinline[language=spell]!(prune)! and a fresh \lstinline[language=spell]!(eval (do ...))! block as additional arguments to the top-level \lstinline[language=spell]!(quine completion ...)!, and re-evaluates:

\begin{lstlisting}[language=spell]
(quine completion
  (eval (do                       ;; inert: not the last form of the quine
    (quine prompt "List the files.")
    (def files (io/ls "."))
    '(!call-now n (count files))))
  (prune)                         ;; tells the next turn to drop the inert block
  (eval (do                       ;; this block runs on the recovery turn
    (def _recovery_prompt
      "The previous Spell program threw an error. ...")
    (def _error {:error "Unbound symbol: io/ls"
                 :in '(io/ls ".")})
    '(!llm-self (reopen completion)))))
\end{lstlisting}

This works because \passthrough{\lstinline!quine!} with arity greater than two evaluates only the last form; appending a recovery form to the error-producing form avoids re-raising the error. The error-causing form is visible to the model for only one turn, which avoids the accumulation of dead code. The recovery prompt instructs the model to retain any context from the error-causing form that will be needed on subsequent turns.

\subsubsection{Reader recovery}\label{reader-recovery}

If the completion cannot be parsed at all---for example because of unbalanced parentheses---\Spell{} cannot embed it as normal code. In that case, the raw text is wrapped into a fresh recovery quine as a string. The LM then gets another chance to produce a valid continuation. Compared with the more common evaluation recovery path, this path can be expensive because when the error-producing program is wrapped as a string literal, it misses the KV cache.

\subsection{Concurrent agents and inter-agent
	communication}\label{app:language:concurrent-agents-and-inter-agent-communication}

\Spell{} supports concurrent agents and enables both synchronous and asynchronous communication between them. Messages are addressed using handles. During execution, every program is associated with one handle, and the \passthrough{\lstinline"!llm-self"} function (as well as convenience functions like \passthrough{\lstinline"!call-now"}) produces an execution trace that has the same handle as the program which produces it. Programs with the same handle run synchronously in the same thread. The initial program has the handle \passthrough{\lstinline!:main!}. A special function, \passthrough{\lstinline!agents/spawn!}, creates an execution trace with a new handle, and this trace runs asynchronously in its own Clojure thread. Thus, there are two primary models for subagent delegation in \Spell{}: synchronous self-delegation, in which there is no communication except argument-passing, and asynchronous agent-spawning, which allows for communication.

\subsubsection{Inbox-based messaging}\label{inbox-based-messaging}

Each handle has an inbox that stores queued message macros. These macros transform a completion by appending a message binding (a serialized map containing the sender handle and message body) and then \passthrough{\lstinline"'(!extend)"}. Whatever completion the recipient produces, its trailing expression is intercepted and replaced, giving the model an opportunity to see the incoming message before taking further action. The underlying inbox-macro mechanism is quite general and could be used for other kinds of inter-agent coordination (for example, graceful shutdown).

\subsubsection{Synchronous
	communication}\label{synchronous-communication}

Ordinarily, inbox-based messaging is asynchronous: an agent sends a message and then takes another turn immediately, not waiting for a reply. Synchronous communication, where an agent sends a message and blocks for a response, poses a design challenge because it introduces the potential for deadlock, for example if agents Alice and Bob send blocking messages to each other and wait for a reply simultaneously. A related challenge is that if Alice has finished its work and gone dormant, then Bob has no way to know this before sending a message.

\Spell{} guarantees that agents cannot deadlock in this way by guaranteeing that when Alice sends a blocking message to Bob and goes to sleep, the state of Alice (awake or asleep) is coupled with that of Bob. The message awakens Bob, and if Bob subsequently sleeps, then this awakens Alice. If two agents send blocking messages to each other simultaneously, it causes them both to be awake, not asleep.

\subsubsection{Other ways to sleep}\label{other-ways-to-sleep}

The same sleeping mechanism is used in three other situations. First, when an agent other than \passthrough{\lstinline!main!} finishes its work and returns, any agent that was sleeping for a response from this agent is awakened. Then, the agent enters a sleeping state from which it can be awakened by any incoming message. This allows a worker to start a new task if assigned, or answer a question about work that it has completed, without losing its previous context.

Second, an agent can spawn one or more child agents and sleep until all of them finish their work. As usual, an agent that does this can also be awakened by an incoming message.

Third, an agent can sleep until an arbitrary computation finishes in a separate thread. For example, an agent could listen for a change to some file and awaken when the file is modified; it could spawn subagents in some sort of task loop and awaken when the task loop terminates. If one of those subagents messages the main agent, then the main agent is awakened early, without interrupting the spawned execution thread. Such programs can fail to halt, but this failure mode is no worse than that of any program; deadlock between agents is a worse problem because the agents in question lack the information needed to avoid it.

This claim is about deadlock between agent handles, not arbitrary host-thread liveness. \Spell{} does not prevent ordinary futures from waiting on one another forever. This is a less concerning failure mode: futures created independently by different agents do not know about one another through the communication system, while mutually blocking futures created by the same agent are part of a single program and therefore visible to the author of that program.

\subsubsection{Why this avoids deadlock}\label{why-this-avoids-deadlock}

Consider the directed graph $(A,E)$ where $A$ is the set of agent handles, and $(a,b)$ belongs to $E$ if agent $a$ is sleeping for a response from agent $b$, at some time $t$. Each node is either awake or asleep. The following transformations are possible from time $t$ to $t+1$:

\begin{enumerate}
	\def\labelenumi{\arabic{enumi}.}
	\tightlist
	\item
	      A new node can be created, either awake or asleep.
	\item
	      For a node $a$ which is awake at time $t$, any number of new edges
	      $(a,b)$ can be created ($b\neq a$); at time $t+1$, $a$ will be asleep
	      and $b$ will be awake. If $b$ goes from asleep to awake, then any edge
	      $(b,c)$ is deleted.
	\item
	      A node $b$ can go from awake to asleep, and this deletes any edge
	      $(a,b)$. If the out-degree of $a$ becomes zero, then $a$ becomes awake
	      at time $t+1$.
\end{enumerate}

Deadlock occurs when every node is asleep and $E$ is nonempty. A non-deadlocked state never gives rise to a deadlocked state. In particular, transformation (2) never generates a directed cycle. Clojure provides synchronization primitives that make it possible to avoid undesired state transitions (other than (1)--(3) above) via race conditions; the \Spell{} v0.1.0 implementation appears to accomplish this.

\subsubsection{Communication API}\label{communication-api}

The communication forms are:

\begin{itemize}
	\tightlist
	\item
	      \passthrough{\lstinline!(agents/spawn prompt)!} --- spawn a new agent
	      asynchronously and return its handle;
	\item
	      \passthrough{\lstinline!(agents/spawn agent prompt :handle-name)!} ---
	      spawn with an explicit compiled agent and/or handle name;
	\item
	      \passthrough{\lstinline!(agents/send target message)!} ---
	      fire-and-forget message delivery;
	\item
	      \passthrough{\lstinline!(agents/reply msg-map message)!} --- reply to
	      a message map (the map must contain a \texttt{:from} handle);
	\item
	      \passthrough{\lstinline"(agents/!ask target msg)"} --- send a message
	      and sleep until a reply arrives;
	\item
	      \passthrough{\lstinline"(agents/!ask target)"} --- wake the target and
	      sleep;
	\item
	      \passthrough{\lstinline"(agents/!ask [a b c])"} --- wake several
	      targets and sleep until all return;
	\item
	      \passthrough{\lstinline"(agents/!reply-ask msg-map message)"} ---
	      reply to \texttt{msg-map} and block for the next message;
	\item
	      \passthrough{\lstinline"(agents/!spawn-ask prompt)"} --- spawn a child
	      and sleep until it completes;
	\item
	      \passthrough{\lstinline"(agents/!spawn-ask [prompt-a prompt-b ...])"}
	      --- spawn several children and sleep until all complete;
	\item
	      \passthrough{\lstinline"(!ask-await fut)"} --- sleep for the
	      completion of a thread.
\end{itemize}

When a non-root handle finishes, its completion is preserved in a sleeping state so that another agent can later wake it again.

\subsection{Other language features}\label{other-language-features}

\subsubsection{Macros}\label{macros}

\Spell{} supports Clojure's built-in macros such as \passthrough{\lstinline!when!}, \passthrough{\lstinline!cond!}, \passthrough{\lstinline!defn!}, \passthrough{\lstinline!->!}, and \passthrough{\lstinline!->>!}, as well as user-defined macros via \passthrough{\lstinline!defmacro!}:

\begin{lstlisting}
(defmacro unless [test & body]
  (list 'if test nil (cons 'do body)))
\end{lstlisting}

\subsubsection{Error handling}\label{error-handling}

\Spell{} provides structured exception-style control flow:

\begin{lstlisting}
(try
  (/ 1 0)
  (catch e "division failed"))

(try
  (throw {:code 404})
  (catch e (:code e)))
\end{lstlisting}

Bindings created before the error remain available in the catch handler.

\subsubsection{Destructuring and
	iteration}\label{destructuring-and-iteration}

\passthrough{\lstinline!let!}, \passthrough{\lstinline!fn!},
\passthrough{\lstinline!loop!}, and \passthrough{\lstinline!for!}
support standard vector and map destructuring.
\passthrough{\lstinline!loop!}/\passthrough{\lstinline!recur!} provide
tail-recursive iteration, and \passthrough{\lstinline!for!} provides
list-comprehension-style iteration with \passthrough{\lstinline!:when!}
and \passthrough{\lstinline!:let!}.

\begin{lstlisting}
(loop [n 5 acc 1]
  (if (= n 0)
    acc
    (recur (- n 1) (* acc n))))

(for [x [1 2 3 4] :when (> x 1) :let [sq (* x x)]]
  sq)
\end{lstlisting}

\subsubsection{Futures}\label{futures}

\passthrough{\lstinline!(future expr)!} evaluates
\passthrough{\lstinline!expr!} in a new thread while capturing the
current environment. Futures are isolated from the parent's later
environment updates. Raw blocking operations live in the
\passthrough{\lstinline!blocking/!} namespace, which is only injected
inside futures. This keeps direct thread blocking out of ordinary agent
turns, where sleeping should go through the inbox/wakeup protocol
(\passthrough{\lstinline"agents/!ask"},
\passthrough{\lstinline"agents/!spawn-ask"}, or
\passthrough{\lstinline"!ask-await"}) so incoming messages can wake or
preempt the agent. Inside a future, there is no active agent completion
to reopen, so \passthrough{\lstinline!blocking/!} is the explicit API
for waiting on futures or agent completion tokens from scheduler-style
code.

\subsubsection{Other features shared with
	Clojure}\label{other-features-shared-with-clojure}

\Spell{} implements many features of Clojure, most of which are rarely or never used. A common failure mode observed during development of \Spell{} was that models attempted to use built-in Clojure functions which are not implemented in \Spell{}; as a result, \Spell{} implements Clojure built-in functions by default, unless there is a reason not to do so.

\subsection{Prebuilt orchestration
	patterns}\label{app:language:prebuilt-orchestration-patterns}

The \passthrough{\lstinline!patterns/!} namespace includes several prebuilt orchestration patterns, implemented in \Spell{} as opposed to Clojure. In the future, this could include a library of patterns which the model invokes when appropriate; at present, models do not choose to use this in my experiments, and the \passthrough{\lstinline!patterns/!} namespace is not made available in the benchmarking analyses of this paper.

An example of such a pattern is a worker-checker loop. The main agent spawns the loop with a prompt. The checker agent is tasked with diagnosing a problem and returns a structured map containing instructions for the worker agent; the worker agent attempts a solution, and this is sent back to the checker agent for verification. The checker agent can either continue the loop or announce completion.

\subsection{Prompts and internal
	documentation}\label{app:language:prompts-and-internal-documentation}

\Spell{} v0.1.0 features three categories of prompts: system prompts, which are injected by the \passthrough{\lstinline!call-llm!} function every turn and are configured by the \passthrough{\lstinline!agents.edn!} file; error recovery prompts, which are injected when a \Spell{} program throws an error and is sent to the LM for recovery; and model-discoverable prompts and documentation, which can be injected into context using the \passthrough{\lstinline"!describe"} function.

\subsubsection{System prompts}\label{app:language:system-prompts}

The system prompt teaches the model the core semantics of \Spell{}, provides examples of idiomatic usage, and warns against common pitfalls. There are three variants of the prompt, based on the transport used to obtain a \Spell{} suffix from the model (Section~\ref{app:language:transport}), and prompts additionally have slight modifications based on the available namespaces. The main experiments using Anthropic, OpenAI, and Fireworks tool-call transports used the tool-call variant, shown below with the prompt cursor marker rendered as \passthrough{\lstinline!|!} for LaTeX compatibility.

\begin{lstlisting}
INTRODUCTION

You are writing Spell, a Lisp resembling Clojure, designed for LLM self-orchestration and context engineering.
Spell allows you to act as an agent without an external harness by writing a self-calling program.
Your input is the prefix of a Spell program; your output completes it.
The completion is evaluated by the Spell interpreter. The completion is the program: any natural language, markdown, or commentary will cause a parse error.
The prefix usually contains instructions in a string literal; produce a program which completes the task, computes a response, or produces a self-call as step toward doing so.

TRANSPORT

This prompt is for a tool-call transport. Your response should comprise exactly one tool call named `spell_suffix`, with no assistant message text, explanations, markdown, or wrapper prose.
The content of your user message will be a Spell program prefix. The payload of your tool call must be the raw Spell suffix. These are concatenated to produce a Spell program.
Only visible Spell code is parsed or preserved. Hidden reasoning is not part of the program and will not propagate to later turns. If you utilize hidden reasoning that should persist across turns, write it as part of the program via (think "...").

Example: Anthropic tool-call transport
  {
    "messages": [
      {
        "role": "user",
        "content": "(quine completion (eval (do (quine prompt \"Inspect the project root.\") "
      }
    ],
    "tools": [
      {
        "name": "spell_suffix",
        "description": "Return the full Spell suffix in input.suffix",
        "input_schema": {
          "type": "object",
          "properties": {
            "suffix": {
              "type": "string"
            }
          },
          "required": ["suffix"],
          "additionalProperties": false
        }
      }
    ],
    "tool_choice": {
      "type": "any"
    }
  }

  {
    "type": "tool_use",
    "name": "spell_suffix",
    "input": {
      "suffix": "(think \"Goal: inspect the project root. Next action: list top-level files. Success: identify the main entrypoints.\")\n'(!call-now files (io/ls \".\"))"
    }
  }

Example: OpenAI Responses custom-tool transport
  {
    "input": "(quine completion (eval (do (quine prompt \"Inspect the project root.\") ",
    "tools": [
      {
        "type": "custom",
        "name": "spell_suffix",
        "description": "Spell suffix emitted as custom tool input"
      }
    ],
    "tool_choice": "required"
  }

  {
    "type": "custom_tool_call",
    "name": "spell_suffix",
    "input": "(think \"Goal: inspect the project root. Next action: list top-level files. Success: identify the main entrypoints.\")\n'(!call-now files (io/ls \".\"))"
  }

SPELL BASICS

The core mechanic in Spell is self-calling:
the !llm-self function calls *you* recursively, letting you continue your CoT and modify your context window. Functions which produce a self-call are named with a leading !
Several Spell functions combine !llm-self with functionality like tool calling.
Effectful functions, like self-calls, can only be evaluated by the eval function.
Spell has most Clojure builtins but removes I/O and host interop.
Stateful/async capabilities, when supported, are exposed via documented effect namespaces rather than assumed as core host forms (e.g. do not assume atom/letrec).
It has namespaces. Available namespaces are listed below.
Functions defined in Spell have dynamic scope (no closures).

COMPLETION WRAPPER

Programs use this standard wrapper:
  (quine completion (eval (do ...))) ; you fill in ...
The wrapper has three layers:
1. (do ...) returns the value of its last expression (called the trailing expression). *Normally this value is a quote.*
2. (eval ...) evaluates this quote. Effect functions (those with global side effects) can only be evaluated by eval and otherwise throw "unbound symbol":
    (quine completion (eval (do (!llm-self "No"))))  ; unbound symbol exception
    (quine completion (eval (do '(!llm-self "Yes"))))  ; quote is unwrapped by eval
3. (quine completion ...) binds the source code of the entire program, including the wrapper itself, to the symbol completion. This allows you to extend your CoT (see below).

This wrapper allows you to extend your CoT by self-prompting with your completion while ensuring that effectful function calls are not re-evaluated. If you see this prefix:
    (quine completion (eval (do (quine prompt "Your task...")
    '(!extend) |... ; !extend calls !llm-self; see below
It means that on your previous turn, you called !extend. When you append to this, the quoted expression becomes inert:
    (quine completion (eval (do (quine prompt "Your task...")
    '(!extend) ; no longer trailing; not re-evaluated
    '(!peek ...) ; new trailing expression is evaluated

Tool call example:
  (quine completion (eval (do (quine prompt "Your task...") |
  '(!call-now files (io/ls ".")) ;; !call-now is a main way to use tools; see below

Use !llm-self and wrap-cat to perform a self-call with a properly-wrapped prefix:
    ...|
    (quine msg-to-self "Hello me!")
    (def something-else "Something else")
    '(!llm-self (wrap-cat msg-to-self something-else))
    ;; your next turn:
    (quine completion (eval (do
    (quine msg-to-self "Hello me!") "Something else"|

ONE TRAILING EXPRESSION PER RESPONSE

Each response ends with a quoted expression — the trailing expression. This expression is returned by the wrapper's do block, as data, to the wrapper's eval block, which evaluates it. Everything before this is local computation, unable to interact with global state. In particular, only the trailing expression may make self calls, and only when quoted. Always quote your trailing expression: (quine completion (eval (do ... '(trailing-expr))))

EXTENSIONS

An extension is a new turn whose prefix reuses your previous completion quine.
The following completion produces an extension:
    (quine completion (eval (do
        (quine prompt "Use two turns to say hello world.")
        '(!llm-self (reopen completion))))) ; reopen keeps the quine wrapper and strips trailing parenthesis to reopen its do block

Your next turn:
    (quine completion (eval (do
        (quine prompt "Use two turns to say hello world.")
        '(!llm-self (reopen completion)))|
        "Hello world!")))
    ;; the program returns; !llm-self is not re-evaluated

NEW-TURN FUNCTIONS

Functions that create a new turn are prefixed with `!`.

Extension-producing forms:
    '(!extend) ; simple extension
    '(!print any-expression) ; print the value of any-expression into your context window
    '(!call-now result-name any-expression) ; bind the value to result-name and print; accepts multiple name-expr pairs
    '(!peek result-name any-expression) ; like !call-now, but the bound value is ephemeral for one turn while the !peek call stays visible as prior work; see below; accepts multiple name-expr pairs
    '(!describe some-namespace) ; print documentation; accepts multiple namespace names
    '(agents/!ask :agent-handle query) ; see below

Usually, include exactly one ! expression in your quoted trailing expression. Zero ! expressions: your program returns and you do not get another turn. Two or more ! expressions are valid but usually unnecessary; only use them when you intentionally want parallel work.

AGENT BOOTSTRAP PATTERNS

For one-shot child delegation, strongly prefer agents/!spawn-ask over any
two-turn bootstrap such as spawn -> extend -> ask.

Preferred:
  '(agents/!spawn-ask "Do X and send/return the result.")

Fragile antipattern:
  '(do (agents/spawn child-prompt :child)
       (agents/!ask :child start-msg))

Usually-also-fragile antipattern for one-shot work:
  '(do (agents/spawn child-prompt :child)
       (!extend))
  ;; next turn:
  '(agents/!ask :child start-msg)

Why this is fragile:
- A spawned child gets its own first turn.
- If that first turn does not send a real message, your immediate !ask may wake
  on the child's completion fallback instead of a substantive reply.
- That often appears as (def msg-N {:from :child :body nil}).

Rule of thumb:
- If you want one result from one newly-created child, use agents/!spawn-ask.
- Use spawn + later !ask only when you intentionally want a persistent agent
  that will be reused across later turns.
- Ask an already-existing handle only when the agent already exists or is being
  kept alive on purpose.

When writing child prompts, be explicit about first-turn behavior:
- If the child should answer immediately, make the first turn send/reply with a
  real payload.
- If the child should wait for a later wakeup, make the first turn intentionally
  inert and do not interpret its completion as a meaningful answer.

LEAF-LLM

You can also make plain-text LLM calls using leaf-llm. leaf-llm calls your same model to return a one-shot text result; the leaf-llm subagent cannot write Spell code or use tools. leaf-llm is like a tool: use it with !call-now.
For example:
    ...|'(!call-now poem (leaf-llm "Write a poem about spring."))
    ;; on your next turn:
    ...'(!call-now poem (leaf-llm "Write a poem about spring."))(def poem "What strange scent...")|

QUINE VS DEF

`def` binds a name to a value:
  (def answer (+ 41 1))  ; answer = 42

`quine` binds a name to the entire quine form as source code (a persistent list), not the evaluated result.
  (quine q (+ 41 1))  ; q = (quine q (+ 41 1)), NOT 42
  (+ (eval q) 2) ; => 42
  (+ q 2) ; => exception

Unlike `quote`, `quine` does not block evaluation. In fact, the expression (quine name my-expr) evaluates to the value of my-expr:
    (def forty-two (quine q (+ 41 1)))
    forty-two  ; => 42

Internally, (quine name my-expr) binds name to the quine form *before* evaluating my-expr. This allows my-expr to reference name, which is the pattern used by extensions.

Rule of thumb: use `quine` to name string literals and pass them to other LLMs; use `def` for calculations and control flow.

THINK

A convenient way to insert CoT into your program is:

(think "...")

PRUNE/RETHINK

Manage your context window by pruning unneeded expressions. You can do so using prune: (prune) causes the preceding expression to be dropped on the subsequent turn; (prune N) drops N expressions.

You can also do so using rethink. For example:
  ...(quine prompt "Hard arithmetic problem")
  (think "Let's try to use mental math...")  ; 1k tokens
  (rethink "Instead of using mental math, let's write a program ...")
  '(!extend)
  ;; Next turn, the wrong attempt is pruned:
  ...(quine prompt "Hard math problem")
  (think "Instead of using mental math, let's write a program ...")
  (defn f [x] ...) ...

You can also use rethink to delete a tool call result from context:
    ...'(!call-now file (io/ls "big-dir"))
    (def file [{:name "file1.txt" :size 230} {:name "file2.txt" :size 481} {:name "subdir/" :size 4096} ...])
    (rethink "big-dir has 1000 files. The one I was looking for is big-dir/my-file.txt")
    '(!call-now file-text (io/read-lines "big-dir/my-file.txt"))
    ;; prunes the list of 1000 files from your context window!

`!peek` automates this ephemeral-binding pattern:
    ...'(!peek file-lines (io/read-lines "big-dir/huge-file.txt"))
    ;; end of turn 1 completion
    (def file-lines ["... many lines ..."])
    (prune 1)
    ;; start of turn 2 suffix (not shown)
    ;; file-lines is gone on the following extension; the !peek call remains visible

`persist` retains a computed value across prune/rethink pruning:
    ...'(!peek data (io/read-lines "src/server.py"))
    ;; end of turn 1 completion
    (def data (first-line 1 ["..." "..." ...]))
    (prune 1)
    ;; start of turn 2 suffix
    (persist target-lines (subvec data 180 220))
    '(!peek ...)
    ;; next turn: data is pruned, target-lines survives as a literal value
    (persist target-lines (first-line 181 ["..." "..." ...]))

Do not use persist inside of custom Spell macros.

NAMESPACES

Access functions with qualified symbols.

Core namespaces (always available, usable anywhere):
  strings/  — string manipulation (trim, split, join, replace, upper-case, lower-case, includes?, starts-with?, ...)
  math/     — math functions (sqrt, pow, abs, floor, ceil, rand, factorial, PI, ...)
  builtins/ — reference for core builtins by category (docs only; includes subs, re-find, re-matches, re-seq, rand-int, ...)

Effect namespaces are configurable and only usable in the quoted trailing expression.
These are listed below if available to you. They include things like io and inter-agent communication.
For IO, prefer dedicated io/ functions like io/read-lines over io/sh with shell equivalents.

On first use of an unfamiliar effect namespace, consider using '(!describe ns) to check available functions. '(!describe ns1 ns2) also works. '(!describe ns :function-name) describes one function within a namespace.

CONTEXT MANAGEMENT

Context tokens are your scarcest resource. Each extension should carry forward only what the next step needs.
If tool output, tests, or file contents contradict your current theory, update the theory promptly instead of defending the first idea.

Use !peek for read-only results you only need for the next turn.
- Exploratory file reading and grepping
- Running tests with possibly-verbose outputs
- Other shell commands with possibly-verbose outputs, like package installation

When using !peek to read a file, use io/read-lines. Then, use persist and subvec to keep important lines in context.

Example: large files
    '(!peek server-lines (io/read-lines "src/server.py")
             test-lines (io/read-lines "tests/test_server.py"))
    ;; end of turn 1 completion
    (def server-lines (first-line 1 ["..." "..." ...]))
    (def test-lines (first-line 1 ["..." "..." ...]))
    (prune 2)
    ;; start of turn 2 suffix
    (persist server-focus (subvec server-lines 180 220))
    (persist test-focus (subvec test-lines 40 72))
    '(!extend)

    ;; on turn 3:
    ... ; no server-lines or test-lines
    (persist server-focus (first-line 181 ["..." "..." ...]))
    (persist test-focus (first-line 41 ["..." "..." ...]))
    ...

Example: long documents in chunks
    '(!peek chunk-1 (io/read-lines "docs/report.md" 1 80))
    ;; end of turn 1 completion
    (def chunk-1 (first-line 1 ["..." "..." ...]))
    (prune 1)
    ;; start of turn 2 suffix
    (def chunk-1-summary "Lines 1-80 define the setting, notation, and main claim.")
    (persist chunk-1-focus (subvec chunk-1 20 35))
    '(!peek chunk-2 (io/read-lines "docs/report.md" 81 160))

    ;; on turn 3:
    (def chunk-1-summary "Lines 1-80 define the setting, notation, and main claim.")
    (persist chunk-1-focus (first-line 21 ["..." "..." ...]))
    (def chunk-2 (first-line 81 ["..." "..." ...]))
    (prune 1)
    (def chunk-2-summary "Lines 81-160 give the method, key evidence, and open questions.")
    '(!extend)

Example: test outputs
    '(!peek-now test-out (io/sh "uv run pytest tests/test_api.py -x -q"))
    ;; end of turn 1 completion
    (def test-out "=========================== FAILURES ===========================\n... tests/test_api.py::test_empty_input ... ValueError ...")
    (prune 1)
    ;; start of turn 2 suffix
    (think "Current failure: tests/test_api.py::test_empty_input still raises ValueError on empty input. Next step: patch the guard in the handler and rerun this test.")
    '(!extend)


Use !call-now instead of !peek when the tool call result should remain in context:
- Reading a short, critical snippet of text, like a function definition
- Making an edit that you may wish to re-edit
- Running a tool call whose result is short (~100 tokens)

When you receive a large tool-call result, rethink to replace it with a summary of what you found, then continue:
    '(!call-now hits (io/grep "TODO|FIXME" "src/" {:context 20}))
    (def hits "... many matches ...")
    (rethink "Relevant matches are auth.py:42 and db.py:88.")
    '(!call-now auth-lines (io/read-lines "src/auth.py" 30 70) db-lines (io/read-lines "src/db.py" 90 120))

Chain tool calls with !call-now or !peek, saving turns:
    (def test-script "...")
    '(!call-now _ (io/write-file "/tmp/run_tests.py" test-script) test-out (io/sh "python /tmp/run_tests.py"))

After extended reasoning, rethink to compress your chain of thought to its conclusion, then extend or act:
    (think "Long analysis... checking stack traces, testing hypotheses...")
    (rethink "Root cause: off-by-one loop bound in parse_args.")
    '(!call-now ...)

When your context has grown large over many turns:
    '(!compact)

RECOMMENDED PATTERNS

Fetching documentation:
  '(!describe math)

Multiple tool calls in one turn:
  '(!call-now files (io/ls ".") content (io/read-file "main.py"))
  ;; each step's result is bound and visible next turn; prefer this over wrapping effects in a bare (do ...)

Search + read context in one turn:
  '(!call-now hits (io/grep "TODO|FIXME" "src/" {:context 20}))
  ;; :context N includes N lines around each match in the output, reducing the need for follow-up reads.
  ;; use when you need both "where does this appear" and "what's happening near it" in one turn.

Calculate and extend:
    '(!call-now result (+ 41 1))

Math helper function:
  (defn hypotenuse [a b]
    (math/sqrt (+ (* a a) (* b b))))
  '(!call-now result (hypotenuse 5 12))

Plan + execute with a clean context window:
  (quine prompt "...") (think "...") '(!call-now ...) ... ; 1k tokens of thinking + tool calls
  (quine plan "...")
  '(!llm-self (wrap-cat prompt plan))

Defining reusable code:
  (def run-tests '(!call-now test-out (io/sh "uv run pytest test_module.py -x -q")))
  run-tests ; observe failure
  ;; subsequent turns:
  ... ; read test files, edit source code, etc.
  run-tests ; repeat until they pass
  ;; you can also define functions

ANTIPATTERNS

Starting with prose — your response is parsed as code; prose causes a parse error:
  |Sure, I'll help with that!...  ; WRONG: parse error
  |(think "My approach is...") ; correct

Closing the do block — your response continues inside the open (do ...) block; do not close it:
  prefix: (quine completion (eval (do (quine prompt "...") '(!extend)
  |)                                    ; WRONG: closes the do block
  (think "...") (def x 1) '(!extend)    ; these are now extra args to eval
  |(think "...") (def x 1) '(!extend)   ; correct: continues inside the do block

Re-emitting the wrapper — the prefix already has it; just write expressions:
  |(quine completion (eval (do ...)))  ; WRONG: creates nested wrappers
  |(think "...")...  ; correct: continues the prefix do block

Ending with prose:
    '(!call-now result tool-call) Now I'll look at the result ; WRONG: the evaluator will throw when it gets to your prose
    '(!call-now result tool-call) ; correct: just pass

Unquoted effect function — every effect function must be quoted in trailing expression, so that it passes through to the outer eval:
  ...(!call-now files (io/ls "."))    ; WRONG: unquoted
  ...'(!call-now files (io/ls "."))  ; correct: quoted
  (defn delegate-subtask [subtask] (!llm-self (wrap-cat prompt context subtask))) ; WRONG: unquoted
  (defn delegate-subtask [subtask] (list '!llm-self (wrap-cat prompt context subtask))) ; correct: build the quoted form explicitly

  After receiving a !call-now result, the NEXT !call-now still needs quoting:
  '(!call-now previous-result previous-call)(def previous-result {:exit 1 ...})
  (!call-now content (io/read-file path))  ; WRONG: still need to quote this

Using io/sh for file reads when a structured io/ function already exists:
  '(!call-now content (io/sh "cat src/server.py")) ; WRONG
  '(!call-now content (io/read-lines "src/server.py")) ; correct

Making tool calls without giving yourself a turn:
  '(do (io/write-file "/tmp/run_tests.py" test-script)
       (io/sh "python /tmp/run_tests.py")) ; WRONG: you will never get a turn to see if tests pass
  '(!call-now _ (io/write-file "/tmp/run_tests.py" test-script) test-out (io/sh "python /tmp/run_tests.py")) ; correct
  ;; Whenever you emit a trailing expression with no ! self-call, it means you are finished

Multiple extensions in one response — only the last quoted expression fires:
  ;; all in one turn:
  '(!describe io)
  '(!call-now files (io/ls "."))  ; only this fires; !describe is inert
  ;; correct: just '(!describe io) and pass your turn

Hallucinating tool call results inline — the system injects actual results after !call-now:
  '(!call-now content (io/read-file path))
  (def content "import numpy...")  ; WRONG: instead, just pass your turn

Self-delegation or tool calling with def — this is what !call-now is for:
  (def fix '(!llm-self (wrap-cat ...)))  ; WRONG: fix is the list (!llm-self ...), not the return value
  '(!call-now fix (!llm-self (wrap-cat ...)))  ; correct: !call-now captures the return value

Calling !llm-self (or agents/spawn ...) with an unwrapped quine:
  (quine prompt "Write a poem")
  '(!llm-self prompt) ; WRONG: next turn, the wrapper is missing
  '(!llm-self (wrap-cat prompt)) ; correct: wrap-cat applies the wrapper
\end{lstlisting}

\subsubsection{Model-accessible prompts and
	documentation}\label{model-accessible-prompts-and-documentation}

\Spell{} makes documentation available to the model for namespaces that it wishes to use. The model accesses this documentation by writing \passthrough{\lstinline"(!describe namespace)"}. A special \passthrough{\lstinline!reminders!} namespace includes prompts that can either be accessed by the model or injected by the user via the initial program. Coding benchmarks reported in the paper used the \passthrough{\lstinline!:coding!} reminder, which instructs the model to act as a coding agent and provides coding-specific tool-use examples.

\paragraph{\texorpdfstring{\texttt{strings} namespace guide}{strings namespace guide}}\label{strings-namespace-guide}

\begin{lstlisting}
STRINGS — Mirrors clojure.string. Regex functions take string patterns (not compiled regex).

Same as Clojure: index-of, last-index-of, starts-with?, ends-with?, includes?, blank?, trim, replace, split, split-lines, join, lower-case, upper-case, capitalize.

Related builtins: subs, re-find, re-matches, re-seq.

Use (!describe strings :fn-name) for any function.
\end{lstlisting}

\paragraph{\texorpdfstring{\texttt{math} namespace guide}{math namespace guide}}\label{math-namespace-guide}

\begin{lstlisting}
MATH — Mirrors java.lang.Math with standard semantics.

  Basic:         sqrt, cbrt, pow, exp, expm1, abs, sign
  Rounding:      floor, ceil, round, trunc
  Logarithms:    log (natural), log10, log2, log1p
  Trigonometric: sin, cos, tan, asin, acos, atan, atan2
  Hyperbolic:    sinh, cosh, tanh
  Angles:        degrees (rad->deg), radians (deg->rad)
  Number theory: factorial, gcd, lcm
  Misc:          hypot, rand
  Type checks:   NaN?, infinite?
  Constants:     PI, E, INF, NEG-INF, NaN

Related builtins: rand-int, +', -', *', inc', dec', float, double, long, bigdec, rationalize.

Type caveat: many math functions return Doubles, and Clojure `=` does not treat `4` and `4.0` as equal. Coerce explicitly when you need an integer check:

  (let [r (math/round (math/sqrt s))]
    (= (* r r) s))

floor, ceil, round, trunc, factorial, gcd, and lcm return integer values.

All functions take and return numbers. Use (!describe math :fn-name) for any function.

Recommended usage pattern: Write a function, evaluate, inspect the result.

  ...|(defn fib [n] (if (<= n 1) n (+ (fib (- n 1)) (fib (- n 2)))))
  '(!call-now result (fib 10))

Antipattern: return the result of a computation without inspecting it.
Bind the result, inspect it on the next turn, then decide what to do next.
\end{lstlisting}

\paragraph{\texorpdfstring{\texttt{builtins} namespace guide}{builtins namespace guide}}\label{builtins-namespace-guide}

\begin{lstlisting}
BUILTINS — Core functions always available without namespace prefix.

Categories (use (!describe builtins :category) for full listing):
  special-forms — quote, def, persist, do, if, let, fn, quine, loop, recur, for, try
  macros        — when, defn, cond, if-let/if-some, when-let/when-some, case, ->, ->>, !call-now, ...
  effect        — eval, !llm-self, !ask-await, leaf-llm, describe-fn, llm (trailing expression only)
  math          — +, -, *, /, inc, dec, mod, abs, integer?, numerator, denominator, rand, ...
  comparison    — <, >, =, not, nil?, empty?, identity, ...
  types         — string?, number?, vector?, map?, fn?, keyword?, integer?, ratio?, rational?, ...
  strings       — str, pr-str, subs, cat, format, read-string, re-find, ...
  collections   — list, vector, set, first, rest, nth, conj, count, get, assoc, into, ...
  maps          — keys, vals, merge, update, get-in, assoc-in, dissoc, select-keys, ...
  sequences     — map, filter, reduce, sort, group-by, take, drop, partition, range, ...
  combinators   — comp, partial, juxt, complement, constantly, ...
  bitwise       — bit-and, bit-or, bit-xor, bit-shift-left, ...
  spell         — spell-eval, reopen, wrap-cat, serialize-prefix, prune-and-reopen, serialize, stored
  concurrency   — future*
  error         — throw, ex-info, ex-data, ex-message, ex-cause, gensym

Use (!describe builtins :category) for full listing of any category.
Use (!describe builtins :fn-name) for individual function docs.
For namespace functions (io/, agents/, globals/, strings/, math/, patterns/), use (!describe <namespace>).

Common mistakes:

1. calling effect builtins outside the trailing expression: !llm-self, !ask-await, leaf-llm, eval, and describe-fn are effect functions; they must appear in the quoted trailing expression or inside !call-now / !peek / !print
2. confusing def with let: def binds in the environment (visible to later expressions); let creates local scope
3. forgetting quote on the trailing expression: the last expression must be quoted so the outer eval can run it with effect bindings
4. str vs cat vs pr-str: str joins arguments as strings; cat is an alias; pr-str serializes as Spell-readable data (vectors, maps, etc.)
5. using read-string on untrusted input: read-string parses Spell code; only use it on data you control
\end{lstlisting}

\paragraph{\texorpdfstring{\texttt{reminders} namespace guide}{reminders namespace guide}}\label{reminders-namespace-guide}

\begin{lstlisting}
REMINDER: This text belongs to the prefix of a Spell program that you are tasked with completing. Your entire response is code; embed all natural language within string literals. Follow the instructions on how to write correct Spell code in your system prompt.

For coding tasks (bug fixes, feature implementation, test-driven work), consult (!describe reminders :coding) on the first turn for a research-plan-implement-verify workflow focused on !peek, persist, short validation loops, and concise completion evidence.
\end{lstlisting}

\paragraph{\texorpdfstring{\texttt{reminders\ :coding}
		prompt}{reminders :coding prompt}}\label{app:language:reminders-coding-prompt}

\begin{lstlisting}
CODING TASKS — Research, plan, implement, verify, iterate.

Expect early verification failures. They are normal. Use them to refine your understanding, and continue until the actual task is complete.

RESEARCH before committing to a plan or implementation:
- Identify the relevant code, tests, configs, scripts, data files, and output locations.
- Treat the real environment as the source of truth. Verify important assumptions instead of relying on the prompt, your first impression, or a guessed architecture.
- Determine what the task actually requires: what behavior, artifact, output, or test result counts as completion.
- When errors, tracebacks, or failing commands point to exact files or line numbers, inspect those exact places first, then expand outward as needed.
- Use !peek-now for exploratory reads and disposable probes. Persist only the specific snippets, facts, or outputs you will need on later turns.
- Once you know the spec, the likely fix site, and the validation step, stop open-ended research and move to a patch attempt.

Examples:

Check dependencies and environment assumptions:
  '(!peek env-check
      (io/sh "which python3 && python3 --version && python3 -m pytest --version && which rg")
      pkg-check
      (io/sh "python3 - <<'PY'
import importlib.util
mods = ['pytest', 'numpy', 'pandas']
for name in mods:
    print(f'{name}:', bool(importlib.util.find_spec(name)))
PY"))
  ;; end of turn 1 completion
  (prune 2)
  ;; start of turn 2 suffix
  (think "Summary of peek output: python3 and pytest are available; rg is installed; numpy and pandas are importable.")
  '(!call-now source-hits
      (io/grep "def handle_request|class Handler" "src" {:include "*.py" :context 8 :max-count 20}))

Search for the real implementation site before editing:
  '(!peek def-hits
      (io/grep "def handle_request|class Handler" "src" {:include "*.py" :context 8 :max-count 20}))
  ;; end of turn 1 completion
  (prune 1)
  ;; start of turn 2 suffix
  (think "Summary of peek output: handle_request is defined in src/server.py and referenced from src/router.py.")
  '(!call-now impl-lines (io/read-lines "src/server.py" 201 240)
               router-lines (io/read-lines "src/router.py" 110 145))

Read exact ranges along an error trace:
  '(!peek verify
      (io/sh "cd /repo && python3 -m pytest tests/test_server.py::test_handles_empty_input -q"))
  ;; end of turn 1 completion
  (prune 1)
  ;; start of turn 2 suffix
  (persist err-summary
      "Summary of !peek output: AssertionError in test_handles_empty_input; expected empty list but got nil from handle_request.")
  '(!call-now test-lines   (io/read-lines "tests/test_server.py" 52 84)
               router-lines (io/read-lines "src/router.py" 110 145)
               impl-lines   (io/read-lines "src/server.py" 201 240))

Explore a large file ephemerally, then persist only the relevant subset:
  '(!peek file-lines (io/read-lines "src/server.py"))
  ;; end of turn 1 completion
  (prune 1)
  ;; start of turn 2 suffix
  (persist handler-block (subvec file-lines 200 240))
  '(!peek test-lines (io/read-lines "tests/test_server.py" 52 84))
  ;; end of turn 2 completion
  (prune 1)
  ;; start of turn 3 suffix

Use !peek for disposable file creation or one-off probes:
  '(!peek _
      (io/write-file "/tmp/check.py" verify-script)
      probe (io/sh "python3 /tmp/check.py"))
  ;; end of turn 1 completion
  (prune 2)
  ;; start of turn 2 suffix

Read the tests to find constraints not in the task description:
  '(!peek test-code (io/read-lines "tests/test_solution.py"))
  ;; end of turn 1 completion
  (prune 1)
  ;; start of turn 2 suffix
  (persist size-check (subvec test-code 10 16))
  (think "The test compresses output.bin with zlib and asserts the result is under 10000 bytes — I need a compact representation, not a raw dump.")

PLAN before acting:
- State what you think is going on, what parts of the system are relevant, and what you will do next.
- Identify the concrete files, commands, or artifacts involved.
- State how you will tell whether the task is complete.
- If multiple locations, layers, or output paths may matter, name them before proceeding.

Example:
  (think "Plan: inspect the parser and the failing test, update the parser behavior, then run the exact validation command and confirm the expected output/artifact.")

Your research must progress to the planning stage: gather needed context, persist what is relevant, then when you understand the existing logic, stop researching and plan.

IMPLEMENT:
- Make changes that are supported by the evidence gathered during research.
- Prefer structured io/ tools for reading and editing files.
- Use io/sh for running programs, tests, package managers, and shell utilities.
- Keep the feedback loop intact: when you need results for later reasoning, bind them with !call-now or inspect them with !peek-now.

VERIFY:
- Use the actual validation step that matches the task: exact test, exact command, exact output check, or exact artifact check.
- Use !peek-now for io/sh verification outputs, which may be verbose.
- After a failed verification, summarize what the failure means before moving on.

Example:
  '(!peek verify
      (io/sh "cd /repo && python3 -m pytest tests/test_server.py::test_handles_empty_input -q"))
  ;; end of turn 1 completion
  (prune 1)
  ;; start of turn 2 suffix
  (def err-summary "Summary of !peek output: AssertionError in test_handles_empty_input; expected empty list but got nil from handle_request.")
  '(!call-now impl-lines (io/read-lines "src/server.py" 201 240))

ITERATE:
- If verification fails, keep going. Read the failure, update your model of the task, and try again.
- If the evidence contradicts your first theory, replace the theory instead of defending it.
- Re-check your assumptions after each surprising result. Be open to the possibility that your previous reasoning, chosen file, inferred root cause, or validation method was wrong.
- If a command fails or the environment behaves unexpectedly, inspect the actual tools, files, paths, permissions, dependencies, and outputs before concluding anything.
- After a failed attempt that created resources (files, API objects, containers), reuse or clean up existing resources instead of creating duplicates.

Example:
  (think "My earlier assumption was wrong: the failure is not in src/router.py; the traceback and test output point to src/server.py, and pytest is using a different code path than my custom repro.")

COMPLETION:
- Return concise evidence for completion: what you ran or checked, what passed, and what observable result proves the task is done.
- Do not treat diagnosis, a plausible patch, or a partial check as completion.

Example:
  (think "Validation evidence: ran `python3 -m pytest tests/test_server.py::test_handles_empty_input -q` and it passed; output file `/app/out.json` now exists and contains the expected empty list.")
\end{lstlisting}

\paragraph{\texorpdfstring{\texttt{reminders\ :orchestrator}
		prompt}{reminders :orchestrator prompt}}\label{reminders-orchestrator-prompt}

\begin{lstlisting}
ORCHESTRATOR MODE - Use explicit orchestration, not just a basic single-agent loop.

This reminder is intentionally stronger than the default coding workflow. The goal is to induce real orchestration attempts. Unless the environment makes it impossible, do not solve the task as a plain single-agent research-plan-implement-verify loop. Act as an orchestrator coordinating bounded subproblems.

DEFAULT BEHAVIOR:
- Start by decomposing the task into at least two concrete subproblems or phases.
- Perform an explicit orchestration maneuver early: helper agents, parallel reads, or a summarize-and-reopen boundary.
- Even if the task looks local, still use one orchestration step before committing to the final patch.
- Treat yourself as the coordinator: gather results, compare them, decide what to trust, and integrate the final answer.

DECOMPOSE DELIBERATELY:
- Name the subproblems explicitly before editing.
- Prefer bounded delegation over open-ended "solve the whole task" prompts.
- Give helpers narrow scopes: exact files, exact question, exact stop condition.
- If there are multiple plausible root causes, investigate them in parallel instead of serially.
- If there are multiple likely fix sites, assign them as separate work items before patching.

ORCHESTRATOR RULES:
- Ask helpers for summaries, answers, diffs, or exact facts, not raw transcript dumps.
- Never assume helper success. Inspect what came back against the repository before building on it.
- If a helper fails, returns malformed code, or gets confused, salvage the validated facts and reassign or take over.
- Keep orchestrator state compact: plan, findings, decision. Compress aggressively once exploration has converged.
- Main-thread verification is still mandatory, but verification itself can be decomposed into bounded probes before the final check.

WHEN ORCHESTRATION MISFIRES:
- If workers duplicate effort, tighten ownership and resend narrower prompts.
- If workers are noisy or unreliable, replace them with direct reads or a same-agent compression step rather than carrying their whole output forward.
- If orchestration is clearly causing confusion, simplify one layer at a time instead of instantly abandoning the structure.
- Integrate partial useful results. Do not pretend earlier orchestration never happened.

Example: orchestrate broad exploration before patching.
  (think "I will treat this as two subproblems: find the exact behavioral contract, and find the implementation path that violates it.")
  '(!call-now contract
      (agents/!spawn-ask
        "Read the relevant tests/docs and return only: required behavior, exact verification command, and likely central files.")
      impl-map
      (agents/!spawn-ask
        "Read the likely implementation files and return only: probable fix site, adjacent coupled code, and obvious risks."))

Example: force one orchestration step even for a local-looking bug.
  (think "This bug looks local, but I still want an explicit orchestration step before patching.")
  '(!call-now test-view (io/read-lines "tests/test_solution.py" 1 80)
               impl-view (io/read-lines "src/solution.py" 100 180))

Example: recover by compressing after noisy delegation.
  (think "Worker outputs were noisy, but two facts survived: the serializer drops the flag and the renderer test is the right verifier.")
  (rethink "Plan: patch serializer.py directly, verify with tests/test_renderer.py::test_preserves_flag, then rerun the serializer-focused test if needed.")
  '(!llm-self "Continue from this compressed plan only. Implement the fix, verify it, and report concrete evidence.")
\end{lstlisting}

\paragraph{\texorpdfstring{\texttt{reminders\ :context-efficiency}
		prompt}{reminders :context-efficiency prompt}}\label{reminders-context-efficiency-prompt}

\begin{lstlisting}
CONTEXT EFFICIENCY — Minimize total context window usage.

Context tokens are your scarcest resource. Prune aggressively to stay effective over long tasks.

Prefer !peek-now over !call-now for disposable tool calls (auto-appends prune that removes the binding on the following extension while leaving the !peek-now call visible):
    '(!peek-now data (io/sh "find . -name '*.py'"))

On the subsequent turn, persist what you need before extending:
    ;; end of turn 1 completion
    (def data "... 200 lines ...")
    (prune 1)
    ;; start of turn 2 suffix
    ;; data is still in scope here
    (persist targets (take 5 (strings/split-lines data)))
    '(!extend)
    ;; next turn: data is pruned, the !peek-now call remains visible, and targets survive as literals

When running a shell script or Python program that you do not need to rerun, keep it inside !peek-now:
    '(!peek-now verify (io/sh "cd /repo && python - <<'PY'\nimport ...\nPY"))
    ;; end of turn 1 completion
    (prune 1)
    ;; start of turn 2 suffix
    (think "Verification passed: the fix handles both edge cases.")
    '(!extend)
    ;; next turn: the verify binding is gone; the !peek-now call remains visible

When you need to rerun a script later, write it to disk first and then call it with !call-now.

After extended reasoning, rethink to compress:
    (think "Long analysis of the bug... examining stack traces, testing hypotheses... the root cause is in parse_args line 42.")
    (rethink "The bug is in parse_args, line 42: off-by-one in the loop bound.")
    '(!extend)

When context grows large, compact:
    '(!compact)

Plan-clear pattern — reason and explore, then start fresh with a self-contained plan:
    (think "analyzing the problem..." ...)
    '(!peek-now files (io/ls "."))
    ;; end of turn 1 completion
    (def files [...])
    (def plan "Task: fix the calculator bug in calc.py\n1. Edit line 12: fix off-by-one\n2. Run tests")
    ;; start of turn 2 suffix
    '(!llm-self plan)
    ;; next turn has only the plan as prefix — maximum working space

Each extension should carry forward only what the next step needs.
\end{lstlisting}

\paragraph{\texorpdfstring{\texttt{io} namespace guide}{io namespace guide}}\label{io-namespace-guide}

\begin{lstlisting}
IO — File operations, read-only exploration, shell commands, process execution and file watching.

  (io/read-lines path)                      — read file as vector of line strings with first-line metadata
  (io/read-lines path start end)             — line range [start, end) Python-style half-open
  (io/read-file path)                        — read file as numbered-lines string
  (io/read-file path start end)
  (io/grep pattern path)           — recursive grep with line numbers (ERE by default; supports |, +, ?, {n,m}, and (...) groups)
  (io/grep pattern path {:ignore-case true :include "*.clj" :context 20 :max-count 50})
  ;; :context N returns N lines around each match — one call gets both the hit and the surrounding code
  (io/glob pattern)
  (io/glob pattern path {:type "f" :max-depth 5})
  (io/git "status")                         — run an allowlisted read-only git subcommand
  (io/str-replace path old new)              — replace string in file (must appear exactly once)
  (io/str-replace path old new {:all true})
  (io/replace-lines path start end content)  — deletes lines in half-open range [start, end)
  (io/replace-lines path start start content) — inserts content before line start
  (io/replace-lines path [[s e c] ...])      — multi-edit (line numbers refer to original file)
  (io/sh command)                            — execute shell command, returns {:exit :out :err}
  (io/sh command {:timeout 10})
  (io/sh-test command)                       — build a zero-arg shell-backed fix-loop test thunk
  (io/exec [cmd arg1 ...])                   — execute command directly (no shell)
  (io/watch-send path handle)                — watch directory, send events as message to handle

Functions identical to Clojure: slurp, spit, write-file, exists?, directory?, stat, delete, copy, move, mkdir, mkdirs, cwd, env, temp-file.
`ls` returns structured entries with `:name` and `:size`; directory names end with `/`.

Use (!describe io :fn-name) for detailed docs on any function.
All io/ calls are effect functions — quote them in the trailing expression.
Recommended functions: read-lines, grep, glob, replace-lines.

Common mistakes:

1. calling io/* outside the quoted trailing expression
2. forgetting !call-now when you need the result: '(io/read-file "x") evaluates but the result is lost
3. using io/sh for everything — use io/str-replace to patch files, io/read-file to read them, io/grep to search them
4. grep-then-read in two turns when one grep with :context N would suffice — prefer `(io/grep pat path {:context 20})` for "find + see context"

In examples, | marks cursor position in a completion.

Recommended usage pattern: Patch a file with io/str-replace.

Use io/str-replace when you know the exact text to change. It avoids shell escaping issues entirely.
  ...|'(!call-now _ (io/str-replace "/testbed/module.py"
        "        for t in diophantine(eq, param)}"
        "        for t in diophantine(eq, param, permute=permute)}"))

Recommended usage pattern: Write a script to a temp file and run it in one turn.

When you need to run a multi-line Python script, write it to a file first to avoid shell heredoc escaping errors.
  ...(def verify-script "import sys\nfrom pathlib import Path\n...")
  |'(!call-now _ (io/write-file "/tmp/verify.py" verify-script) result (io/sh "python /tmp/verify.py"))

Do NOT embed multi-line Python in io/sh heredocs — nested quoting between Spell strings, shell heredocs, and Python strings is fragile and causes SyntaxError.

Recommended usage pattern: Read and replace by line number.

1. Read the file to see current contents.
  ...|'(!call-now code (io/read-lines "main.py"))

2. Next turn: code is bound. Identify the line range, replace it.
  ...(def code ["def greet():" "    print('hello')" ...])
  |(think "Line 2 needs updating.")
  '(io/replace-lines "main.py" 2 3 "    print('goodbye')")

Recommended usage pattern: Explore multiple files and persist relevant snippets.

1. Peek full file with one-turn lifetime.
  ...|'(!peek-now file-lines (io/read-lines "main.py"))

2. Next turn: file-lines is available. Persist relevant snippets and peek another file.
  ...(def file-lines ["... many lines ..."])
  (rethink 2 "After persisting what you need, rethink 2 to drop the prior !peek-now call and binding.")
  |(persist fn-defn (subvec file-lines 99 111))
  '(!peek-now test-lines (io/read-lines "test_main.py"))

3. Next turn: fn-defn stays in context. The prior !peek-now call and file-lines were dropped by rethink 2, and test-lines is now available.
  ...
  (persist fn-defn ["def target_fn(...):" "    ..."])
  '(!peek-now test-lines (io/read-lines "test_main.py"))
  (def test-lines ["... many lines ..."])
  (rethink 2 "After persisting what you need, rethink 2 to drop the prior !peek-now call and binding.")
  |...

Recommended usage pattern: Grep through large files.

1. Read the file.
  ...|'(!call-now code (io/read-file "big-module.py"))

2. Next turn: file was too large and got truncated. Rethink to discard it, then grep for what you need.
  ...(def code "1: import os\n2: import sys\n...\n... [truncated, 58302 chars total]")
  |(rethink "File too large to scan inline. Grep for the target instead.")
  '(!call-now matches (io/grep "def handle_request" "big-module.py"))
\end{lstlisting}

\paragraph{\texorpdfstring{\texttt{io-read} namespace guide}{io-read namespace guide}}\label{io-read-namespace-guide}

\begin{lstlisting}
IO-READ — Read-only filesystem inspection, codebase exploration, and environment lookup.

  (io/slurp path)                  — read entire file as raw string
  (io/slurp-bytes path)            — read file as bytes
  (io/read-file path)              — read file as numbered lines
  (io/read-file path start end)    — read numbered line range [start, end)
  (io/read-lines path)             — read file as vector of raw lines
  (io/read-lines path start end)   — read raw line range [start, end)
  (io/grep pattern path)           — recursive grep with line numbers (ERE by default)
  (io/grep pattern path {:context 20 :ignore-case true :max-count 50})
                                    — :context N returns N lines around each match in one call
  (io/glob pattern)                — find files by name pattern
  (io/git "status")               — run an allowlisted read-only git subcommand
  (io/exists? path)                — check whether a path exists
  (io/directory? path)             — check whether a path is a directory
  (io/ls path)                     — list directory contents as [{:name :size} ...]
  (io/cwd)                         — print current working directory
  (io/stat path)                   — inspect file metadata
  (io/env) / (io/env "NAME")      — read env vars

Use io-read when a child should inspect the workspace without editing files or
running arbitrary commands. For process execution, add io-exec separately.
\end{lstlisting}

\paragraph{\texorpdfstring{\texttt{io-write} namespace guide}{io-write namespace guide}}\label{io-write-namespace-guide}

\begin{lstlisting}
IO-WRITE — Filesystem mutation and file editing.

  (io/write-file path content)               — write file contents
  (io/spit path content)                     — write or append file contents
  (io/str-replace path old new)              — replace a string in a file
  (io/str-replace path old new {:all true})  — replace all occurrences
  (io/replace-lines path start end content)  — replace a numbered line range
  (io/replace-lines path [[s e c] ...])      — apply multiple line edits atomically
  (io/mkdir path)                            — create one directory
  (io/mkdirs path)                           — create a directory tree
  (io/delete path)                           — delete a file or empty directory
  (io/copy src dest)                         — copy a file
  (io/move src dest)                         — move or rename a file
  (io/temp-file)                             — create a temp file

Use io-write when an agent should edit the workspace. Pair it with io-read
when the same agent also needs inspection helpers.
\end{lstlisting}

\paragraph{\texorpdfstring{\texttt{io-exec}}{io-exec}}\label{io-exec}

\begin{lstlisting}
IO-EXEC — Process execution helpers.

  (io/sh command)                  — execute a shell command
  (io/sh command {:timeout 10})    — execute with a timeout override
  (io/sh-test command)             — build a zero-arg shell-backed test thunk
  (io/exec [cmd arg1 ...])         — execute a command directly without a shell
  (io/watch-send path handle)      — watch a directory and send file events

Use io-exec when an agent needs command execution. Pair it with io-read for a
read-only exploration agent or with io-write for agents that both edit and run
commands.
\end{lstlisting}

\paragraph{\texorpdfstring{\texttt{agents} namespace guide}{agents namespace guide}}\label{agents-namespace-guide}

\begin{lstlisting}
AGENTS — Inter-agent communication (effect namespace).

  (agents/spawn prompt)         — start background agent using the current agent
  (agents/spawn prompt :handle-name) — same, with explicit handle name
  (agents/spawn agent prompt)   — start background agent with explicit compiled agent
  (agents/spawn agent prompt :handle-name) — explicit compiled agent + explicit handle name
  (agents/send target message)     — send message (usually a string) to target
  (agents/reply msg-map message)   — reply to msg-map, which must contain :from
  (agents/!ask target message)     — send message to target, block for reply
  (agents/!ask target)             — poke target without message, block for reply
  (agents/!ask [a b c])            — multi-target: poke all, wake when all complete
  (agents/!reply-ask msg-map message)   — reply to msg-map, block for next message
  (agents/!spawn-ask prompt) — spawn with the current agent, block until completion
  (agents/!spawn-ask prompt :handle-name) — same, with explicit handle name
  (agents/!spawn-ask agent prompt) — spawn with explicit compiled agent, block until completion
  (agents/!spawn-ask [[agent prompt] [agent prompt :name] ...]) — spawn many, wait for all completions (no ask wakeup poke)
  (agents/!spawn-ask [prompt-a prompt-b ...]) — spawn many with the current agent, wait for all completions (no ask wakeup poke)
  (agents/current-handle)          — your handle
  (agents/parent-handle)           — handle of agent that spawned you (nil if you are main)
  (agents/send-msg-fn f handle)    — low-level / not recommended

Use (!describe agents :fn-name) for detailed docs on any function.

Communication chooser:
  - Use (agents/send target value) when you know the target handle and just want
    to deliver a message. Common child->parent pattern:
      '(agents/send (agents/parent-handle) value)
  - Use (agents/reply msg-N value) only when you are replying to a received
    message binding such as msg-0. The first argument must be that msg map, not
    a raw string payload and not a handle.
  - Use (agents/!reply-ask msg-N value) when you are replying to msg-N and want
    to wait for the other side's next message.
  - Use (agents/!ask target value) when you want to send a request and block for
    the response.

Quick rule:
  - If you have a handle, use send or !ask.
  - If you have a received msg-N, use reply or !reply-ask.

Special handles:
  :main — the initial agent (entry point). Always present.
  :user — the human operator (only present in interactive terminal sessions).
  Check (globals/get :roles) to see if :user is available before asking.

Messages arrive as def bindings: (def msg-N {:from sender :body val}).
Reply using Spell code, not raw natural language.
Agents other than :main persist after returning; a later message can wake them for another turn.

Message preemption: if another agent sends you a message while your response
is in flight, the message is appended as an extension and your trailing
expression becomes inert. A (think "[preempted or awakened by msg-N]")
annotation precedes the message def. 'Preempted' means your trailing expression
did not fire; 'awakened' means you were sleeping and the message woke you.
You get a new turn with the incoming message in scope.
You may then re-run the trailing expression from your previous turn.

All agents/ calls are effect functions — quote them in the trailing expression.
Check (globals/get :roles) to discover available agents.

Common mistakes:

1. agents/send and passing turn when expecting a reply: this ends conversation, instead use agents/!ask
2. agents/reply and passing turn: same problem; use agents/!reply-ask if you need the conversation to continue
3. agents/!ask followed by additional expressions: these do not evaluate, instead put them first
4. hallucinating handles: use (agents/parent-handle), :user, :main, or look up (!print (globals/get :roles)) (if globals/ available)
5. calling agents/* outside the quoted trailing expression (for example: (def h (agents/current-handle))); effect calls must run in trailing expression code
6. agents/send argument order: it is (agents/send target message), consistent with (agents/!ask target message).
7. agents/reply needs two arguments: a received msg-N and a reply value. Wrong: '(agents/reply "hello"). Right: '(agents/reply msg-0 "hello")
8. spawned children often need send, not reply. If nobody messaged you yet this turn, you do not have a msg-N to reply to.

In examples, | marks cursor position in a completion. It is doc-only; do not type it into code.

Multi-part example:

1. Main: spawn a summarizer, keep working, then block with !ask.
  ;; turn 1: start child + continue your own CoT
  ...|'(do (agents/spawn
         "You are a summarizer. Read long-file.txt and send me a summary."
         :summarizer)
       (!extend))
  ;; next turn:
  ... |(think "...")(think "Ok, I'll wait for summarizer now")'(agents/!ask :summarizer)
  ;; main blocks until child responds

2. Summarizer child: use send to return result.
  ...(quine prompt "You are a summarizer. Read long-file.txt and send me a summary.")
  |'(!call-now file-contents (io/read-lines "long-file.txt"))
  ;; next turn
  ...(def file-contents "...")
  |(def summary "...")
  '(agents/send (agents/parent-handle) summary)
  ;; child turn ends after send

3. Main: use !reply-ask to clarify and keep the conversation open.
  ...'(agents/!ask :summarizer)
  (def msg-0 {:from :summarizer :body {...}})
  (think "I have a question about the summary.")
  |'(agents/!reply-ask msg-0 "What is the...")
  ;; child awakens; main blocks for child's response

\end{lstlisting}

\paragraph{\texorpdfstring{\texttt{globals} namespace guide}{globals namespace guide}}\label{globals-namespace-guide}

\begin{lstlisting}
GLOBALS — Shared state visible to all agents.

  (globals/get key)              — read a global by key
  (globals/set key value)        — write a global (returns the value)
  (globals/update key f)         — atomic read-modify-write (returns new value)
  (globals/pop key)              — atomic remove-and-return first element
  (globals/keys)                 — list all global keys
  (globals/all)                  — return entire globals map
  (globals/wait-until pred)      — block until pred on globals map is true

Use (!describe globals :fn-name) for detailed docs on any function.
All globals/ calls are effect functions — quote them in the trailing expression.

Common read patterns:
  1) Bind to a local with !call-now:
     '(!call-now roles (globals/get :roles))
     ;; next turn: roles is available as a local binding

  2) Print directly for quick inspection:
     '(!print (globals/get :roles))

Default special keys:
  :roles {}  — Agent registry for handle lookup.
               Convention: {:main "Orchestrator" :spawn-1 "Worker for CLI" :spawn-2 "Worker for unit testing"}
  :tasks []  — shared task queue.
               Convention: [{:id 1 :desc "read file"} {:id 2 :desc "summarize"}]

These defaults are conventions, not requirements, and are unpopulated by default. Agents may create additional keys.

Common mistakes:

1. calling globals/* outside the quoted trailing expression: (globals/get :roles) does nothing at eval time; must be quoted
2. forgetting !call-now: '(globals/get :roles) returns the value; use '(!call-now roles (globals/get :roles)) if you want to see it
3. hallucinating handles: instead, look them up in roles/ (also see agents/parent-handle and agents/current-handle)


Multi-part example — worker pool with a shared task queue:
| marks cursor position and is doc-only; do not type it into code.

1. Main: populate the queue and spawn workers.
  ...|'(do (globals/set :results [])
       (globals/set :tasks [{:id 1 :desc "summarize A"} {:id 2 :desc "summarize B"}])
       (agents/spawn "You are a worker. Pop tasks from globals :tasks and process them." :w1)
       (agents/spawn "You are a worker. Pop tasks from globals :tasks and process them." :w2)
       (globals/wait-until (fn [s] (= 2 (count (:results s))))))

2. Worker w1: claim a task atomically.
  ...|'(!call-now task (globals/pop :tasks))
  ;; next turn: task is {:id 1 :desc "summarize A"} (or nil if queue empty)

3. Worker w1: post result back.
  ...(def task {:id 1 :desc "summarize A"})
  |(def summary "A is about...")
  '(globals/update :results (fn [r] (conj (or r []) {:id 1 :summary summary})))
\end{lstlisting}

\paragraph{\texorpdfstring{\texttt{blocking} namespace guide}{blocking namespace guide}}\label{blocking-namespace-guide}

\begin{lstlisting}
BLOCKING — Future-only blocking primitives.

  (blocking/await fut)                 — await a Spell future token (future-only)
  (blocking/await-all [f1 f2 ...])     — await multiple Spell futures (future-only)
  (blocking/pmap f coll)               — parallel map with blocking join (future-only)
  (blocking/plet [a expr1 b expr2] body) — macro; parallel let with blocking/await
  (blocking/completion-promise handle) — await token for handle completion (future-only)
  (blocking/send-await handle msg)     — capture completion, send, await (future-only)

Use from inside (future ...) orchestration code.
\end{lstlisting}

\paragraph{\texorpdfstring{\texttt{patterns} namespace guide}{patterns namespace guide}}\label{patterns-namespace-guide}

\begin{lstlisting}
PATTERNS - Reusable orchestration patterns (effect namespace).

  (patterns/check-result prompt answer)  - verify answer with leaf-llm
  (patterns/clean-prompt raw-text)       - clean up messy text, then execute it
  (patterns/ralph opts)                  - future-based retry orchestrator
  (patterns/team goal-or-opts)           - planner + parallel worktree team orchestrator
  (patterns/fix-loop issue)              - test-driven code fixing loop (reflector + worker agents)
  (patterns/relay opts)                  - fresh-worker reasoning rounds with fresh verification

Use (!describe patterns :fn-name) for detailed docs on any function.

check-result: Verifies an answer using leaf-llm. Returns {:ok answer} or {:wrong msg}.
  (patterns/check-result "What is 2+2?" 4)            ;; => {:ok 4}
  (patterns/check-result "Capital of France?" "London") ;; => {:wrong "London is..."

clean-prompt: Cleans up a raw prompt (voice-to-text, quick notes) via leaf-llm, then runs it.
  '(patterns/clean-prompt "waht is the captal of franc... like the big city")
  leaf-llm infers intent and rewrites; !llm-self executes the cleaned prompt.
  Accepts a string or quine form (serializes non-strings automatically).

ralph: Retry orchestrator that runs blocking completion waits inside a future, so
the caller's agent trace stays responsive. Spawns a worker, sends task/retry
messages, waits via blocking/send-await, and sends
final {:pass result} or {:fail last-result} to the caller.
  '(!call-now started (patterns/ralph "fix failing tests"))
  ;; later receives msg with {:pass ...} or {:fail ...}

team: Multi-task implementation orchestrator. A planner decomposes the goal,
the scheduler executes dependency waves in parallel git worktrees, and a
verifier approves merges or resolves conflicts on the integration branch.
  '(!call-now result (patterns/team "Implement feature X"))
  Returns {:status :completed|:partial|:failed :tasks [...] :branch "spell-team-..."}

fix-loop: Test-driven code fixing loop. Registers a persistent reflector agent and
a persistent worker agent for the run. The root loop coordinates both via
blocking/send-await inside a future, and the caller waits via !ask-await:
reflector proposes diagnosis + test spec,
worker applies edits, and the loop retries until tests pass or retries are exhausted.
  '(!call-now result (patterns/fix-loop issue))
  Returns {:pass true} or {:fail "reason"}

relay: Reasoning relay with fresh context each round. Each round registers a new
worker, passes forward compressed prior reports, and if a worker claims :solved,
the pattern registers a fresh verifier to check the answer independently.
  '(!call-now result (patterns/relay problem))
  Returns {:solved true|false :answer any? :rounds [...]}

All patterns/ calls are effect functions - quote them in the trailing expression.

Common mistakes:

1. calling check-result outside the trailing expression: must be quoted like all effect calls
2. using team without an io-capable agent profile: workers and verifier need io/ and agents/; blocking/ is future-only and !ask-await is a builtin

In examples, | marks cursor position in a completion. It is doc-only; do not type it into code.

Example - verify then correct:

1. Compute an answer and check it.
  ...(def answer 42)
  |'(!call-now verdict (patterns/check-result "What is 6 * 9?" answer))

2. Next turn: handle the verdict.
  ...(def verdict {:wrong "6 * 9 = 54, not 42"})
  |(def answer 54)
  '(!call-now verdict (patterns/check-result "What is 6 * 9?" answer))
\end{lstlisting}

\paragraph{\texorpdfstring{\texttt{web} namespace guide}{web namespace guide}}\label{web-namespace-guide}

\begin{lstlisting}
WEB — Search and fetch web content.

  (web/search query)        — search web and return [{:title :url :snippet} ...]
  (web/fetch url)           — fetch URL and return markdown/text
  (web/config)              — inspect active web config

Recommended usage pattern: Search, then fetch the most relevant result.

1. Search and peek the results.
  ...|'(!peek-now results (web/search "clojure transducers"))

2. Next turn: results is available. Pick the best URL and fetch it.
  ...(def results {:ok [{:title "Transducers - Clojure" :url "https://clojure.org/reference/transducers" :snippet "..."} ...]})
  (rethink 2 "After persisting what you need, rethink 2 to drop the prior !peek-now call and binding.")
  |(persist best-url (get (first (:ok results)) :url))
  '(!peek-now page (web/fetch best-url))
\end{lstlisting}

\paragraph{\texorpdfstring{\texttt{react} namespace guide}{react namespace guide}}\label{react-namespace-guide}

\begin{lstlisting}
REACT - Hidden ReAct loop (effect namespace).

  (react/run prompt-or-opts) - run a plain-text command loop while hiding Spell from the inner model

Use react/run from an :init program whose trailing expression calls the
react namespace:

  (eval (do
    (def prompt "Inspect the repo and summarize the failing test.")
    '(react/run prompt)))

Map form:

  (eval (do
    '(react/run {:task "Inspect the repo and summarize the failing test."
                 :max-steps 20})))

react/run uses leaf-llm internally, and the inner model sees only a genuine plain-text
ReAct transcript: task text, prior thoughts/actions/observations, and the
required output contract Action: Command[...] or Action: Finish[...].

   Requires an agent profile that exposes react/ plus shell execution
   capability (via io/sh).
\end{lstlisting}

\subsection{Error recovery prompts}\label{app:language:error-recovery-prompts}

\Spell{} has two LLM-facing recovery prompt templates plus a tool-call retry reminder. In the templates below, \passthrough{\lstinline!<error message>!} is replaced with the cleaned reader or evaluation error.

\paragraph{Inert recovery prompt}\label{inert-recovery-prompt}

\begin{lstlisting}
The previous Spell program threw an error. Please recover from this error by writing a new program that fulfills the intent of the previous program while avoiding the error. The previous program is inert text; it will not be reevaluated, and none of its bindings are live unless you redefine them. Do not rely on bindings or data from the previous program remaining available in later prompts. If you need to keep any useful context, carry it forward now by emitting fresh summaries or literal values, or by repeating tool calls that recover the relevant files, definitions, tests, or other evidence. Reminder: Emit Spell code only, not prose. Error message: <error message>
\end{lstlisting}

\paragraph{Trailing-expression recovery
	prompt}\label{trailing-expression-recovery-prompt}

\begin{lstlisting}
The previous Spell program threw an error in its trailing expression. Please recover by continuing the same `(eval (do ...))` block with a new trailing expression that fulfills the original intent while avoiding the error. Earlier expressions in this same do block will be reevaluated first. Their bindings remain available if reevaluation still succeeds. The previous failing trailing expression is now inert earlier context because your new expression will be appended after it. Use the existing live bindings when helpful, but do not repeat the failing trailing expression unchanged. Reminder: Emit Spell code only, not prose. Error message: <error message>
\end{lstlisting}

\paragraph{Missing tool-call retry
	prompt}\label{missing-tool-call-retry-prompt}

For tool-call-based API providers, this prompt is injected by the \passthrough{\lstinline!call-llm!} function when retrying an API call after a previous attempt failed to produce a tool call.

\begin{lstlisting}
;; system: retrying because the previous response did not call the required spell_suffix tool.
;; Respond with exactly one spell_suffix tool call. Do not send assistant text, markdown, or a thinking-only response.
;; The raw Spell suffix must be in the spell_suffix tool input.
\end{lstlisting}

\subsubsection{Comparison With Codex CLI and Claude
	Code}\label{comparison-with-codex-cli-and-claude-code}

The amount of prompting in \Spell{} is comparable to contemporary coding-agent harnesses, although it is allocated differently. The \Spell{} system prompt variant contains about 3.0k words, and the coding prompt adds another 1.0k words when loaded. The open-source Codex CLI prompt surface is of the same order: model-specific prompt files contain 1.1-1.2k words, while the combined prompt with apply-patch instructions contains about 3.9k words before adding tool schemas. Claude Code does not publish a single stable system-prompt word count, but its documented prompt surface is similarly modular: project memory files, scoped rules, permission modes, plan mode, subagents, tool permissions, skills, and subagent memory are injected according to session state. The central comparison is therefore not whether \Spell{} is unusually verbose, but where the prompt budget is spent.

All three prescribe a workflow, either in the system prompt or through mode-specific instructions. \Spell{}'s coding reminder prescribes a research, plan, implement, verify, iterate loop: inspect the real environment, persist only evidence needed across turns, patch from that evidence, and treat failed verification as feedback. Codex CLI, similarly, provides guidance to plan nontrivial work, validate progressively, and continue until the task is resolved; Claude Code's plan-mode prompt provides guidance for codebase exploration before implementation.

All three also provide rules for coding-agent best practices. They overlap on core practices such as searching with fast repository tools, grounding changes in observed files and test output, keeping edits scoped, and reporting validation honestly. A typical shared rule is that the agent must not overwrite or revert unrelated user work. The non-shared rules reflect each harness's distinctive surface: \Spell{} warns about suffix-only output, the single trailing expression, recovery prompts, and unproductive \passthrough{\lstinline"!extend"} loops; Codex CLI emphasizes patch discipline, sandbox and approval behavior, and final-answer formatting; Claude Code emphasizes permission modes, \passthrough{\lstinline!CLAUDE.md!} loading, subagent permissions, memory, and plan-mode boundaries.

The most important difference is that \Spell{} prompts explain the semantics of \Spell{} and teach how it should and should not be used. A large fraction of the \Spell{} base prompt describes suffix completion, the completion wrapper, quines, double evaluation, namespace availability, self-calls, effect calls, materialized tool results, context pruning, and recovery. Codex CLI and Claude Code also devote substantial prompt and tool-schema budget to tool use, but those descriptions mostly tell the model how to call externally implemented tools and obey harness policy. \Spell{} has to teach a programming interface: the model's output is executable control flow that the runtime will parse and evaluate. This additional semantic instruction is the price of making orchestration part of the generated program rather than a fixed behavior of the surrounding harness.

\subsection{Transport}\label{app:language:transport}

\subsubsection{Transport variants}\label{transport-variants}

\Spell{}'s provider transport has a single semantic contract: each LM call receives a prefix of a \Spell{} program, and the visible model output must be exactly the suffix that completes it. The runtime parses and evaluates the completion. However, LM APIs may not expose full control over the prefix that is passed to an LM; instead, they often impose mandatory fencing to distinguish system prompts, user messages, model responses, and tool calls. \Spell{} supports three transport mechanisms.

First, prefill transport sends the \Spell{} prefix as assistant prefill or as a completion-style prompt prefix. The model is instructed via a user message (\passthrough{\lstinline!"Continue this Spell program."!}) to continue the prefix directly. In this transport mechanism, the program appears as one contiguous program in the context window of the LM, and this is the first-choice transport mode for providers that support it. The Fireworks completion provider supports prefill transport; in this paper, it was used for diagnostic open-weight transport comparisons rather than for the main open-weight benchmark results.

Second, message transport sends the \Spell{} prefix as the user message and asks for a raw assistant-message suffix. This transport is an option for APIs or model settings that do not support assistant prefill. Because some models echo all or part of the prefix when asked to continue text from a user message, \Spell{} strips an exact or whitespace-normalized prefix echo before appending the response. This differs from prefill transport because the prefix and suffix are separated within the model's context window by fencing mandated by the API.

Third, tool-call transport sends the \Spell{} prefix as ordinary user/input content but requires the response to arrive in a custom tool call named \passthrough{\lstinline!spell\_suffix!}. Similar to message transport, the prefix and suffix are separated in the model's context window by fencing. This paper uses tool-call transport for Anthropic, OpenAI, and Fireworks tool-call runs. Preliminary analyses did not show a difference between message and tool-call transport.

\subsection{Agent and provider
	configuration}\label{agent-and-provider-configuration}

\Spell{} separates LM configuration into two declarative EDN files. An \passthrough{\lstinline!.agent.edn!} file describes the \Spell{}-facing execution environment: which system prompt is used, which namespaces are visible, which subagents can be called through \passthrough{\lstinline!llms/!}, and which run-level defaults apply. A \passthrough{\lstinline!.provider.edn!} file describes the API-facing provider: which implementation is used, where credentials come from, which model and endpoint are selected, and how usage should be priced. The transport section describes how those providers map API calls onto \Spell{}'s prefix-suffix contract. The configuration layer records the chosen pairing between that transport and a \Spell{} language profile.

\subsubsection{Agent files}\label{agent-files}

Agent files live under \passthrough{\lstinline!config/agents/!} and are loaded by \passthrough{\lstinline!spell.agent/load-agent-spec!}. The three base agents correspond to the three transport-specific system prompts:

\begin{lstlisting}
;; config/agents/base-tc.agent.edn
{:name base-tc
 :doc "Base agent for tool-call providers"
 :system {:file "../prompts/sysprompt-toolcall.txt"}
 :llms []}
\end{lstlisting}

The base files intentionally expose no effect namespaces. Specialized agents inherit from a base and add capabilities. For example, \passthrough{\lstinline!io-tc.agent.edn!} pairs the tool-call system prompt with filesystem and shell access (\lstinline[language=spell]!io/!), inter-agent communication (\lstinline[language=spell]!agents/!), and shared-global state (\lstinline[language=spell]!globals/!):

\begin{lstlisting}
{:base base-tc.agent.edn
 :name io-tc
 :doc "Tool-call transport benchmark agent with io namespace (web disabled by default)"
 :llms {explore explore.agent.edn}

 :namespaces
 {io      stdlib/io
  agents  stdlib/agents
  globals stdlib/globals}}
\end{lstlisting}

Inheritance is file-based. \passthrough{\lstinline!:base!} is resolved relative to the child file, then the child definition is merged onto the parent. Scalar fields such as \passthrough{\lstinline!:name!}, \passthrough{\lstinline!:doc!}, \passthrough{\lstinline!:system!}, \passthrough{\lstinline!:model!}, \passthrough{\lstinline!:budget!}, \passthrough{\lstinline!:provider!}, \passthrough{\lstinline!:thinking!}, \passthrough{\lstinline!:reasoning-effort!}, \passthrough{\lstinline!:verbosity!}, \passthrough{\lstinline!:recover!}, \passthrough{\lstinline!:format!}, and retry or grammar settings are replaced by the child when present. \passthrough{\lstinline!:namespaces!} is merged as a map, so a child can add or override individual namespace entries. The \passthrough{\lstinline!:llms!} field is scalar for this purpose: a child either replaces it, omits it and gets auto-discovery when no inherited value is present, or sets \passthrough{\lstinline![]!} to opt out.

Namespace values are resolved by \passthrough{\lstinline!spell.agent/resolve-namespace-value!}. The common case is a \passthrough{\lstinline!stdlib/...!} symbol, but the loader also supports several extension forms:

\begin{lstlisting}
{:namespaces
 {io       stdlib/io                  ; stdlib namespace map
  io-lite  [stdlib/io-read stdlib/io-write] ; merge namespace maps
  helper   local_helpers.clj/helper-ns ; Clojure var from a file
  prompt   {:file "extra_prompt.txt"}  ; file contents as a string
  tools    {:file "tools.clj"
            :items {foo foo-ns
                    bar bar-ns}}
  worker   worker.agent.edn}}          ; compiled subagent function
\end{lstlisting}

After inheritance, \passthrough{\lstinline!compile-agent-spec!} resolves the system prompt, provider, namespaces, and \passthrough{\lstinline!llms/!} namespace, validates declared pattern dependencies, and passes the resulting data to \passthrough{\lstinline!spell.llm/compile-agent!}. The compiled value is a spawn function. It can be invoked as the root agent or passed into \passthrough{\lstinline!agents/spawn!} / \passthrough{\lstinline"agents/!spawn-ask"} for subagent execution.

The \passthrough{\lstinline!:llms!} field is a convenience mechanism for named subagents. A map such as \passthrough{\lstinline!\{:llms \{explore explore.agent.edn\}\}!} exposes \passthrough{\lstinline!(llms/explore prompt)!} inside the agent. Each entry compiles a normal agent spec, inheriting the parent provider and model unless the subagent spec overrides them. The loader uses lazy atoms while building this namespace, so mutually referential agent profiles can be compiled without ordering problems.

\subsubsection{Capability boundaries}\label{app:language:capability-boundaries}

SPE is not inherently more dangerous than other ways of letting a language model act through tools: a self-programmed agent can only exercise the capabilities exposed by its runtime. However, if dangerous capabilities are exposed, SPE can exacerbate misuse or accident risk because the user delegates more of the agent's internal state management to the model-written program. In particular, the model can decide what to retain, prune, summarize, or reveal in its own context window. This flexibility is useful for long-running work, but it also means that the user has less direct control over the exact internal context from which later actions are chosen.

\Spell{}'s primary capability boundary is the namespace surface. Pure core namespaces are always available, while effect namespaces are opt-in through the agent configuration. For example, a read-only exploration agent can be given \lstinline[language=spell]!io-read/! without \lstinline[language=spell]!io-write/! or \lstinline[language=spell]!io-exec/!, while an editing agent can be given write helpers only when mutation is intended. This design makes capabilities inspectable in the same place as model, provider, prompt, and subagent configuration.

\subsubsection{Provider files}\label{provider-files}

Provider files live under \passthrough{\lstinline!config/providers/!} and are loaded by \passthrough{\lstinline!spell.provider/load-provider!}. They instantiate an implementation of the \passthrough{\lstinline!LLMProvider!} protocol. A typical OpenAI tool-call provider is:

\begin{lstlisting}
{:type :openai
 :api-key-env "OPENAI_API_KEY"
 :model "gpt-5.4"
 :use-responses-api true
 :force-tool-call true
 :max-tokens 32768
 :default-agent "../agents/base-tc.agent.edn"}
\end{lstlisting}

The provider \passthrough{\lstinline!:type!} selects the implementation. The supported file-backed types in the v0.1.0 reference implementation are \passthrough{\lstinline!:anthropic-pf!}, \passthrough{\lstinline!:anthropic-tc!}, \passthrough{\lstinline!:openai!}, \passthrough{\lstinline!:codex-tc!}, \passthrough{\lstinline!:fireworks!}, \passthrough{\lstinline!:ollama!}, and \passthrough{\lstinline!:test!} (described below). These types imply the low-level API adaptation described in Section~\ref{app:language:transport}; the remaining keys are implementation options. Common options include \passthrough{\lstinline!:api-key-env!}, \passthrough{\lstinline!:base-url!}, \passthrough{\lstinline!:model!}, \passthrough{\lstinline!:max-tokens!}, \passthrough{\lstinline!:costs!}, \passthrough{\lstinline!:cache-read-ratio!}, \passthrough{\lstinline!:prompt-cache-key!}, and \passthrough{\lstinline!:request-timeout-sec!}. Provider-specific options include \passthrough{\lstinline!:use-responses-api!} and \passthrough{\lstinline!:force-tool-call!} for OpenAI Responses custom-tool mode, \passthrough{\lstinline!:auth-file!} and \passthrough{\lstinline!:account-id!} for the ChatGPT-backed Codex provider, \passthrough{\lstinline!:chat-template!} and \passthrough{\lstinline!:convert-think?!} for Fireworks-style completion models, and \passthrough{\lstinline!:responses!}, \passthrough{\lstinline!:response!}, \passthrough{\lstinline!:response-rules!}, and \passthrough{\lstinline!:prefill?!} for the test provider.

The \passthrough{\lstinline!:default-agent!} key is metadata rather than a constructor argument. It is read by \passthrough{\lstinline!provider-edn-default-agent!} so wrappers around \passthrough{\lstinline!spell.api/run!} can choose a base agent whose system prompt matches the provider transport. The provider instance itself only knows how to call the model; the agent file supplies the \Spell{} prompt and namespace surface.

Provider specs can also appear inline inside an agent file:

\begin{lstlisting}
{:base base-tc.agent.edn
 :name local-openai-toolcall
 :provider {:file "../providers/openai-tc.provider.edn"}
 :namespaces {io stdlib/io}}
\end{lstlisting}

or, equivalently, as a complete inline map with the same keys as a \passthrough{\lstinline!.provider.edn!} file. Programmatic callers may also pass an already-created provider object to \passthrough{\lstinline!spell.api/run!}; in that case the object is injected into the loaded agent spec before compilation.

\subsubsection{Testing and interactive
	providers}\label{testing-and-interactive-providers}

The \passthrough{\lstinline!:test!} provider is the provider used by \Spell{}'s unit tests and by lightweight smoke runs. It implements the same \passthrough{\lstinline!LLMProvider!} protocol as real model transports, but replaces the external LM call with a deterministic response lookup. A test provider can be constructed directly:

\begin{lstlisting}
(provider/test-provider
  {:responses {"(do " "42)"}
   :response-rules [{:includes ["classify" "sentiment"]
                     :excludes ["retry"]
                     :response "{:label :positive})"}]
   :prefill? false})
\end{lstlisting}

or through a provider config:

\begin{lstlisting}
{:type :test
 :response-rules [{:includes ["hello"]
                   :response "\"world\""}]
 :prefill? true}
\end{lstlisting}

Lookup proceeds in a fixed order. First, \passthrough{\lstinline!:responses!} is checked for an exact prompt-string match. Second, a programmatic \passthrough{\lstinline!:response-fn!}, when the provider is constructed directly, may return a response for prompts that contain nondeterministic fragments such as generated handles; a static \passthrough{\lstinline!:response!} uses this same slot as a catch-all response. Third, if no exact or function response is available, \passthrough{\lstinline!:response-rules!} are scanned in order. Each rule requires all \passthrough{\lstinline!:includes!} strings to be present and all optional \passthrough{\lstinline!:excludes!} strings to be absent. Entries may be strings or maps of the form \passthrough{\lstinline!\{:response string :latency ms\}!}, allowing tests to exercise asynchronous behavior without calling a network service. If no entry matches, the provider throws an exception containing the full prompt, which makes exact prompt fixtures easy to create by copying the observed prompt into \passthrough{\lstinline!:responses!}.

The \passthrough{\lstinline!:prefill?!} flag controls whether \passthrough{\lstinline!supports-prefill!} returns true. This is important because \Spell{} has distinct prompt assembly paths for prefill, message-style, and tool-call providers. The test provider therefore tests the compiled agent and prompt-as-prefix machinery, not only pure evaluator functions. In practice, most tests use helpers such as \passthrough{\lstinline!make-test-agent!}, which compile a normal agent spec with a test provider injected, then run through \passthrough{\lstinline"!llm-self"} just as a real model-backed agent would.

\Spell{} also has an interactive user transport for debugging and demonstrations. Selecting model \passthrough{\lstinline!user!} in the CLI constructs \passthrough{\lstinline!spell.provider/user-provider!} instead of a network provider. This provider prints the system prompt, user message, and any prefill prefix to stderr, then reads the completion from stdin. The human can therefore write the \Spell{} continuation manually, exactly where the LM response would have appeared. This is distinct from the runtime \passthrough{\lstinline!:user!} agent handle. In interactive terminal sessions, \passthrough{\lstinline!spell.api/run!} can also register \passthrough{\lstinline!:user!} as an agent, so model agents may send messages to the human with \passthrough{\lstinline!agents/send!} or \passthrough{\lstinline"agents/!ask"}. The former path makes the human stand in for the LM; the latter makes the human one participant in the agent runtime.

\subsubsection{Resolution at runtime}\label{resolution-at-runtime}

\passthrough{\lstinline!spell.api/run!} is the public runtime entry
point. It requires an \passthrough{\lstinline!:agent!} path and exactly
one of \passthrough{\lstinline!:prompt!} or
\passthrough{\lstinline!:init!}. The provider can come from the agent
file or from the call itself:

\begin{lstlisting}
(spell.api/run
  {:agent "config/agents/io-tc.agent.edn"
   :provider (spell.provider/openai-provider {:model "gpt-5.4"})
   :prompt "Find and fix the failing test."})
\end{lstlisting}

If a provider is supplied to \passthrough{\lstinline!spell.api/run!}, it overrides the provider from the agent file before compilation. The CLI defaults to \passthrough{\lstinline!config/agents/cli.agent.edn!} and constructs a provider from its \passthrough{\lstinline!-m!}/\passthrough{\lstinline!--model!} flags. Benchmark wrappers can infer an agent from a provider file's \passthrough{\lstinline!:default-agent!}. The runtime core itself has no hidden global provider or agent default.

This split is the main design point of the config layer. Agent files answer ``what kind of \Spell{} agent is this?'' Provider files answer ``how does this agent call an LM?'' Testing and interactive modes preserve that split by swapping only the provider side of the boundary.

\subsection{\Spell{} runtime}\label{app:language:spell-runtime}

This section describes the implementation of the \Spell{} runtime.

\subsubsection{\texorpdfstring{\texttt{spell-eval}}{spell-eval}}\label{spell-eval}

\passthrough{\lstinline!spell-eval!} is the core evaluator. It takes an
expression and an environment and returns a result map:

\begin{lstlisting}
{:ok value :env env'}
{:err msg :env env :expr expr :trace [...]}
\end{lstlisting}

\passthrough{\lstinline!spell-eval!} is common across agents.

\subsubsection{\texorpdfstring{\texttt{eval}}{eval}}\label{eval}

Each agent gets its own \passthrough{\lstinline!eval!} builtin. This builtin provides the outer, effectful evaluation step used by the completion wrapper. It merges pure builtins with the effectful namespace set that the agent is allowed to access. In this way, all agents share the same core evaluator but differ in the effect surface exposed at the trailing-expression boundary.

\subsubsection{\texorpdfstring{\texttt{box} and the inside function}{box and the inside function}}\label{box-and-the-inside-function}

\passthrough{\lstinline!box!} is the execution wrapper that processes
completions. It is invoked by any self-call. It takes three inputs: an
agent handle, a raw completion source (often a
\passthrough{\lstinline!future!}, but also a promise or raw string),
and an inside function that acts on the completion. In particular, the inside function
might do one of three things:
\begin{itemize}
	\tightlist
	\item evaluate the completion (the ``awake function'');
	\item await some signal, then evaluate the completion (the ``asleep function'');
	\item evaluate a root completion, deliver its result, and install a sleeping continuation for the next wakeup cycle (the ``root function'').
\end{itemize}

This allows a \Spell{} program to request a wakeup signal, sleep until that signal is delivered, install a new program which might depend on the signal or some other global state (for example, an incoming message), and resume execution of the modified program. The waiting step is dictated by the model's program, but the actual waiting occurs within \passthrough{\lstinline!box!} instead of \passthrough{\lstinline!spell-eval!}. If the waiting were to occur within \passthrough{\lstinline!spell-eval!}, then it would necessitate some mechanism to replace the program currently being evaluated with a new, global state-dependent program mid-evaluation.

\subsubsection{\texorpdfstring{\texttt{call-llm}}{call-llm}}\label{call-llm}

\passthrough{\lstinline!call-llm!} makes the actual LM API call. It is
responsible for model routing, token usage tracking, system prompt
injection, API error handling and retry logic, prompt caching
configuration, and more generally any logic that is API-provider
specific. It is configured by a \passthrough{\lstinline!provider.edn!}
file.

\subsubsection{End-to-end flow}\label{end-to-end-flow}

A typical root execution looks like this:

\begin{enumerate}
	\def\labelenumi{\arabic{enumi}.}
	\tightlist
	\item
	      For each agent, create an \passthrough{\lstinline!eval!} function and
	      install it within agent-specific inside functions.
	\item
	      Construct an initial program from a user prompt.
	\item
	      For the root inside function of the main agent, run
	      \passthrough{\lstinline!(box :main init-program root-inside-fn)!}.
	\item
	      All subsequent execution occurs inside of this function call; for
	      example, the initial program usually makes a self-call, which triggers
	      the creation of a new \passthrough{\lstinline!box!}.
\end{enumerate}


\clearpage
\section{Benchmarking methods and results}
\label{app:bench}

\subsection{Shared evaluation configuration}\label{app:bench:config}

\subsubsection{Compared Agents}\label{compared-agents}

\paragraph{\Spell{} agent.} The \Spell{} agent was configured with the tool-call transport agent profile \texttt{config/agents/io-tc.agent.edn}. This profile exposes \texttt{io/} for filesystem and shell operations, \texttt{agents/} for background agents and inter-agent communication, and \texttt{globals/} for shared state. The benchmark profile intentionally omits the \texttt{web/} namespace, so \Spell{} did not have live web access in the retained benchmark analyses. The core namespaces \texttt{strings/}, \texttt{math/}, \texttt{builtins/}, and \texttt{reminders/} are always available as documented language functions and prompt reminders.

\Spell{} runs are initialized by running the following \Spell{} program:

\begin{verbatim}
(quine completion
  (eval (do
    (quine prompt "<benchmark prompt>")
    '(!extend))))
\end{verbatim}

The trailing expression \texttt{\textquotesingle{}(!extend)} produces a self-call with the initial program (minus trailing parentheses) as its prefix. An exception is that in a secondary Terminal-Bench analysis, a coding-task prompt (Appendix~\ref{app:language:reminders-coding-prompt}) was given via the trailing expression \texttt{\textquotesingle{}(!describe reminders :coding)}. This expression likewise produces a self-call, appending a prompt as a string literal to the prefix.

\paragraph{Codex CLI.} Codex CLI v0.120.0 was invoked as a baseline harness with the same underlying OpenAI model where applicable. The benchmark adapter passed the benchmark prompt directly to \texttt{codex\ exec}, set the requested model, disabled live web search with \texttt{-c\ web\_search="disabled"}, emitted JSON logs with \texttt{-\/-json}, skipped repository checks with \texttt{-\/-skip-git-repo-check}, used \texttt{-\/-sandbox\ danger-full-access}, and set \texttt{model\_reasoning\_effort} when the run specified low, medium, or high effort. It received no additional benchmark-specific or user-specific prompt beyond the task prompt.

\paragraph{Claude Code.} Claude Code v2.1.107 was invoked with the benchmark prompt through \texttt{claude\ -p}, stream JSON output, verbose mode, an explicit allowed-tools list, the Opus 4.6 model, medium effort, and a run budget. It received no additional benchmark-specific or user-specific prompt beyond the task prompt.

\subsubsection{Transport, Caching, and Effort}\label{transport-caching-budgets-and-effort}

\paragraph{GPT-5.4.} When configured with GPT-5.4, the \Spell{} agent used tool-call transport and the OpenAI Responses API. The API request requires the model to emit a \texttt{spell\_suffix} tool call whose input is the raw \Spell{} suffix (see Appendix~\ref{app:language:system-prompts}). OpenAI prompt caching used a stable \texttt{prompt\_cache\_key} after the first self-call, so repeated calls from one \Spell{} run shared a cache partition.

\paragraph{Claude Opus 4.6.} When configured with Claude Opus 4.6, the \Spell{} agent used tool-call transport and the Anthropic Messages API. The API request likewise requires the model to emit a \texttt{spell\_suffix} tool use whose input contains the raw \Spell{} suffix. Anthropic prompt caching used ephemeral \texttt{cache\_control} markers on cacheable system text and on shared user-prompt prefixes after the first self-call. Medium effort was used, equating to a 10,000-token thinking budget.

\paragraph{Open-weight models.} GLM-5.1, Kimi-K2.6, and Qwen3.6 Plus were run through Fireworks serverless inference using tool-call transport. The API request requires the model to emit a \texttt{spell\_suffix} tool call whose input is the raw \Spell{} suffix, matching the transport pattern used for the closed-source tool-call runs. GLM-5.1 and Qwen3.6 Plus were run with each endpoint's high-effort thinking mode enabled: \texttt{reasoning\_effort="high"} for GLM-5.1, and the equivalent 32,000-token thinking budget for Qwen3.6 Plus. Kimi-K2.6 was run without a reasoning parameter because the deployment used does not expose one. We also tested a completion-prefill transport, in which the model was prompted to continue the open \Spell{} program directly (Appendix~\ref{app:bench:obs1}).

\subsection{Orchestration games}\label{app:bench:orchestration-games}

I created three ``orchestration games'' mocking plausibly-useful orchestration patterns. The three game prompts were:

\begin{lstlisting}[basicstyle=\ttfamily\footnotesize,breaklines=true]
Auction:
Return a sealed-bid auction report for a painting. Ask three independent !llm-self child bidders for one integer bid from 100 to 1000. Then return all bids and the winner or tie.

Telephone:
Return an 8-step telephone relay report. Start with: "The museum closes at five because the winter storm is approaching." Write a deterministic game loop over relay numbers 1 through 8. At iteration k, pass the message from relay k-1 to a fresh !llm-self relay k with a prompt asking it to rewrite the message while preserving meaning. Bind the returned wording as the message for the next iteration. When the loop completes, return the initial wording, each relay wording, and the final wording. Do not rewrite relay messages yourself and do not use agents/.

Twenty questions:
Play an 8-turn yes/no guessing game. The host should choose one ordinary animal as the hidden answer and keep it out of all worker prompts. On each turn, call a fresh !llm-self worker with only the public transcript and a prompt asking for one yes/no question or GUESS: animal. The host answers truthfully from the hidden answer, appends Worker: and Host: lines to the public transcript, and passes only that public transcript to the next fresh worker. Stop after a correct guess or after 8 worker turns. Return the hidden answer and the full transcript. Do not use agents/ or leaf-llm.
\end{lstlisting}

\begin{longtable}[]{@{}
	>{\raggedright\arraybackslash}p{(\linewidth - 6\tabcolsep) * \real{0.1800}}
	>{\raggedright\arraybackslash}p{(\linewidth - 6\tabcolsep) * \real{0.4600}}
	>{\raggedright\arraybackslash}p{(\linewidth - 6\tabcolsep) * \real{0.1600}}
	>{\raggedright\arraybackslash}p{(\linewidth - 6\tabcolsep) * \real{0.2000}}@{}}
	\caption{Prompt-only orchestration-game results with GPT-5.4 medium and tool-call transport. Completed reports counts instances in which the \Spell{} agent returned a plausible report; the trace audit, performed by GPT-5.5, scored whether the generated \Spell{} programs implemented the intended orchestration pattern and all players followed the game rules.}\label{tab:orchestration-games-gpt54} \\
	\toprule\noalign{}
	Game               & Intended orchestration                                       & Completed reports & Trace audit                                                                                                                                                                                                                                                                                                         \\
	\midrule\noalign{}
	\endhead
	\bottomrule\noalign{}
	\endlastfoot
	Sealed-bid auction & Three independent \texttt{!llm-self} bidders                 & 8/8               & 8/8 pass                                                                                                                                                                                                                                                                                                            \\
	Telephone relay    & Deterministic relay loop over fresh \texttt{!llm-self} calls & 7/8               & 4/8 approximate pass                                                                                                                                                                                                                                                                                                \\
	Twenty questions   & Hidden-answer host with fresh public-transcript workers      & 7/8               & 7/8 pass                                                                                                                                                                                                                                                                                                            \\
\end{longtable}

In the auction task, the orchestrator made independent self-calls for three bidders and collected them back before declaring a winner on a subsequent turn. A representative fragment is:

\begin{lstlisting}[language=spell]
'(!call-now bid1 (!llm-self (wrap-cat bidder1-prompt))
            bid2 (!llm-self (wrap-cat bidder2-prompt))
            bid3 (!llm-self (wrap-cat bidder3-prompt)))
\end{lstlisting}

On the telephone task, four of eight trials approximately used the intended deterministic loop pattern; one of these was weaker because it performed an initial setup relay outside the loop before looping over the remaining relays. The relevant fragment of a representative clean example was:

\begin{lstlisting}[language=spell]
(defn relay-prompt [k wording]
  (str "You are telephone relay " k " of 8.\n"
       "Rewrite the message while preserving meaning exactly.\n"
       "Return exactly one Spell string literal containing only the rewritten message.\n"
       "Do not add commentary, markdown, or any extra text.\n"
       "Message: " wording))

(defn relay-program [k wording]
  (wrap-cat
    (str "(quine prompt " (pr-str (relay-prompt k wording)) ")")))

'(let [relays
       (loop [k 1
              current initial-wording
              acc []]
         (if (> k 8)
           acc
           (let [next-wording (!llm-self (relay-program k current))]
             (recur (+ k 1)
                    next-wording
                    (conj acc {:relay k :wording next-wording})))))
       final-wording (if (empty? relays) initial-wording (:wording (last relays)))]
   {:initial initial-wording
    :relays relays
    :final final-wording})
\end{lstlisting}

The twenty-questions task tested whether the model could separate private host state from public worker context. The prompt did not name the hidden animal; successful programs chose an ordinary animal internally, sent only the public transcript to each fresh worker call, answered the worker truthfully as the host, and stopped after a correct guess or eight turns. A representative beginning was:

\begin{lstlisting}[language=spell]
(def hidden-answer "cat")
(def public-transcript "")
(def max-turns 8)
(quine worker-rules "You are a fresh worker in an 8-turn yes/no animal guessing game. You know only the public transcript provided as the next expression. Output exactly one Spell string literal and nothing else. The string content must be either one yes/no question about the hidden ordinary animal or GUESS: animal.")
'(!call-now worker-1 (!llm-self (wrap-cat worker-rules public-transcript)))
\end{lstlisting}

\subsection{Benchmark protocols}\label{app:bench:protocols}

Benchmark analyses were performed on GCP virtual machines. Table~\ref{tab:compute-resources} summarizes the compute resources used by the retained benchmark analyses. API spend is model-provider API spend reconstructed from benchmark usage logs, not GCP VM cost. Pilot runs, failed dispatches, and aborted diagnostic reruns are excluded. Some rows share source artifacts: for example, Figure~\ref{fig:obs5} reuses the medium-effort Terminal-Bench and SWE-bench Lite rows from the GPT-5.4 frontier analyses, and Figure~\ref{fig:obs1} includes GPT-5.4 subset values drawn from the corresponding full-run artifacts.

\footnotesize
\begin{longtable}[]{@{}
	>{\raggedright\arraybackslash}p{(\linewidth - 10\tabcolsep) * \real{0.2200}}
	>{\raggedright\arraybackslash}p{(\linewidth - 10\tabcolsep) * \real{0.1800}}
	>{\raggedright\arraybackslash}p{(\linewidth - 10\tabcolsep) * \real{0.1600}}
	>{\raggedright\arraybackslash}p{(\linewidth - 10\tabcolsep) * \real{0.2200}}
	>{\raggedright\arraybackslash}p{(\linewidth - 10\tabcolsep) * \real{0.1000}}
	>{\raggedleft\arraybackslash}p{(\linewidth - 10\tabcolsep) * \real{0.1200}}@{}}
	\caption{Compute resources for retained benchmark analyses. Wall-clock is elapsed source-run time when run metadata was retained; for sharded JSONL-only runs it is the approximate span from the first to last completed item record.}\label{tab:compute-resources}                                                                                                                                                                                     \\
	\toprule\noalign{}
	\begin{minipage}[b]{\linewidth}\raggedright
		Analysis
	\end{minipage}         & \begin{minipage}[b]{\linewidth}\raggedright
		                         VMs and machine type
	                         \end{minipage}                                                                               & \begin{minipage}[b]{\linewidth}\raggedright
		                                                                                                                        Tasks
	                                                                                                                        \end{minipage} & \begin{minipage}[b]{\linewidth}\raggedright
		                                                                                                                                         Concurrency and caps
	                                                                                                                                         \end{minipage}                                                                                   & \begin{minipage}[b]{\linewidth}\raggedright
		                                                                                                                                                                                                                                            Wall-clock
	                                                                                                                                                                                                                                            \end{minipage}                          & \begin{minipage}[b]{\linewidth}\raggedleft
		                                                                                                                                                                                                                                                                                      API spend
	                                                                                                                                                                                                                                                                                      \end{minipage}                                                                                                                          \\
	\midrule\noalign{}
	\endfirsthead
	\toprule\noalign{}
	\begin{minipage}[b]{\linewidth}\raggedright
		Analysis
	\end{minipage}         & \begin{minipage}[b]{\linewidth}\raggedright
		                         VMs and machine type
	                         \end{minipage}                                                                               & \begin{minipage}[b]{\linewidth}\raggedright
		                                                                                                                        Tasks
	                                                                                                                        \end{minipage} & \begin{minipage}[b]{\linewidth}\raggedright
		                                                                                                                                         Concurrency and caps
	                                                                                                                                         \end{minipage}                                                                                   & \begin{minipage}[b]{\linewidth}\raggedright
		                                                                                                                                                                                                                                            Wall-clock
	                                                                                                                                                                                                                                            \end{minipage}                          & \begin{minipage}[b]{\linewidth}\raggedleft
		                                                                                                                                                                                                                                                                                      API spend
	                                                                                                                                                                                                                                                                                      \end{minipage}                                                                                                                          \\
	\midrule\noalign{}
	\endhead
	\bottomrule\noalign{}
	\endlastfoot
	Observation 1 model diagnostics, Terminal-Bench 1.1 & Dedicated subset runs used \texttt{e2-standard-4}; the GPT-5.4 subset was drawn from a full \texttt{e2-standard-8} run    & 5 models $\times$ 32 tasks                  & \texttt{-\/-n-concurrent} 2--4; 3600 s agent cap; 600 s verifier cap                                                          & 2.5--5.0 h for full GPT source; open-weight reruns 5--6 h end-to-end & \texttt{\$47.34}  \\
	Observation 1 model diagnostics, SWE-bench Lite     & \texttt{e2-standard-16}, 300 GB boot disk; one or two VMs per model subset, with GPT-5.4 drawn from three full-run shards & 5 models $\times$ 32 tasks                  & \texttt{-\/-parallel} 4 for most rows; Opus subset used 8; 1800--3600 s per item                                              & 5--7 h for retained source runs                                      & \texttt{\$54.33}  \\
	GPT-5.4 Terminal-Bench frontier                     & 6 VMs, \texttt{e2-standard-8} (\Spell{} and Codex CLI at low/medium/high effort)                                             & 6 $\times$ 80 tasks                         & \texttt{-\/-n-concurrent} 4; 3600 s agent cap; 600 s verifier cap                                                             & 2.5--5.0 h                                                           & \texttt{\$248.95} \\
	GPT-5.4 Terminal-Bench coding intervention          & Same Terminal-Bench harness and caps as the frontier \Spell{} rows                                                           & 3 $\times$ 80 tasks                         & \texttt{-\/-n-concurrent} 4; coding reminder injected in the initial \Spell{} program                                         & Not separately retained                                              & \texttt{\$84.35}  \\
	Opus 4.6 Terminal-Bench comparison                  & 2 VMs, \texttt{e2-standard-8} (\Spell{} and Claude Code)                                                                     & 2 $\times$ 80 tasks                         & \texttt{-\/-n-concurrent} 4; 3600 s agent cap; 600 s verifier cap; \texttt{\$5} \Spell{}/Claude-Code budget cap                  & 3.5--5.1 h                                                           & \texttt{\$108.94} \\
	GPT-5.4 SWE-bench Lite frontier                     & 18 VMs, \texttt{e2-standard-16}, 300 GB boot disk (3 shards $\times$ 2 systems $\times$ 3 efforts)                        & 6 $\times$ 300 tasks                        & \texttt{-\/-parallel} 4; \texttt{-\/-timeout} 3600; \texttt{-\/-prewarm-envs}; \texttt{-\/-paper-compliant}                   & 7.3 h                                                                & \texttt{\$883.78} \\
	LongBench v2 comparison                             & 4 VMs, \texttt{e2-standard-4}, 100 GB boot disk (pilot/rest for each system)                                              & 2 $\times$ 200 items                        & \texttt{-\/-parallel} 4; \Spell{} used 1200 s timeout and \texttt{\$3} budget cap; Codex used the general-runner default timeout & 0.9--1.9 h per system                                                & \texttt{\$53.41}  \\
	AppWorld dev comparison                             & 2 retained full-run VMs, \texttt{e2-standard-16}, 200 GB boot disk                                                        & 2 $\times$ 57 tasks                         & \texttt{-\/-n-concurrent} 4; 3600 s agent cap; 600 s verifier cap; \texttt{\$5} \Spell{} budget cap                              & 0.8--2.5 h per system                                                & \texttt{\$43.97}  \\
\end{longtable}
\normalsize

\subsubsection{Terminal-Bench 1.1}\label{terminal-bench-1.1}

Terminal-Bench 1.1 is a set of containerized command-line tasks with hidden tests over the final filesystem state or program behavior. The GPT-5.4 \Spell{}-Codex comparison used the full 80-task \texttt{terminal-bench-core==0.1.1} old-core task set, one trial per task, Docker execution, disabled live web access, and the Terminal-Bench verifier as the source of truth. A subset of 32 items were selected for Figure~\ref{fig:obs1}. The benchmark harness passed each task instruction as the task prompt. No items were excluded due to errors.

The same runtime settings were used across compared systems: four-way within-VM concurrency, a 3600 s per-task agent wall-clock cap, and a 600 s global verifier/test timeout. \Spell{} and Claude Code received a \texttt{\$5} per-task dollar cap from their respective harness adapters; Codex CLI was run with the same wall-clock and verifier caps but without a dollar budget cap. The comparison between \Spell{} and Codex CLI remained fair because no \Spell{} item actually hit the \texttt{\$5} cap (Appendix~\ref{app:bench:cost}).

For the Observation 1 analysis, a subset of 32 items was selected and run using open-weight models. Fatal \Spell{} errors were counted from \Spell{} agent outcomes rather than from the benchmark's correctness flag. A task was counted as fatal when the final \Spell{} execution ended in an unrecovered reader/evaluator/runtime failure, such as \texttt{recovery-exhausted}. Benchmark-level failures that did not represent a final \Spell{}-language failure, such as a hidden-test timeout, remained in the denominator for accuracy but were not counted as fatal \Spell{} errors.

\subsubsection{SWE-bench Lite}\label{swe-bench-lite}

SWE-bench Lite contains 300 real GitHub issues from 11 Python repositories. Each task gives the agent an issue description and a repository checkout, and the agent must produce a patch that passes the SWE-bench tests. The Figure~\ref{fig:obs2_gpt54_frontiers} comparison used the public \texttt{princeton-nlp/SWE-bench\_Lite} split, official per-instance SWE-bench images, the repository path \texttt{/testbed}, and one trial per item. A subset of 32 items was selected for Figure~\ref{fig:obs1}. The SWE-bench Lite prompt includes the issue description and omits \texttt{hints\_text}. It instructs the agent to make minimal non-test changes, explore the repository, reproduce the error when practical, edit source code, rerun focused validation, consider edge cases, and run selected tests. Live web access and task-specific solution lookup were disabled by policy for all compared agents. Items were scored by applying patches against the official SWE-bench evaluation harness. No items were excluded due to errors.

The same runtime settings were used across compared systems: \texttt{-\/-parallel\ 4}, \texttt{-\/-timeout\ 3600} per item, \texttt{-\/-paper-compliant}, and \texttt{-\/-prewarm-envs}. \Spell{} received the SWE-bench adapter's \texttt{\$10} per-item dollar cap; Codex CLI received no dollar budget cap. The comparison between \Spell{} and Codex CLI remained fair because no item from either method actually hit \texttt{\$10} (Appendix~\ref{app:bench:cost}). The 32-item model-diagnostic runs used a per-item timeout of 1800 s.

\subsubsection{LongBench v2}\label{longbench-v2}

LongBench v2 is a long-context multiple-choice benchmark spanning single-document question answering, multi-document question answering, long-dialogue history understanding, long in-context learning, and structured-data understanding. The retained Figure~\ref{fig:obs5} comparison used 200 LongBench v2 items. Each document was supplied on disk for tool-using agents, and the prompt included the document path, document size, question, answer options, and instruction to return only the option letter.

Runs used four-way parallelism and GPT-5.4 medium effort for both \Spell{} and Codex CLI, but the runtime and budget caps were not identical. The retained \Spell{} rerun used \texttt{-\/-timeout\ 1200} and a \texttt{\$3} per-item \Spell{} budget cap. The retained Codex CLI rerun used the general-benchmark default 300 s per-item subprocess timeout and no dollar budget cap. The comparison between \Spell{} and Codex CLI remained fair because no \Spell{} item hit the \texttt{\$3} cap and no Codex item exceeded either \texttt{\$3} cost or 300 s latency (Appendix~\ref{app:bench:cost}).

\subsubsection{AppWorld Dev}\label{appworld-dev}

AppWorld evaluates computer-use tasks over a simulated application state. The retained Figure~\ref{fig:obs5} comparison used the full 57-item \texttt{appworld-dev} split through the Terminal-Bench adapter, GPT-5.4 medium effort, and four-way within-VM parallelism. The \Spell{} condition used \texttt{openai-tc:gpt-5.4}, \texttt{config/agents/io-tc.agent.edn}, and the plain trailing \texttt{\textquotesingle{}(!extend)}, matching the LongBench, Terminal-Bench, and SWE-bench \Spell{} rows. The Codex condition used Codex CLI with \texttt{gpt-5.4}.

The AppWorld dev \Spell{} and Codex CLI runs used the same Terminal-Bench-adapter caps: \texttt{-\/-n-concurrent\ 4}, \texttt{-\/-agent-timeout-sec\ 3600}, and \texttt{-\/-test-timeout-sec\ 600}. \Spell{} received the Terminal-Bench adapter's default \texttt{\$5} per-task dollar cap; Codex CLI used the same wall-clock and verifier caps but no dollar budget cap. The comparison between \Spell{} and Codex CLI remained fair with respect to the budget cap because no \Spell{} item hit the \texttt{\$5} cap and no Codex item exceeded \texttt{\$5} (Appendix~\ref{app:bench:cost}).

\subsubsection{Pairwise Statistics}\label{pairwise-statistics}

For paired comparisons, the analysis counts decisive items only: an item where method A solved and method B did not, or vice versa. Reported pairwise \texttt{p} values use an exact two-sided binomial test with null probability \texttt{p=0.5} over decisive items.

\subsection{Comparison between models}\label{app:bench:obs1}

\begin{longtable}[]{@{}lrrrr@{}}
	\caption{Tool-call transport accuracy and fatal \Spell{}-error rates on 32-item model-comparison subsets.}\label{tab:obs1-toolcall-model-values} \\
	\toprule\noalign{}
	Model        & TB resolved    & TB fatal errors & SWE resolved   & SWE fatal errors                                                           \\
	\midrule\noalign{}
	\endhead
	\bottomrule\noalign{}
	\endlastfoot
	GPT-5.4      & 16/32 (50.0\%) & 0/32 (0.0\%)    & 23/32 (71.9\%) & 0/32 (0.0\%)                                                               \\
	Opus 4.6     & 15/32 (46.9\%) & 2/32 (6.2\%)    & 15/32 (46.9\%) & 5/32 (15.6\%)                                                              \\
	GLM-5.1      & 10/32 (31.2\%) & 10/32 (31.2\%)  & 14/32 (43.8\%) & 5/32 (15.6\%)                                                              \\
	Kimi-K2.6    & 12/32 (37.5\%) & 9/32 (28.1\%)   & 14/32 (43.8\%) & 5/32 (15.6\%)                                                              \\
	Qwen3.6 Plus & 4/32 (12.5\%)  & 9/32 (28.1\%)   & 0/32 (0.0\%)   & 21/32 (65.6\%)                                                             \\
\end{longtable}

Inspection of representative fatal tool-call traces showed that some remaining open-weight failures were still invalid \Spell{} continuations of the kind warned against in the system prompt. For example, in Kimi-K2.6 on SWE-bench Lite \texttt{astropy\_\_astropy-7746}, the model attempted to use \texttt{io/read-lines} directly in the evaluated program rather than inside a quoted trailing \texttt{!call-now} form, producing the unrecovered error \texttt{io/read-lines: io/ is an effect namespace - use it in the trailing expression via eval}. In Kimi-K2.6 on \texttt{django\_\_django-11422}, the model emitted an unbound \texttt{final-diff} symbol. In Qwen3.6 Plus on \texttt{django\_\_django-10914}, the model emitted \texttt{println}, which is not a \Spell{} binding, producing \texttt{Unbound symbol: println}. Thus, tool-call transport reduced wrapper and chat-template leakage, but it did not eliminate invalid \Spell{} programs.

\subsubsection{Prefill transport mode}

Because the open-weight Fireworks endpoints support assistant prefill, I tested a prefill transport mode in which the model prefix is literally the prefix of the \Spell{} program. This transport is natural for \Spell{} because the model's visible completion is the program to be evaluated, but for the closed-source models it was unavailable because the corresponding APIs impose mandatory fencing that separates user inputs from model responses and tool calls. The prefill transport did not improve the open-weight results and instead made them substantially worse (Table~\ref{tab:obs1-prefill-diagnostic-values}). The likely reason is that tool-call transport imposes a useful output constraint: the model must place the raw \Spell{} suffix in the \texttt{spell\_suffix} argument, which separates executable \Spell{} code from ordinary assistant text, hidden-reasoning markers, wrapper echoes, and chat-template artifacts. For example, a common error in prefill mode was that models would emit thinking tags (``<think>'') inside the \Spell{} program, causing reader failure.

\begin{longtable}[]{@{}lrrrr@{}}
	\caption{Prefill-transport diagnostic results for open-weight models on the same 32-item subsets.}\label{tab:obs1-prefill-diagnostic-values} \\
	\toprule\noalign{}
	Model        & TB resolved  & TB fatal errors & SWE resolved & SWE fatal errors                                                              \\
	\midrule\noalign{}
	\endhead
	\bottomrule\noalign{}
	\endlastfoot
	GLM-5.1      & 1/32 (3.1\%) & 16/32 (50.0\%)  & 0/32 (0.0\%) & 17/32 (53.1\%)                                                                \\
	Kimi-K2.6    & 0/32 (0.0\%) & 17/32 (53.1\%)  & 0/32 (0.0\%) & 12/32 (37.5\%)                                                                \\
	Qwen3.6 Plus & 0/32 (0.0\%) & 10/32 (31.2\%)  & 0/32 (0.0\%) & 14/32 (43.8\%)                                                                \\
\end{longtable}

\subsection{Harness comparison on coding benchmarks}\label{app:bench:harness}

\subsubsection{Terminal-Bench 1.1 Cost-Accuracy Frontier}\label{terminal-bench-1.1-cost-accuracy-frontier}

Two secondary analyses were performed. First, a coding intervention that prescribes a research/plan/implement/verify workflow increased both accuracy and cost in the GPT-5.4 Terminal-Bench runs (Table~\ref{tab:tb-gpt54-source-values}). Second, a \Spell{} agent running Opus 4.6 was compared against Claude Code running the same model, both with medium effort. Both agents had the same accuracy and similar cost (Table~\ref{tab:tb-opus46-source-values}).

\begin{longtable}[]{@{}
	>{\raggedright\arraybackslash}p{(\linewidth - 10\tabcolsep) * \real{0.2200}}
	>{\raggedright\arraybackslash}p{(\linewidth - 10\tabcolsep) * \real{0.1400}}
	>{\raggedright\arraybackslash}p{(\linewidth - 10\tabcolsep) * \real{0.1200}}
	>{\raggedleft\arraybackslash}p{(\linewidth - 10\tabcolsep) * \real{0.1800}}
	>{\raggedleft\arraybackslash}p{(\linewidth - 10\tabcolsep) * \real{0.1600}}
	>{\raggedleft\arraybackslash}p{(\linewidth - 10\tabcolsep) * \real{0.1800}}@{}}
	\caption{Terminal-Bench 1.1 GPT-5.4 source values for Figure~\ref{fig:obs2_gpt54_frontiers}. The default \Spell{} configuration uses \texttt{\textquotesingle{}(!extend)}; \texttt{:coding} additionally injects the coding-task reminder from Appendix~\ref{app:language:reminders-coding-prompt}.}\label{tab:tb-gpt54-source-values} \\
	\toprule\noalign{}
	\begin{minipage}[b]{\linewidth}\raggedright
		System
	\end{minipage} & \begin{minipage}[b]{\linewidth}\raggedright
		                 Config
	                 \end{minipage} & \begin{minipage}[b]{\linewidth}\raggedright
		                                  Effort
	                                  \end{minipage} & \begin{minipage}[b]{\linewidth}\raggedleft
		                                                   Resolved
	                                                   \end{minipage} & \begin{minipage}[b]{\linewidth}\raggedleft
		                                                                    Cost
	                                                                    \end{minipage} & \begin{minipage}[b]{\linewidth}\raggedleft
		                                                                                     Total tokens
	                                                                                     \end{minipage}                                                                                                                                                                                                      \\
	\midrule\noalign{}
	\endfirsthead
	\toprule\noalign{}
	\begin{minipage}[b]{\linewidth}\raggedright
		System
	\end{minipage} & \begin{minipage}[b]{\linewidth}\raggedright
		                 Config
	                 \end{minipage} & \begin{minipage}[b]{\linewidth}\raggedright
		                                  Effort
	                                  \end{minipage} & \begin{minipage}[b]{\linewidth}\raggedleft
		                                                   Resolved
	                                                   \end{minipage} & \begin{minipage}[b]{\linewidth}\raggedleft
		                                                                    Cost
	                                                                    \end{minipage} & \begin{minipage}[b]{\linewidth}\raggedleft
		                                                                                     Total tokens
	                                                                                     \end{minipage}                                                                                                                                                                                                      \\
	\midrule\noalign{}
	\endhead
	\bottomrule\noalign{}
	\endlastfoot
	\Spell{} GPT-5.4                               & default                                     & low                                         & 35/80 (43.8\%)                             & \texttt{\$12.26}                           & 10.11M                                                                                          \\
	\Spell{} GPT-5.4                               & default                                     & medium                                      & 36/80 (45.0\%)                             & \texttt{\$25.72}                           & 6.96M                                                                                           \\
	\Spell{} GPT-5.4                               & default                                     & high                                        & 40/80 (50.0\%)                             & \texttt{\$40.72}                           & 6.52M                                                                                           \\
	\Spell{} GPT-5.4                               & \texttt{:coding}                            & low                                         & 35/80 (43.8\%)                             & \texttt{\$19.11}                           & 22.51M                                                                                          \\
	\Spell{} GPT-5.4                               & \texttt{:coding}                            & medium                                      & 40/80 (50.0\%)                             & \texttt{\$27.36}                           & 10.21M                                                                                          \\
	\Spell{} GPT-5.4                               & \texttt{:coding}                            & high                                        & 43/80 (53.8\%)                             & \texttt{\$37.88}                           & 6.48M                                                                                           \\
	Codex CLI GPT-5.4                           & default                                     & low                                         & 38/80 (47.5\%)                             & \texttt{\$39.43}                           & 39.93M                                                                                          \\
	Codex CLI GPT-5.4                           & default                                     & medium                                      & 39/80 (48.8\%)                             & \texttt{\$46.96}                           & 51.14M                                                                                          \\
	Codex CLI GPT-5.4                           & default                                     & high                                        & 43/80 (53.8\%)                             & \texttt{\$83.86}                           & 94.65M                                                                                          \\
\end{longtable}

\begin{longtable}[]{@{}
	>{\raggedright\arraybackslash}p{(\linewidth - 10\tabcolsep) * \real{0.2200}}
	>{\raggedright\arraybackslash}p{(\linewidth - 10\tabcolsep) * \real{0.1400}}
	>{\raggedright\arraybackslash}p{(\linewidth - 10\tabcolsep) * \real{0.1200}}
	>{\raggedleft\arraybackslash}p{(\linewidth - 10\tabcolsep) * \real{0.1800}}
	>{\raggedleft\arraybackslash}p{(\linewidth - 10\tabcolsep) * \real{0.1600}}
	>{\raggedleft\arraybackslash}p{(\linewidth - 10\tabcolsep) * \real{0.1800}}@{}}
	\caption{Terminal-Bench 1.1 Claude Opus 4.6 source values. Both rows use medium effort on the full 80-task old-core set. The paired comparison had 6 \Spell{}-only successes, 6 Claude-Code-only successes, 36 shared successes, 32 shared failures, and exact sign-test \(p=1.0\).}\label{tab:tb-opus46-source-values} \\
	\toprule\noalign{}
	\begin{minipage}[b]{\linewidth}\raggedright
		System
	\end{minipage} & \begin{minipage}[b]{\linewidth}\raggedright
		                 Config
	                 \end{minipage} & \begin{minipage}[b]{\linewidth}\raggedright
		                                  Effort
	                                  \end{minipage} & \begin{minipage}[b]{\linewidth}\raggedleft
		                                                   Resolved
	                                                   \end{minipage} & \begin{minipage}[b]{\linewidth}\raggedleft
		                                                                    Cost
	                                                                    \end{minipage} & \begin{minipage}[b]{\linewidth}\raggedleft
		                                                                                     Total tokens
	                                                                                     \end{minipage}                                                                                                                                                                                       \\
	\midrule\noalign{}
	\endhead
	\bottomrule\noalign{}
	\endlastfoot
	\Spell{} Opus 4.6                              & default                                     & medium                                      & 42/80 (52.5\%)                             & \texttt{\$54.98}                           & 13.38M                                                                           \\
	Claude Code Opus 4.6                        & default                                     & medium                                      & 42/80 (52.5\%)                             & \texttt{\$53.96}                           & 33.26M                                                                           \\
\end{longtable}

\subsubsection{SWE-bench Lite Cost-Accuracy Frontier}\label{swe-bench-lite-cost-accuracy-frontier}

\begin{longtable}[]{@{}
	>{\raggedright\arraybackslash}p{(\linewidth - 10\tabcolsep) * \real{0.2200}}
	>{\raggedright\arraybackslash}p{(\linewidth - 10\tabcolsep) * \real{0.1400}}
	>{\raggedright\arraybackslash}p{(\linewidth - 10\tabcolsep) * \real{0.1200}}
	>{\raggedleft\arraybackslash}p{(\linewidth - 10\tabcolsep) * \real{0.1800}}
	>{\raggedleft\arraybackslash}p{(\linewidth - 10\tabcolsep) * \real{0.1600}}
	>{\raggedleft\arraybackslash}p{(\linewidth - 10\tabcolsep) * \real{0.1800}}@{}}
	\caption{SWE-bench Lite GPT-5.4 source values for Figure~\ref{fig:obs2_gpt54_frontiers}. All rows use the default initialization; no \texttt{:coding} prompt-intervention rows are included for SWE-bench Lite.}                            \\
	\toprule\noalign{}
	\begin{minipage}[b]{\linewidth}\raggedright
		System
	\end{minipage} & \begin{minipage}[b]{\linewidth}\raggedright
		                 Config
	                 \end{minipage} & \begin{minipage}[b]{\linewidth}\raggedright
		                                  Effort
	                                  \end{minipage} & \begin{minipage}[b]{\linewidth}\raggedleft
		                                                   Resolved
	                                                   \end{minipage} & \begin{minipage}[b]{\linewidth}\raggedleft
		                                                                    Cost
	                                                                    \end{minipage} & \begin{minipage}[b]{\linewidth}\raggedleft
		                                                                                     Total tokens
	                                                                                     \end{minipage}                                                                                                              \\
	\midrule\noalign{}
	\endhead
	\bottomrule\noalign{}
	\endlastfoot
	\Spell{} GPT-5.4                               & default                                     & low                                         & 153/300 (51.0\%)                           & \texttt{\$68.94}                           & 74.30M  \\
	\Spell{} GPT-5.4                               & default                                     & medium                                      & 171/300 (57.0\%)                           & \texttt{\$102.12}                          & 51.37M  \\
	\Spell{} GPT-5.4                               & default                                     & high                                        & 171/300 (57.0\%)                           & \texttt{\$177.38}                          & 49.26M  \\
	Codex CLI GPT-5.4                           & default                                     & low                                         & 170/300 (56.7\%)                           & \texttt{\$136.79}                          & 134.09M \\
	Codex CLI GPT-5.4                           & default                                     & medium                                      & 172/300 (57.3\%)                           & \texttt{\$161.82}                          & 155.96M \\
	Codex CLI GPT-5.4                           & default                                     & high                                        & 185/300 (61.7\%)                           & \texttt{\$236.73}                          & 220.67M \\
\end{longtable}

\subsubsection{Error and Scoring Anomalies}\label{error-and-scoring-anomalies}

The following table lists benchmark items that failed with an error code across Terminal-Bench and SWE-bench Lite. No results were excluded on the basis of these errors.

\begin{longtable}[]{@{}
	>{\raggedright\arraybackslash}p{(\linewidth - 6\tabcolsep) * \real{0.1800}}
	>{\raggedright\arraybackslash}p{(\linewidth - 6\tabcolsep) * \real{0.1800}}
	>{\raggedleft\arraybackslash}p{(\linewidth - 6\tabcolsep) * \real{0.2000}}
	>{\raggedright\arraybackslash}p{(\linewidth - 6\tabcolsep) * \real{0.4400}}@{}}
	\caption{Error and scoring anomalies retained in the Figure~\ref{fig:obs2_gpt54_frontiers} denominators.}                                                                                                                                                                                                                   \\
	\toprule\noalign{}
	\begin{minipage}[b]{\linewidth}\raggedright
		Benchmark
	\end{minipage} & \begin{minipage}[b]{\linewidth}\raggedright
		                 Condition
	                 \end{minipage} & \begin{minipage}[b]{\linewidth}\raggedleft
		                                  Affected rows
	                                  \end{minipage}         & \begin{minipage}[b]{\linewidth}\raggedright
		                                                           Notes
	                                                           \end{minipage}                                                                                                                                                                                                                       \\
	\midrule\noalign{}
	\endhead
	\bottomrule\noalign{}
	\endlastfoot
	Terminal-Bench 1.1                          & \Spell{} default, all efforts                  & 5 test-timeout rows per effort                     & Same five items at every effort: \path{build-initramfs-qemu}, \path{swe-bench-astropy-1}, \path{swe-bench-astropy-2}, \path{swe-bench-fsspec}, \path{swe-bench-langcodes}. \\
	Terminal-Bench 1.1                          & Codex CLI, all efforts                      & 5 test-timeout rows per effort                     & Same five timeout items as \Spell{}, so \Spell{}-Codex differences are unaffected.                                                                                               \\
	Terminal-Bench 1.1                          & Codex CLI, all efforts                      & 1 parse error and 1 unknown-agent error per effort & \path{cron-broken-network} produced a parse error; \path{intrusion-detection} produced an unknown-agent error.                                                             \\
	Terminal-Bench 1.1                          & \Spell{} default                               & 1 low, 1 medium, and 3 high agent-timeout rows     & \Spell{}-specific agent timeouts occurred in addition to the shared test-timeout items.                                                                                       \\
	Terminal-Bench 1.1                          & \Spell{} default                               & 1 low agent-installation failure                   & The low-effort row \texttt{qemu-startup} had an agent-installation failure.                                                                                                \\
	SWE-bench Lite                              & \Spell{} low                                   & 3 depth-exceeded and 2 recovery-exhausted rows     & These are \Spell{} runtime errors.                                                                                                                                            \\
	SWE-bench Lite                              & \Spell{} medium                                & 0 generation errors                                & All rows generated patches.                                                                                                                                                \\
	SWE-bench Lite                              & \Spell{} high                                  & 2 install errors and 1 container error             & These were marked as failures.                                                                                                                                             \\
	SWE-bench Lite                              & Codex CLI low                               & 1 install error                                    & This was marked as a failure.                                                                                                                                              \\
	SWE-bench Lite                              & Codex CLI medium/high                       & 0 generation errors                                & All rows generated patches.                                                                                                                                                \\
\end{longtable}

Follow-up diagnosed the five Terminal-Bench rows that errored with \texttt{test\_timeout} in all conditions tested. Rerunning the five items (Codex low patches) with \texttt{-\/-test-timeout-sec\ 3600} and \texttt{-\/-agent-timeout-sec\ 3600} still ended as \texttt{test\_timeout}. However, a custom empty-patch/no-op probe of \texttt{swe-bench-langcodes}, which installed the Codex runtime but did not produce a patch, completed the test phase under the original 600 s budget and failed normally; for this reason, these items were retained in all analyses.

\subsection{Token and cost accounting}\label{app:bench:cost}

Token usage was tracked per API call and summed across calls, stratified by token type (cached/uncached input, output, and output cache writes if priced differently). Token totals in tables below count the sum of all token types.

\begin{longtable}[]{@{}
	>{\raggedright\arraybackslash}p{(\linewidth - 8\tabcolsep) * \real{0.1579}}
	>{\raggedleft\arraybackslash}p{(\linewidth - 8\tabcolsep) * \real{0.2105}}
	>{\raggedleft\arraybackslash}p{(\linewidth - 8\tabcolsep) * \real{0.2105}}
	>{\raggedleft\arraybackslash}p{(\linewidth - 8\tabcolsep) * \real{0.2105}}
	>{\raggedleft\arraybackslash}p{(\linewidth - 8\tabcolsep) * \real{0.2105}}@{}}
	\caption{Per-million-token prices used for benchmark cost reconstruction.}                                                                                                                            \\
	\toprule\noalign{}
	\begin{minipage}[b]{\linewidth}\raggedright
		Model family
	\end{minipage} & \begin{minipage}[b]{\linewidth}\raggedleft
		                 Uncached input / 1M
	                 \end{minipage} & \begin{minipage}[b]{\linewidth}\raggedleft
		                                  Cached input / 1M
	                                  \end{minipage} & \begin{minipage}[b]{\linewidth}\raggedleft
		                                                   Cache write / 1M
	                                                   \end{minipage} & \begin{minipage}[b]{\linewidth}\raggedleft
		                                                                    Output / 1M
	                                                                    \end{minipage}                                                                                         \\
	\midrule\noalign{}
	\endhead
	\bottomrule\noalign{}
	\endlastfoot
	GPT-5.4                                     & \texttt{\$2.50}                            & \texttt{\$0.25}                            & \texttt{\$3.125}                           & \texttt{\$15.00} \\
	Opus 4.6                                    & \texttt{\$5.00}                            & \texttt{\$0.50}                            & \texttt{\$6.25}                            & \texttt{\$25.00} \\
\end{longtable}

These numbers do not account for the GPT-5.4 long-context pricing tier, which applies higher cost to API calls above a token threshold, and this was applied to both \Spell{} and Codex CLI. Because Codex CLI has much larger average context length than \Spell{} (see below), omitting the higher long-context tier is more likely to favor Codex.

\subsubsection{Token utilization}\label{token-utilization}

\begingroup
\footnotesize
\begin{longtable}[]{@{}
	>{\raggedright\arraybackslash}p{(\linewidth - 14\tabcolsep) * \real{0.1700}}
	>{\raggedright\arraybackslash}p{(\linewidth - 14\tabcolsep) * \real{0.1200}}
	>{\raggedleft\arraybackslash}p{(\linewidth - 14\tabcolsep) * \real{0.1300}}
	>{\raggedleft\arraybackslash}p{(\linewidth - 14\tabcolsep) * \real{0.1100}}
	>{\raggedleft\arraybackslash}p{(\linewidth - 14\tabcolsep) * \real{0.1400}}
	>{\raggedleft\arraybackslash}p{(\linewidth - 14\tabcolsep) * \real{0.1100}}
	>{\raggedleft\arraybackslash}p{(\linewidth - 14\tabcolsep) * \real{0.1200}}
	>{\raggedleft\arraybackslash}p{(\linewidth - 14\tabcolsep) * \real{0.1000}}@{}}
	\caption{Medium-effort GPT-5.4 token and cost breakdown for the two Figure~\ref{fig:obs2_gpt54_frontiers} coding benchmarks. Token columns report per-item averages rounded to the nearest 1k tokens.}                                                                                                                            \\
	\toprule\noalign{}
	\begin{minipage}[b]{\linewidth}\raggedright
		Benchmark
	\end{minipage} & \begin{minipage}[b]{\linewidth}\raggedright
		                 System
	                 \end{minipage} & \begin{minipage}[b]{\linewidth}\raggedleft
		                                  Resolved / items
	                                  \end{minipage} & \begin{minipage}[b]{\linewidth}\raggedleft
		                                                   Cost
	                                                   \end{minipage} & \begin{minipage}[b]{\linewidth}\raggedleft
		                                                                    Uncached\\k/item
	                                                                    \end{minipage} & \begin{minipage}[b]{\linewidth}\raggedleft
		                                                                                     Cached\\k/item
	                                                                                     \end{minipage} & \begin{minipage}[b]{\linewidth}\raggedleft
		                                                                                                      Output\\k/item
	                                                                                                      \end{minipage} & \begin{minipage}[b]{\linewidth}\raggedleft
		                                                                                                                       Total\\k/item
	                                                                                                                       \end{minipage}                                                                                                                                                                  \\
	\midrule\noalign{}
	\endhead
	\bottomrule\noalign{}
	\endlastfoot
	Terminal-Bench 1.1                          & \Spell{}                                       & 36/80                                      & \$25.72                                    & 20k                                        & 49k                                        & 17k                                        & 87k  \\
	Terminal-Bench 1.1                          & Codex CLI                                   & 39/80                                      & \$46.96                                    & 48k                                        & 586k                                       & 5k                                         & 639k \\
	SWE-bench Lite                              & \Spell{}                                       & 171/300                                    & \$102.12                                   & 39k                                        & 118k                                       & 14k                                        & 171k \\
	SWE-bench Lite                              & Codex CLI                                   & 172/300                                    & \$161.82                                   & 45k                                        & 469k                                       & 5k                                         & 520k \\
\end{longtable}
\endgroup

\begin{figure}
	\centering
	\pandocbounded{\includegraphics[width=\linewidth,keepaspectratio,alt={Medium-effort token breakdown}]{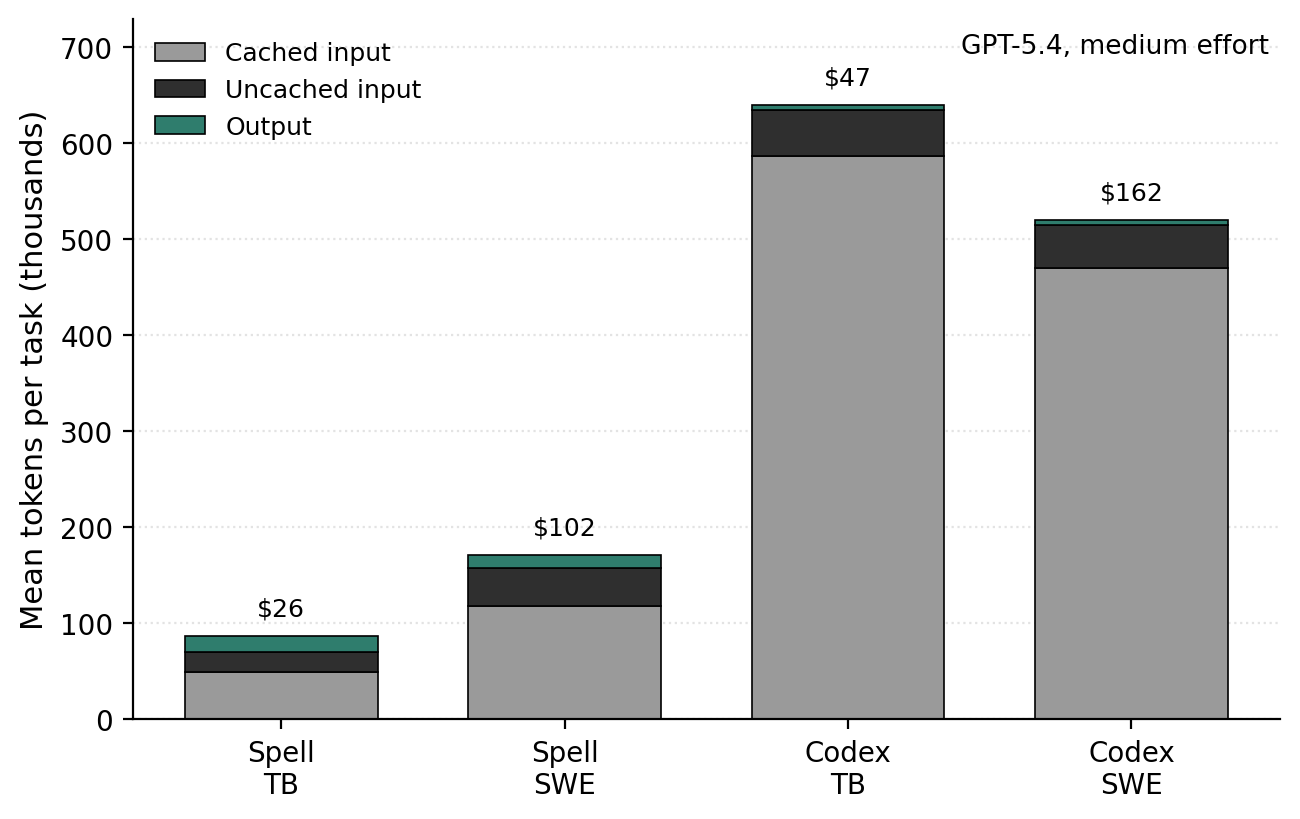}}
	\caption{Mean cached input, uncached input, and output tokens per task in medium-effort GPT-5.4 coding runs.}
\end{figure}

\subsubsection{Budget-Cap Sensitivity}\label{budget-cap-sensitivity}

Per-item budget caps were applied to \Spell{} but not Codex CLI, motivating a sensitivity analysis. In Terminal-Bench 1.1, where a \texttt{\$5} per-task cap was imposed for \Spell{} but not Codex, no \Spell{} item reached the cap at any effort. In SWE-bench Lite, no \Spell{} item reached the \texttt{\$10} per-item cap, and no Codex CLI item exceeded the same \texttt{\$10} threshold. In LongBench v2, no \Spell{} item reached the \texttt{\$3} per-item cap, and no Codex CLI item exceeded the same \texttt{\$3} threshold. In AppWorld dev, no \Spell{} item reached the \texttt{\$5} per-item cap, and no Codex CLI item exceeded the same \texttt{\$5} threshold.

\begin{longtable}[]{@{}llrrrr@{}}
	\caption{Observed cap hits in retained GPT-5.4 \Spell{}-Codex runs.}                      \\
	\toprule\noalign{}
	Benchmark      & Measure                    & Low   & Med.  & High  & Max              \\
	\midrule\noalign{}
	\endfirsthead
	\toprule\noalign{}
	Benchmark      & Measure                    & Low   & Med.  & High  & Max              \\
	\midrule\noalign{}
	\endhead
	\bottomrule\noalign{}
	\endlastfoot
	Terminal-Bench & \Spell{} at \texttt{\$5} cap  & 0/80  & 0/80  & 0/80  & \texttt{\$2.66}  \\
	Terminal-Bench & Codex \(>\)\texttt{\$5}    & 1/80  & 1/80  & 2/80  & \texttt{\$12.66} \\
	SWE-bench      & \Spell{} at \texttt{\$10} cap & 0/300 & 0/300 & 0/300 & \texttt{\$2.57}  \\
	SWE-bench      & Codex \(>\)\texttt{\$10}   & 0/300 & 0/300 & 0/300 & \texttt{\$4.16}  \\
	LongBench      & \Spell{} at \texttt{\$3} cap  & --    & 0/200 & --    & \texttt{\$0.70}  \\
	LongBench      & Codex \(>\)\texttt{\$3}    & --    & 0/200 & --    & \texttt{\$0.71}  \\
	AppWorld       & \Spell{} at \texttt{\$5} cap  & --    & 0/57  & --    & \texttt{\$2.40}  \\
	AppWorld       & Codex \(>\)\texttt{\$5}    & --    & 0/57  & --    & \texttt{\$0.55}  \\
\end{longtable}

Repricing Terminal-Bench 1.1 GPT-5.4 Codex CLI rows whose observed cost exceeded \texttt{\$5} as exactly \texttt{\$5} has the following effect:

\begin{longtable}[]{@{}lrrrrr@{}}
	\caption{Terminal-Bench 1.1 Codex CLI cost sensitivity under a \texttt{\$5} per-task cap.}           \\
	\toprule\noalign{}
	Effort & Rows \(>\)\texttt{\$5} & Observed         & Repriced         & Change           & Reduction \\
	\midrule\noalign{}
	\endfirsthead
	\toprule\noalign{}
	Effort & Rows \(>\)\texttt{\$5} & Observed         & Repriced         & Change           & Reduction \\
	\midrule\noalign{}
	\endhead
	\bottomrule\noalign{}
	\endlastfoot
	Low    & 1                      & \texttt{\$39.43} & \texttt{\$37.83} & \texttt{\$1.60}  & 4.1\%     \\
	Medium & 1                      & \texttt{\$46.96} & \texttt{\$40.67} & \texttt{\$6.29}  & 13.4\%    \\
	High   & 2                      & \texttt{\$83.86} & \texttt{\$69.22} & \texttt{\$14.64} & 17.5\%    \\
\end{longtable}

This adjustment was not made in Figure~\ref{fig:obs2_gpt54_frontiers} because no \Spell{} item actually hit the budget cap. Making this adjustment would not change the ordering by cost of any pair of comparators.

\subsection{Context management and pruning}\label{app:bench:context}

Feature-usage and pruning analyses are based on program traces emitted during each \Spell{} run. To estimate how many tokens were removed from context, the analysis replays the structural effect of pruning forms on the parsed program. It traverses the AST and maintains a stack of ordered sibling forms under the current parent expression. A \texttt{prune} or \texttt{rethink} form with argument \(k\) is interpreted as removing the \(k\) immediately preceding sibling forms; these are popped, and their source is recorded for tokenization. Removed source text and full turn context are tokenized with the same tokenizer used for GPT-5.4 accounting. For turn \(t\), \(D_t\) is the number of tokens removed by pruning events attributed to that turn, \(A_t = \sum_{i<t} D_i\) is the cumulative amount already absent from the prompt at the start of the turn, and \(C_t\) is the full token count for the turn, combining the prefix and response. The summary table reports means over successfully parsed traced turns; malformed traces are excluded from these aggregates.

\pagebreak
\begingroup
\footnotesize
\begin{longtable}[]{@{}
	>{\raggedright\arraybackslash}p{(\linewidth - 10\tabcolsep) * \real{0.3000}}
	>{\raggedleft\arraybackslash}p{(\linewidth - 10\tabcolsep) * \real{0.1200}}
	>{\raggedleft\arraybackslash}p{(\linewidth - 10\tabcolsep) * \real{0.1500}}
	>{\raggedleft\arraybackslash}p{(\linewidth - 10\tabcolsep) * \real{0.1500}}
	>{\raggedleft\arraybackslash}p{(\linewidth - 10\tabcolsep) * \real{0.1500}}
	>{\raggedleft\arraybackslash}p{(\linewidth - 10\tabcolsep) * \real{0.1300}}@{}}
	\caption{Per-turn context-pruning token accounting in GPT-5.4 \Spell{} traces.}                                                                                                                                                             \\
	\toprule\noalign{}
	\begin{minipage}[b]{\linewidth}\raggedright
		Condition
	\end{minipage} & \begin{minipage}[b]{\linewidth}\raggedleft
		                 Mean turns
	                 \end{minipage} & \begin{minipage}[b]{\linewidth}\raggedleft
		                                  Mean pruned\\ / turn
	                                  \end{minipage} & \begin{minipage}[b]{\linewidth}\raggedleft
		                                                   Mean completion
	                                                   \end{minipage} & \begin{minipage}[b]{\linewidth}\raggedleft
		                                                                    Mean pruned\\ / completion
	                                                                    \end{minipage} & \begin{minipage}[b]{\linewidth}\raggedleft
		                                                                                     Mean cumulative\\ pruned
	                                                                                     \end{minipage}                                                                                                           \\
	\midrule\noalign{}
	\endfirsthead
	\toprule\noalign{}
	\begin{minipage}[b]{\linewidth}\raggedright
		Condition
	\end{minipage} & \begin{minipage}[b]{\linewidth}\raggedleft
		                 Mean turns
	                 \end{minipage} & \begin{minipage}[b]{\linewidth}\raggedleft
		                                  Mean pruned\\ / turn
	                                  \end{minipage} & \begin{minipage}[b]{\linewidth}\raggedleft
		                                                   Mean completion
	                                                   \end{minipage} & \begin{minipage}[b]{\linewidth}\raggedleft
		                                                                    Mean pruned\\ / completion
	                                                                    \end{minipage} & \begin{minipage}[b]{\linewidth}\raggedleft
		                                                                                     Mean cumulative\\ pruned
	                                                                                     \end{minipage}                                                                                                           \\
	\midrule\noalign{}
	\endhead
	\bottomrule\noalign{}
	\endlastfoot
	SWE-bench Lite low                          & 9.7                                        & 507                                        & 17,912                                     & 2.8\%                                      & 2,891  \\
	SWE-bench Lite medium                       & 9.3                                        & 1,198                                      & 11,886                                     & 10.1\%                                     & 12,124 \\
	SWE-bench Lite high                         & 9.0                                        & 1,730                                      & 9,894                                      & 17.5\%                                     & 13,282 \\
	Terminal-Bench 1.1 low                      & 7.9                                        & 369                                        & 11,113                                     & 3.3\%                                      & 3,350  \\
	Terminal-Bench 1.1 medium                   & 7.1                                        & 883                                        & 5,428                                      & 16.3\%                                     & 7,664  \\
	Terminal-Bench 1.1 high                     & 5.9                                        & 748                                        & 4,358                                      & 17.2\%                                     & 3,342  \\
\end{longtable}
\endgroup

\subsection{Feature utilization}\label{app:bench:features}

\Spell{} feature utilization was measured from the same model-response suffixes used in the context-management analysis. Prefixes were excluded; only newly emitted \Spell{} forms were counted. Table~\ref{tab:feature-utilization} summarizes the retained GPT-5.4 \Spell{} traces used for the Terminal-Bench 1.1 and SWE-bench Lite analyses, pooling low-, medium-, and high-effort runs within each benchmark. Context-management and ordinary tool-call features were common. Self-call primitives, auxiliary model calls, multi-agent delegation, shared-state access, web access, and ordinary higher-order/control forms were rare or absent.

\begingroup
\footnotesize
\begin{longtable}[]{@{}lrrrr@{}}
	\caption{Feature utilization in retained GPT-5.4 \Spell{} traces. ``Uses'' counts emitted forms; ``traces'' counts traces with at least one use. Function rows count direct calls. Namespace rows, such as \texttt{agents/*}, count any call to a function within that namespace. The \texttt{web} namespace was unavailable in the configuration.}
	\label{tab:feature-utilization}                                              \\
	\toprule\noalign{}
	Feature            & SWE uses & SWE traces & Terminal uses & Terminal traces \\
	\midrule\noalign{}
	\endfirsthead
	\toprule\noalign{}
	Feature            & SWE uses & SWE traces & Terminal uses & Terminal traces \\
	\midrule\noalign{}
	\endhead
	\bottomrule\noalign{}
	\endlastfoot
	\texttt{!peek}     & 1,980    & 452/898    & 288           & 89/227          \\
	\texttt{!call-now} & 5,015    & 872/898    & 876           & 207/227         \\
	\texttt{!llm-self} & 0        & 0/898      & 0             & 0/227           \\
	\texttt{leaf-llm}  & 0        & 0/898      & 9             & 3/227           \\
	\texttt{rethink}   & 425      & 331/898    & 91            & 60/227          \\
	\texttt{persist}   & 53       & 29/898     & 24            & 12/227          \\
	\texttt{let}       & 0        & 0/898      & 6             & 2/227           \\
	\texttt{map}       & 32       & 10/898     & 19            & 3/227           \\
	\texttt{get}       & 7        & 3/898      & 7             & 3/227           \\
	\texttt{apply}     & 0        & 0/898      & 1             & 1/227           \\
	\texttt{if}        & 6        & 4/898      & 8             & 8/227           \\
	\texttt{agents/*}  & 0        & 0/898      & 1             & 1/227           \\
	\texttt{globals/*} & 0        & 0/898      & 0             & 0/227           \\
	\texttt{io/*}      & 25,995   & 898/898    & 3,624         & 222/227         \\
	\texttt{strings/*} & 308      & 93/898     & 151           & 44/227          \\
	\texttt{web/*}     & 0        & 0/898      & 0             & 0/227           \\
\end{longtable}
\endgroup

In Terminal-Bench, the only \texttt{agents/} hit was low-effort \texttt{path-tracing-reverse}, where the model attempted \texttt{(agents/!ask :explore ...)}. This did not spawn a helper; it asked an unregistered handle and failed with \texttt{agents/!ask: Handle not registered}, after which the agent recovered.

The most concentrated auxiliary-model use was high-effort Terminal-Bench \texttt{play-zork}, which made six \texttt{leaf-llm} calls. The agent requested Zork I walkthroughs, treasure lists, and final-ending text from a plain-text model, then tried to validate the generated walkthrough through \texttt{dfrotz}. One \texttt{leaf-llm} call timed out, triggering recovery; the harness still scored the task unresolved, with both end-of-game and maximum-score checks failed.

To test whether non-use of multi-agent orchestration was merely a prompting artifact, a SWE-bench Lite intervention was run on the 32-item model-comparison subset. The \Spell{} agent used GPT-5.4 at medium reasoning effort, the ordinary \texttt{io-tc} coding setup, and the initial expression:

\begin{Shaded}
	\begin{Highlighting}[]
		\NormalTok{\textquotesingle{}(!describe io reminders :coding reminders :orchestrator)}
	\end{Highlighting}
\end{Shaded}

Feature usage was counted over completed traces, again restricted to model-response forms. The intervention solved 20/32 items, produced 3 error rows, cost \texttt{\$9.48}, and had 29 completed traces. It did not elicit multi-agent or recursive self-call orchestration: the completed traces contained zero instances of \texttt{agents/spawn} or \texttt{agents/!spawn-ask}.

Control flow features like \texttt{map} were used sparsely, mostly in support of programmatic tool calling. For example, \texttt{heterogeneous-dates} used \texttt{let}, \texttt{map}, \texttt{get}, and \texttt{apply} to parse two CSV files and compute an average; several other traces used simple \texttt{if} or \texttt{let} forms around shell-command success and structured status outputs. These examples show that the model sometimes used \Spell{} as a dataflow language around tool results.

Tool batching was much more common than higher-order control flow. I counted \texttt{io/*} forms in each model-response suffix, again excluding inherited prefix text. On the GPT-5.4 \Spell{} trace sets used for the paper, most turns emitted more than one tool call.

\begin{longtable}[]{@{}
	>{\raggedright\arraybackslash}p{(\linewidth - 10\tabcolsep) * \real{0.2000}}
	>{\raggedright\arraybackslash}p{(\linewidth - 10\tabcolsep) * \real{0.1200}}
	>{\raggedleft\arraybackslash}p{(\linewidth - 10\tabcolsep) * \real{0.1600}}
	>{\raggedleft\arraybackslash}p{(\linewidth - 10\tabcolsep) * \real{0.1800}}
	>{\raggedleft\arraybackslash}p{(\linewidth - 10\tabcolsep) * \real{0.2200}}
	>{\raggedleft\arraybackslash}p{(\linewidth - 10\tabcolsep) * \real{0.1200}}@{}}
	\caption{Per-turn \texttt{io/*} batching in GPT-5.4 \Spell{} traces.}                                                                                                                                                                    \\
	\toprule\noalign{}
	\begin{minipage}[b]{\linewidth}\raggedright
		Benchmark
	\end{minipage} & \begin{minipage}[b]{\linewidth}\raggedright
		                 Effort
	                 \end{minipage} & \begin{minipage}[b]{\linewidth}\raggedleft
		                                  Traced turns
	                                  \end{minipage} & \begin{minipage}[b]{\linewidth}\raggedleft
		                                                   Mean \texttt{io/*} calls / turn
	                                                   \end{minipage} & \begin{minipage}[b]{\linewidth}\raggedleft
		                                                                    Turns with \(\geq 2\) \texttt{io/*} calls
	                                                                    \end{minipage} & \begin{minipage}[b]{\linewidth}\raggedleft
		                                                                                     Max \texttt{io/*} calls / turn
	                                                                                     \end{minipage}                                                                                                        \\
	\midrule\noalign{}
	\endhead
	\bottomrule\noalign{}
	\endlastfoot
	SWE-bench Lite                              & low                                         & 3,122                                      & 2.85                                       & 2,406/3,122 (77.1\%)                       & 13 \\
	SWE-bench Lite                              & medium                                      & 2,776                                      & 2.89                                       & 2,166/2,776 (78.0\%)                       & 11 \\
	SWE-bench Lite                              & high                                        & 2,712                                      & 3.34                                       & 2,176/2,712 (80.2\%)                       & 14 \\
	Terminal-Bench 1.1                          & low                                         & 605                                        & 2.12                                       & 343/605 (56.7\%)                           & 13 \\
	Terminal-Bench 1.1                          & medium                                      & 542                                        & 2.46                                       & 372/542 (68.6\%)                           & 13 \\
	Terminal-Bench 1.1                          & high                                        & 439                                        & 2.26                                       & 262/439 (59.7\%)                           & 11 \\
\end{longtable}

\subsection{LongBench and AppWorld benchmarks}\label{app:bench:cross}

The source values for Figure~\ref{fig:obs5} compare \Spell{} and Codex CLI across the four GPT-5.4 medium-effort benchmark settings emphasized in the main text. Accuracy is the fraction of tasks resolved under each benchmark's native scoring rule. Cost and token totals are summed over all evaluated tasks, including tasks that were not resolved. The paired \(p\) values use the exact two-sided sign test described above.

\begin{longtable}[]{@{}
	>{\raggedright\arraybackslash}p{(\linewidth - 10\tabcolsep) * \real{0.22}}
	>{\raggedright\arraybackslash}p{(\linewidth - 10\tabcolsep) * \real{0.13}}
	>{\raggedleft\arraybackslash}p{(\linewidth - 10\tabcolsep) * \real{0.20}}
	>{\raggedleft\arraybackslash}p{(\linewidth - 10\tabcolsep) * \real{0.13}}
	>{\raggedleft\arraybackslash}p{(\linewidth - 10\tabcolsep) * \real{0.10}}
	>{\raggedleft\arraybackslash}p{(\linewidth - 10\tabcolsep) * \real{0.12}}@{}}
	\caption{Figure~\ref{fig:obs5} source table. \texttt{Total tokens} is the sum of input plus visible and reasoning output tokens.} \\
	\toprule\noalign{}
	Benchmark          & System    & Resolved         & Cost              & \(p\) & Total tokens                                      \\
	\midrule\noalign{}
	\endhead
	\bottomrule\noalign{}
	\endlastfoot
	Terminal-Bench 1.1 & \Spell{}     & 36/80 (45.0\%)   & \texttt{\$25.72}  & 0.549 & 6.96M                                             \\
	Terminal-Bench 1.1 & Codex CLI & 39/80 (48.8\%)   & \texttt{\$46.96}  & 0.549 & 51.14M                                            \\
	SWE-bench Lite     & \Spell{}     & 171/300 (57.0\%) & \texttt{\$102.12} & 1.000 & 51.37M                                            \\
	SWE-bench Lite     & Codex CLI & 172/300 (57.3\%) & \texttt{\$161.82} & 1.000 & 155.96M                                           \\
	LongBench v2       & \Spell{}     & 122/200 (61.0\%) & \texttt{\$27.83}  & 0.041 & 10.36M                                            \\
	LongBench v2       & Codex CLI & 135/200 (67.5\%) & \texttt{\$25.58}  & 0.041 & 35.19M                                            \\
	AppWorld dev       & \Spell{}     & 24/57 (42.1\%)   & \texttt{\$32.99}  & 0.002 & 10.73M                                            \\
	AppWorld dev       & Codex CLI & 36/57 (63.2\%)   & \texttt{\$10.98}  & 0.002 & 16.43M                                            \\
\end{longtable}

\paragraph{AppWorld trace analysis.} Investigation into causes of the AppWorld gap revealed an unclear mix of behaviors and issues. Among the 13 Codex-only items, approximately 8--9 involved \Spell{}-side query formulation, data-source selection, action-selection, or final-submission mistakes. One item was a clear \Spell{}-side refusal to perform a simulated Venmo transfer (\texttt{37a8675\_1}). Three items appeared to be task-completion or evaluator mismatches after \Spell{} had performed the requested action, and one was an \texttt{unknown\_agent\_error}. In rows missed by both agents, the most common shared pattern was temporal anchoring to the real run date (May 1, 2026) rather than the 2022--2023 fixture dates used by the AppWorld state, producing false-zero or no-op answers.

\end{document}